\theoremstyle{plain}
\newtheorem{definition}{Definition}[section]
\newtheorem{theorem}{Theorem}[section]
\newtheorem{lemma}[theorem]{Lemma}
\newtheorem{thm:eg}{Example}
\newtheorem{remark}{Remark}
\newcommand{\Ours}{\textsc{Egi}\xspace}
\newcommand{\ours}{\textsc{Egi}\xspace}
\newcommand{\xhdr}[1]{\vspace{1.7mm}\noindent{{\bf #1}}}
\newcommand{\ie}{\emph{i.e.}\xspace} 
\newcommand{\eg}{\emph{e.g.}\xspace} 
\newcommand{\etc}{\emph{etc.}\xspace} 
\newcommand{\wrt}{\emph{w.r.t.}\xspace} 
\newcommand{\iid}{\emph{i.i.d.}\xspace}
\newcommand{\ssym}{§}
\newcommand{\Sum}{\sum\limits} 
\def \B {\mathcal{B}}
\def \D {\mathcal{D}}
\def \N {\mathcal{N}}
\def \calX {\mathcal{X}}
\def \calL {\mathcal{L}}
\def \RR {\mathbb{R}}
\def \sig {\sigma}
\def \gi  {g_i}
\def \gid {g_{i'}}
\def \gams {\gamma_{\sigma}}
\title{Transfer Learning of Graph Neural Networks with Ego-graph Information Maximization}
\begin{document}

\author{Qi Zhu$^1$\thanks{These two authors contribute equally.}, Carl Yang$^2$\footnotemark[1], Yidan Xu$^3$, Haonan Wang$^1$, Chao Zhang$^4$, Jiawei Han$^1$  \\
$^1$University of Illinois Urbana-Champaign, $^2$Emory University,\\
$^3$University of Washington, $^4$Georgia Institute of Technology \\
\texttt{$^1$\{qiz3,haonan3,hanj\}@illinois.edu, $^2$j.carlyang@emory.edu}, \\
\texttt{$^3$yx2516@uw.edu, $^4$chaozhang@gatech.edu}
}

\maketitle

\begin{abstract}
Graph neural networks (GNNs) have achieved superior performance in various applications, but training dedicated GNNs can be costly for large-scale graphs. Some recent work started to study the pre-training of GNNs. However, none of them provide theoretical insights into the design of their frameworks, or clear requirements and guarantees towards their transferability. In this work, we establish a theoretically grounded and practically useful framework for the transfer learning of GNNs. Firstly, we propose a novel view towards the \textit{essential graph information} and advocate the capturing of it as the goal of transferable GNN training, which motivates the design of \Ours (\textit{Ego-Graph Information maximization}) to analytically achieve this goal. Secondly,
when node features are structure-relevant,
we conduct an \textit{analysis of \Ours transferability} regarding the difference between the local graph Laplacians of the source and target graphs. We conduct controlled synthetic experiments to directly justify our theoretical conclusions. Comprehensive experiments on two real-world network datasets show consistent results in the analyzed setting of direct-transfering, while those on large-scale knowledge graphs show promising results in the more practical setting of transfering with fine-tuning.\footnote[1]{Code and processed data are available at \url{https://github.com/GentleZhu/EGI}.}
\end{abstract}
\section{Introduction}
\label{sec:intro}
Graph neural networks (GNNs) have been intensively studied recently
\cite{kipf2016semi, keriven2019universal, oono2020graph, yang2019conditional}, due to their established performance towards various real-world tasks \cite{hamilton2017inductive, ying2018hierarchical, velivckovic2017graph}, as well as close connections to spectral graph theory \cite{defferrard2016convolutional, bruna2013spectral, hammond2011wavelets}.
While most GNN architectures are not very complicated, the training of GNNs can still be costly regarding both memory and computation resources on real-world large-scale graphs \cite{chen2018fastgcn, yang2020multisage}. Moreover, it is intriguing to transfer learned structural information across different graphs and even domains in settings like few-shot learning \cite{vinyals2016matching, ravi2016optimization, kan2021zero}.
Therefore, several very recent studies have been conducted on the transferability
of GNNs 
\cite{hu2019strategies, hu2019pre, hu2020gpt,zhu2020shift,lan2020node,baek2020learning,ruiz2020graphon}.
However, it is unclear in what situations the models will excel or fail especially when the pre-training and fine-tuning tasks are different.
To provide rigorous analysis and guarantee on the transferability of GNNs, we focus on the setting of direct-transfering between the source and target graphs, under an analogous setting of ``domain adaptation''~\cite{ben2007analysis,wu2020unsupervised,zhu2020shift}.

In this work, we establish a theoretically grounded framework for the transfer learning of GNNs, and leverage it to design a practically transferable GNN model.
Figure \ref{fig:toy} gives an overview of our framework.
It is based on a novel view of a graph as samples from the joint
distribution of its k-hop ego-graph structures and node features, which allows us to define graph information and similarity, so as to analyze GNN transferability (\ssym\ref{sec:method}).
This view motivates us to design \Ours, a novel GNN training objective based on ego-graph information maximization, which is effective in capturing the graph information as we define (\ssym\ref{subsec:subgi}).
Then we further specify the requirement on transferable node features and analyze the transferability of \Ours that is dependent on the local graph Laplacians of source and target graphs (\ssym\ref{subsec:bound}).

All of our theoretical conclusions have been directly validated through controlled synthetic experiments (Table \ref{tab:synthetic}), where we use structural-equivalent role identification in an direct-transfering setting to analyze the impacts of different model designs, node features and source-target structure similarities on GNN transferability. 
In \ssym\ref{sec:experiment}, we conduct real-world experiments on multiple publicly available network datasets. On the Airport and Gene graphs (\ssym\ref{subsec:role}), we closely follow the settings of our synthetic experiments and observe consistent but more detailed results supporting the design of \Ours and the utility of our theoretical analysis. On the YAGO graphs (\ssym\ref{subsec:relation}), we further evaluate \Ours on the more generalized and practical setting of transfer learning with task-specific fine-tuning. We find our theoretical insights still indicative in such scenarios, where \Ours consistently outperforms state-of-the-art GNN representation and transfer learning frameworks with significant margins. 

\begin{figure}
  \centering
  \includegraphics[width=0.8\textwidth]{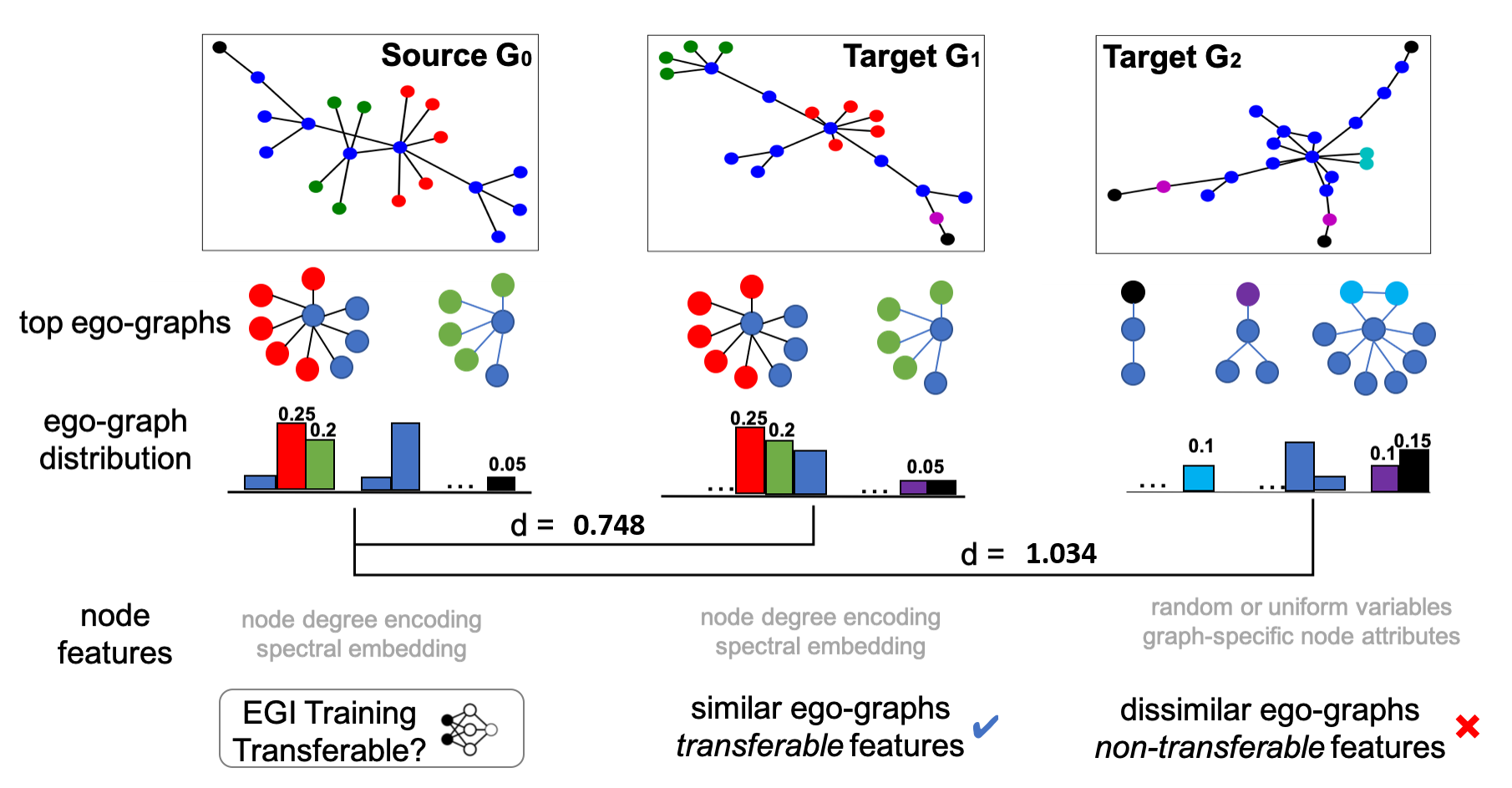}
  \caption{Overview of our GNN transfer learning framework: (1) we represent the toy graph as a combination of its 1-hop ego-graph and node feature distributions; (2) we design a transferable GNN regarding the capturing of such essential graph information; (3) we establish a rigorous guarantee of GNN transferability based on the node feature requirement and graph structure difference.}
  \label{fig:toy}
\end{figure}
\section{Related Work}
\label{related}
Representation learning on graphs has been studied for decades, with earlier spectral-based methods \cite{belkin2002laplacian, roweis2000nonlinear, tenenbaum2000global} theoretically grounded but hardly scaling up to graphs with over a thousand of nodes. 
With the emergence of neural networks, unsupervised network embedding methods based on the Skip-gram objective \cite{mikolov2013distributed} have replenished the field \cite{tang2015line, grover2016node2vec, perozzi2014deepwalk, ribeiro2017struc2vec, yang2020co, yang2018meta, yang2018did}.
Equipped with efficient structural sampling (random walk, neighborhood, \etc) and negative sampling schemes, these methods are easily parallelizable and scalable to graphs with thousands to millions of nodes.
However, these models are essentially transductive as they compute fully parameterized embeddings only for nodes seen during training, which are impossible to be transfered to unseen graphs.

More recently, researchers introduce the family of graph neural networks (GNNs) that are capable of inductive learning and generalizing to unseen nodes given meaningful node features \cite{kipf2016semi, defferrard2016convolutional, hamilton2017inductive, yang2020relation}.
Yet, most existing GNNs require task-specific labels for training in a semi-supervised fashion to achieve satisfactory performance \cite{kipf2016semi, hamilton2017inductive, velivckovic2017graph, yang2020heterogeneous}, and their usage is limited to single graphs where the downstream task is fixed. To this end, several unsupervised GNNs are presented, such as the auto-encoder-based ones like VGAE~\cite{kipf2016variational} and GNFs~\cite{liu2019graph}, as well as the deep-infomax-based ones like DGI~\cite{velivckovic2018deep} and InfoGraph~\cite{sun2019infograph}. Their potential in the transfer learning of GNN remains unclear when the node features and link structures vary across different graphs.

Although the architectures of popular GNNs such as GCN \cite{kipf2016semi} may not be very complicated compared with heavy vision and language models, training a dedicated GNN for each graph can still be cumbersome \cite{chen2018fastgcn, yang2020multisage}. Moreover, as pre-training neural networks are proven to be successful in other domains \cite{devlin2018bert, he2016deep}, the idea is intriguing to transfer well-trained GNNs from relevant source graphs to improve the modeling of target graphs or enable few-shot learning \cite{wu2020unsupervised, lan2020node, baek2020learning} when labeled data are scarce. 
In light of this, pioneering works have studied both generative~\cite{hu2020gpt} and discriminative~\cite{hu2019strategies, hu2019pre} GNN pre-training schemes. 
Though Graph Contrastive Coding~\cite{qiu2020gcc} shares the most similar view towards graph structures as us, it utilizes contrastive learning across all graphs instead of focusing on the transfer learning between any specific pairs. On the other hand, unsupervised domain adaptive GCNs~\cite{wu2020unsupervised} study the domain adaption problem only when the source and target tasks are homogeneous. 

Most previous pre-training and self-supervised GNNs lack a rigorous analysis towards their transferability and thus have unpredictable effectiveness.
The only existing theoretical work on GNN transferability studies the performance of GNNs across different permutations of a single original graph \cite{levie2019transferability, levie2019transferability2} and the tradeoff between discriminability and transferability of GNNs \cite{ruiz2020graphon}. We, instead, are the first to rigorously study the more practical setting of transferring GNNs across pairs of different source and target graphs.


\section{Transferable Graph Neural Networks}
\label{sec:method}
In this paper, we design a more transferable training objective for GNN (\Ours) based on our novel view of essential graph information (\ssym\ref{subsec:subgi}).
We then analyze its transferability as the gap between its abilities to model the source and target graphs, based on their local graph Laplacians (\ssym\ref{subsec:bound}).

Based on the connection between GNN and spectral graph theory \cite{kipf2016semi}, we describe the output of a GNN as a combination of its input node features $X$, fixed graph Laplacian $L$ and learnable graph filters $\Psi$. 
The goal of training a GNN is then to improve its utility by learning the graph filters that are compatible with the other two components towards specific tasks.

In the graph transfer learning setting where downstream tasks are often unknown during pre-training, we argue that the general utility of a GNN should be optimized and quantified \wrt~its ability of capturing the essential graph information in terms of the joint distribution of its topology structures and node features, which motivates us to design a novel ego-graph information maximization model (\Ours) (\ssym\ref{subsec:subgi}). 
The general transferability of a GNN is then quantified by the gap between its abilities to model the source and target graphs. 
Under reasonable requirements such as using \textit{structure-respecting} node features as the GNN input, we analyze this gap for \Ours based on the structural difference between two graphs \wrt~their local graph Laplacians (\ssym\ref{subsec:bound}).

\subsection{Transferable GNN via Ego-graph Information Maximization}
\label{subsec:subgi}

In this work, we focus on the \textit{direct-transfering setting} where a GNN is pre-trained on a source graph $G_a$ in an unsupervised fashion and applied on a target graph $G_b$ without fine-tuning.\footnote{In the experiments, we show our model to be generalizable to the more practical settings with task-specific pre-training and fine-tuning, while the study of rigorous bound in such scenarios is left as future work.}
Consider a graph $G=\{V, E\}$, where the set of nodes $V$ are associated with certain features $X$ and the set of edges $E$ form graph structures.
Intuitively, the transfer learning will be successful only if both the features and structures of $G_a$ and $G_b$ are similar in some ways, so that the graph filters of a GNN learned on $G_a$ are compatible with the features and structures of $G_b$.

Graph kernels \cite{vishwanathan2010graph, borgwardt2020graph, kriege2020survey, nikolentzos2019graph} are well-known for their capability of measuring similarity between pair of graphs.
Motivated by k-hop subgraph kernels \cite{bai2016subgraph}, we introduce a novel view of a graph as \textit{samples from the joint distribution of its k-hop ego-graph structures and node features}.
Since GNN essentially encodes such k-hop ego graph samples, this view allows us to give concrete definitions towards \textit{structural information} of graphs in the transfer learning setting, which facilitates the measuring of similarity (difference) among graphs.
Yet, none of the existing GNN training objectives are capable of recovering such distributional signals of ego graphs.
To this end, we design \textit{Ego-Graph Information maximization} (\Ours), which alternatively reconstructs the k-hop ego-graph of each center node via mutual information maximization~\cite{hjelm2018learning}.

\begin{definition}[K-hop ego-graph]
We call a graph $g_i=\{V(g_i), E(g_i)\}$ a $k$-hop ego-graph centered at node $v_i$ if it has a $k$-layer centroid expansion \cite{bai2016subgraph} such that the greatest distance between $v_i$ and any other nodes in the ego-graph is k, \ie
$\forall v_j\in V(g_i), |d(v_i, v_j)| \leq k$, where $d(v_i,v_j)$ is the graph distance between $v_i$ and $v_j$. 
\label{def:k-hop}
\end{definition}
In this paper, we use directed k-hop ego-graph and its direction is decided by whether it is composed of incoming or outgoing edges to the center node, \ie, $g_i$ and $\tilde{g_i}$. The results apply trivially to undirected graphs with $g_i=\tilde{g_i}$.

\begin{definition}[Structural information]
Let $\mathcal{G}$ be a topological space of sub-graphs,  
we view a graph $G$ as samples of k-hop ego-graphs $\{g_i\}_{i=1}^n$ drawn \iid from $\mathcal{G}$ with probability $\mu$, \ie, $g_i\overset{\iid}{\sim}\mu\;\forall i=1,\cdots, n$. 
The structural information of $G$ is then defined to be 
the set of k-hop ego-graphs of $\{g_i\}_{i=1}^n$ and their empirical distribution.
\label{def:structural}
\end{definition}


As shown in Figure \ref{fig:toy}, three graphs $G_0$, $G_1$ and $G_2$ are characterized by a set of 1-hop ego-graphs and their empirical distributions, which allows us to quantify the structural similarity among graphs as shown in \ssym\ref{subsec:bound} (\ie, $G_0$ is more similar to $G_1$ than $G_2$ under such characterization).
In practice, the nodes in a graph $G$ are characterized not only by their k-hop ego-graph structures but also their associated node features. Therefore, $G$ should be regarded as samples $\{(g_i, x_i)\}$ drawn from the joint distribution $\mathbb{P}$ on the product space of $\mathcal{G}$ and a node feature space $\mathcal{X}$.




\begin{figure}[ht]
\centering
\includegraphics[width=0.9\textwidth]{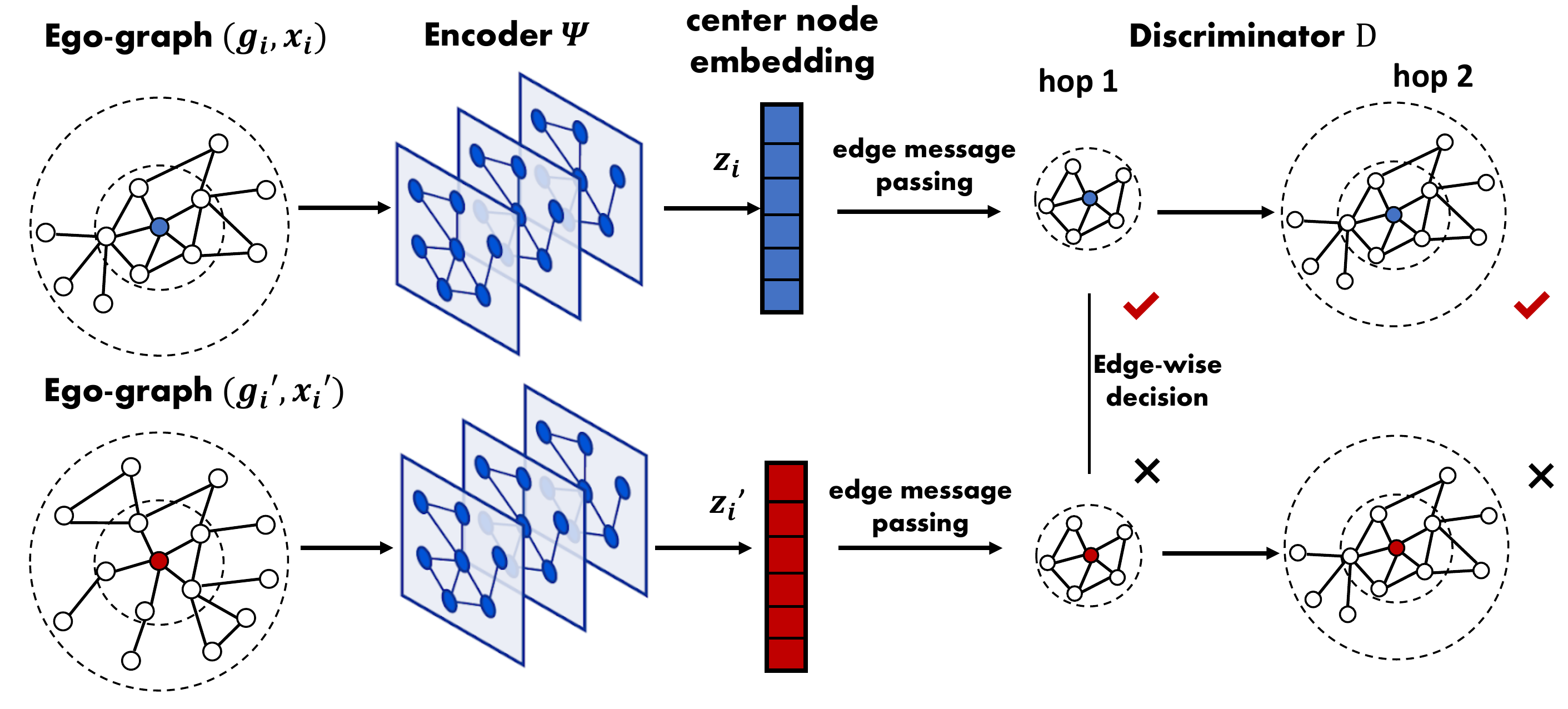}
\caption{The overall EGI training framework.}
\label{fig:framework}
\end{figure}

\xhdr{Ego-Graph Information Maximization.}
Given a set of ego-graphs $\{(g_i, x_i)\}_i$ 
drawn from an empirical joint distribution $(g_i, x_i) \sim \mathbb{P}$. 
We aim to train an GNN encoder $\Psi$ to maximize the mutual informaion (MI $(g_i,\Psi(g_i, x_i))$) between the defined structural information $g_i$\footnote{Later in section~\ref{subsec:bound}, we will discuss the equivalence between MI($g_i,z_i$) and MI($(g_i,x_i),z_i$) when node feature is structure-respecting.} (\ie k-hop ego-graph) and node embedding $z_i = \Psi(g_i, x_i)$. To maximize the MI, another discriminator $\D (g_i, z_i) : E(g_i) \times z_i \rightarrow \mathbb{R}^{+}$ is introduced to compute the probability of an edge $e$ belongs to the given ego-graph $g_i$.
We use the Jensen-Shannon MI estimator \cite{hjelm2018learning} in the \Ours objective,
\begin{equation}
\label{eq:EGI}
    \begin{array}{r}
    \mathcal{L}_\Ours = -
    \text{MI}^{\text{(JSD)}}\left(\mathcal{G},\Psi\right) = 
    \frac{1}{N} \Sum_{i=1}^N \left[ \text{sp} \left( \D (g_i, z_i^\prime) \right) + \text{sp}\left( - \D (g_i, z_i) \right) \right],
    \end{array}
\end{equation}
where $\text{sp}(x)= \log(1+e^x)$ is the softplus function and $(g_i, z_i^\prime)$ is randomly drawn from the product of marginal distributions, \ie $z_i^\prime = \Psi(g_{i^\prime}, x_{i^\prime}), (g_{i^\prime}, x_{i^\prime})\sim \mathbb{P}, i^\prime \ne i $. In general, we can also randomly draw negative $g_i^\prime$ in the topological space, while enumerating all possible graphs $g_{i^\prime}$ leads to high computation cost.

In Eq.~\ref{eq:EGI}, the computation of $\D$ on $E(g_i)$ depends on the node orders. Following the common practice in graph generation \cite{you2018graphrnn}, we characterize the decision process of $\D$ with a fixed graph ordering, \ie, the BFS-ordering $\pi$ over edges $E(g_i)$. $\D = f \circ \Phi$ is composed by another GNN encoder $\Phi$ and scoring function $f$ over an edge sequence $E^\pi: \{e_1, e_2, ..., e_n\}$, which makes predictions on the BFS-ordered edges.


Recall our previous definition on the direction of k-hop ego-graph, the center node encoder $\Psi$ receives pairs of $(g_i,x_i)$ while the neighbor node encoder $\Phi$ in discriminator $\D$ receives $(\tilde{g_i},x_i)$. Both encoders are parameterized as GNNs,
$$
\Psi(g_i,x_i) = \text{GNN}_{\Psi} (A_i, X_i), \Phi(\tilde{g_i},x_i) = \text{GNN}_{\Phi} (A_i^\prime, X_i),
$$
where $A_i, A_i^\prime$ is the adjacency matrix with self-loops of $g_i$ and $\tilde{g_i}$, respectively. The self-loops are added following the common design of GNNs, which allows the convolutional node embeddings to always incorporate the influence of the center node. $A_i = {A_i^\prime}^\intercal$. The output of $\Psi$, \ie, $z_i \in \mathbb{R}^n$, is the center node embedding, while $\Phi$ outputs representation $H \in \mathbb{R}^{|g_i| \times n}$ for neighbor nodes in the ego-graph.




Once node representation $H$ is computed, we now describe the scoring function $f$. For each of the node pair $(p,q) \in E^\pi$, $h_p$ is the source node representation from $\Phi$, $x_q$ is the destination node features. The scoring function is,
\begin{equation}
    f(h_p, x_q, z_i) = \sigma \left( U^T \cdot \tau \left( W^T [ h_p|| x_q|| z_i] \right) \right),
\end{equation}        
where $\sigma$ and $\tau$ are Sigmoid and ReLU activation functions. Thus, the discriminator $\D$ is asked to distinguish a positive $((p,q), z_i)$ and negative pair $((p,q), z_i^\prime))$ for each edge in $g_i$.
\begin{equation}
\label{eq:2}
    \mathcal{D} (g_i, z_i) = \Sum_{(p,q) \in {E^\pi}} \log f(h_p, x_q, z_i), \ \ \mathcal{D} (g_i, z_i^\prime) = \Sum_{(p,q)}^{E^\pi} \log f(h_p, x_q, z_i^\prime).
\end{equation}

There are two types of edges $(p,q)$ in our consideration of node orders, \textit{type-a} - the edges across different hops (from the center node), and \textit{type-b} - the edges within the same hop (from the center node). The aforementioned BFS-based node ordering guarantees that Eq.~\ref{eq:2} is sensitive to the ordering of type-a edges, and invariant to the ordering of type-b edges, which is consistent with the requirement of our theoretical analysis on $\Delta_\mathcal{D}$.
Due to the fact that the output of a k-layer GNN only depends on a k-hop ego-graph for both encoders $\Psi$ and $\Phi$, \Ours can be trained in parallel by sampling batches of $g_i$'s. Besides, the training objective of \Ours is transferable as long as $(g_i, x_i)$ across source graph $G_a$ and $G_b$ satisfies the conditions given in \ssym\ref{subsec:bound}. 
More model details in Appendix \ssym \ref{supp:model} and source code in the Supplementary Materials.


\xhdr{Connection with existing work.} To provide more insights into the \Ours objective, we also present it as a dual problem of ego-graph reconstruction. Recall our definition of ego-graph mutual information MI$(g_i,\Psi(g_i, x_i))$. It can be related to an ego-graph reconstruction loss $R(g_i|\Psi(g_i, x_i))$ as
\begin{equation}
    \begin{array}{r}
    \max \text{MI}(g_i,\Psi(g_i, x_i)) = H(g_i) - H(g_i|\Psi(g_i, x_i))  \leq H(g_i) - R(g_i|\Psi(g_i, x_i)).
    \end{array}
\end{equation}
When \Ours is maximizing the mutual information, it simultaneously minimizes the upper error bound of reconstructing an ego-graph $g_i$.
In this view, the key difference between \Ours and VGAE~\cite{kipf2016variational} is they assume each edge in a graph to be observed independently during the reconstruction. While in \Ours, edges in an ego-graph are observed jointly during the GNN decoding.
Moreover, existing mutual information based GNNs such as DGI~\cite{velivckovic2018deep} and GMI~\cite{peng2020graph} explicitly measure the mutual information between node features $x$ and GNN output $\Psi$. In this way, they tend to capture node features instead of graph structures, which we deem more essential in graph transfer learning as discussed in \ssym\ref{subsec:bound}.

\xhdr{Use cases of \Ours framework.}
In this paper, we focus on the classical domain adaption (direct-transferring) setting \cite{ben2007analysis}, where no target domain labels are available and transferability is measured by the performance discrepancy without fine-tuning. 
In this setting, the transferability of \ours is theoretically guaranteed by Theorem \ref{theo:main}.
In \ssym\ref{subsec:role}, we validated this with the airport datasets. Beyond direct-transferring, \ours is also useful in the more generalized and practical setting of transfer learning with fine-tuning, which we introduced in \ssym\ref{subsec:relation} and validated with the YAGO datasets. In this setting, the transferability of \ours is not rigorously studied yet, but is empirically shown promising.

\xhdr{Supportive observations.} In the first three columns of our synthetic experimental results (Table \ref{tab:synthetic}), in both cases of transfering GNNs between similar graphs (F-F) and dissimilar graphs (B-F), \Ours significantly outperforms all competitors when using node degree one-hot encoding as transferable node features. In particular, the performance gains over the untrained GIN show the effectiveness of training and transfering, and our gains are always larger than the two state-of-the-art unsupervised GNNs. Such results clearly indicate advantageous structure preserving capability and transferability of \Ours.

\subsection{Transferability analysis based on local graph Laplacians}
\label{subsec:bound}
We now study the transferability of a GNN (in particular, with the training objective of $\mathcal{L}_\Ours$) between the source graph $G_a$ and target graph $G_b$ based on their graph similarity.
We firstly establish the requirement towards node features, under which we then focus on analyzing the transferability of \Ours \wrt~the structural information of $G_a$ and $G_b$.

Recall our view of the GNN output as a combination of its input node features, fixed graph Laplacian and learnable graph filters. The utility of a GNN is determined by the compatibility among the three. In order to fulfill such compatibility, we require the node features to be \textit{structure-respecting}:
\begin{definition}[Structure-respecting node features]
Let $g_i$ be an ordered ego-graph centered on node $v_i$ with a set of node features $\{x^i_{p,q}\}_{p=0, q=1}^{k, |V_p(g_i)|}$, where $V_p(g_i)$ is the set of nodes in $p$-th hop of $g_i$. Then we say the node features on $g_i$ are structure-respecting if $x_{p,q}^i = [f(g_i)]_{p,q}\in\RR^d$ for any node $v_q\in V_p(g_i)$, where $f:\mathcal{G}\to \RR^{d\times |V(g_i)|}$ is a function. In the strict case, $f$ should be injective.
\label{def:respecting}
\end{definition}
In its essence, Def \ref{def:respecting} requires the node features to be a function of the graph structures, which is sensitive to changes in the graph structures, and in an ideal case, injective to the graph structures (\ie, mapping different graphs to different features). In this way, when the learned graph filters of a transfered GNN is compatible to the structure of $G$, they are also compatible to the node features of $G$. As we will explain in Remark \ref{remark:feature} of Theorem \ref{theo:main}, this requirement is also essential for the analysis of \Ours transferability which eventually only depends on the structural difference between two graphs.

In practice, commonly used node features like node degrees, PageRank scores \cite{page1999pagerank}, spectral embeddings \cite{chung1997spectral}, and many pre-computed unsupervised network embeddings \cite{perozzi2014deepwalk, tang2015line, grover2016node2vec} are all structure-respecting in nature. However, other commonly used node features like random vectors \cite{yang2019conditional} or uniform vectors \cite{xu2019powerful} are not and thus non-transferable.
When raw node attributes are available, they are transferable as long as the concept of \textit{homophily} \cite{mcpherson2001birds} applies, which also implies Def \ref{def:respecting}, but we do not have a rigorous analysis on it yet. 

\xhdr{Supportive observations.} In the fifth and sixth columns in Table \ref{tab:synthetic}, where we use same fixed vectors as non-transferable node features to contrast with the first three columns, there is almost no transferability (see $\delta(acc.)$) for all compared methods when non-transferable features are used, as the performance of trained GNNs are similar to or worse than their untrained baselines. More detailed experiments on different transferable and non-transferable features can be found in Appendix \ssym \ref{supp:exp_syn}.

With our view of graphs and requirement on node features both established, now we derive the following theorem by characterizing the performance difference of \Ours on two graphs based on Eq.~\ref{eq:EGI}.
\begin{theorem}[GNN transferability]
Let $G_a=\{(g_i, x_i)\}_{i=1}^n$ and $G_b=\{(\gid, x_{i'})\}_{i'=1}^m$ be two graphs, and
assume node features are structure-relevant.
Consider GCN $\Psi_{\theta}$ with k layers and a 1-hop polynomial filter $\phi$.
With reasonable assumptions on the local spectrum of $G_a$ and $G_b$, the empirical performance difference of $\Psi_{\theta}$ evaluated on $\calL_{\Ours}$ satisfies
\begin{equation}
\begin{array}{l}
|\calL_{\Ours}(G_a) - \calL_{\Ours}(G_b)|\leq
\mathcal{O}
\left(
    \Delta_{\mathcal{D}}(G_a, G_b)+C
\right).
\end{array}
\label{eq:theo}
\end{equation}
On the RHS, $C$ is only dependent on the graph encoders and node features, while $\Delta_{\mathcal{D}}(G_a, G_b)$ measures the structural difference between the source and target graphs as follows,
\begin{align}
\Delta_{\mathcal{D}}(G_a, G_b) = 
\Tilde{C}\frac{1}{nm}
    \sum_{i=1}^{n}\sum_{i'=1}^{m}
    \lambda_{\max}(
    \Tilde{L}_{g_i} - 
    \Tilde{L}_{g_{i'}})
\label{eq:gap}
\end{align}
where $\lambda_{\max}(A):=\lambda_{\max}(A^TA)^{1/2}$, and $\Tilde{L}_{g_i}$ denotes the normalised graph Laplacian of $\Tilde{g_i}$ by its in-degree. $\Tilde{C}$ is a constant dependant on $\lambda_{\max}(\Tilde{L}_{g_i})$ and $\D$.
\label{theo:main}
\end{theorem}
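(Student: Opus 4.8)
The plan is to turn the difference of \Ours objectives into a controlled difference of discriminator scores, and then propagate structural perturbations through the GCN encoders down to the spectral gap between local Laplacians. First I would exploit that the softplus $\text{sp}(x)=\log(1+e^x)$ is $1$-Lipschitz (its derivative is the sigmoid, bounded by $1$), so that $|\calL_{\Ours}(G_a)-\calL_{\Ours}(G_b)|$ is bounded, up to averaging, by differences of the discriminator values $\D(g_i,z_i)$ and $\D(g_i,z_i')$ across the two graphs. Since the two objectives are empirical averages over $n$ and $m$ ego-graphs, I would use the identity $\frac{1}{n}\sum_i a_i-\frac{1}{m}\sum_{i'} b_{i'}=\frac{1}{nm}\sum_{i=1}^n\sum_{i'=1}^m (a_i-b_{i'})$ to pair each source ego-graph with each target ego-graph, which produces exactly the double average $\frac{1}{nm}\sum_{i}\sum_{i'}$ appearing in $\Delta_{\D}$.

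Next I would open up the discriminator. Because $\D(g_i,z_i)=\sum_{(p,q)\in E^\pi}\log f(h_p,x_q,z_i)$ with $f$ built from Sigmoid and ReLU, and both $z_i=\Psi(g_i,x_i)$ and the neighbor representations $h_p$ produced by $\Phi$ are outputs of $k$-layer GCNs, the whole score is a Lipschitz function of the two encoder outputs. I would therefore reduce bounding $|\D(g_i,\cdot)-\D(g_{i'},\cdot)|$ to bounding $\|\Psi(g_i,x_i)-\Psi(g_{i'},x_{i'})\|$ and the analogous $\Phi$ difference, absorbing the weight norms $\|U\|,\|W\|$ and the activation Lipschitz constants into the constants $C$ and $\Tilde C$. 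The BFS edge ordering $\pi$ is what makes the per-edge scores across $g_i$ and $g_{i'}$ comparable term-by-term.

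The heart of the argument is relating these encoder differences to $\lambda_{\max}(\Tilde L_{g_i}-\Tilde L_{g_{i'}})$. Here I would use that a $k$-layer GCN with a $1$-hop polynomial filter $\phi$ computes, layer by layer, a Lipschitz nonlinearity composed with a map $\sigma(\phi(\Tilde L)HW)$ that is a degree-one polynomial in the in-degree-normalised Laplacian $\Tilde L$. Telescoping across the $k$ layers, the discrepancy between the two ego-graphs decomposes into (i) terms carrying the operator-norm difference $\|\phi(\Tilde L_{g_i})-\phi(\Tilde L_{g_{i'}})\|\le \text{poly}\cdot\lambda_{\max}(\Tilde L_{g_i}-\Tilde L_{g_{i'}})$, which feed $\Delta_{\D}$, and (ii) terms depending only on feature magnitudes and filter/weight norms, which are structure-independent and hence collected into $C$. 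Crucially, the structure-respecting assumption (Definition \ref{def:respecting}) lets me substitute $x_i=f(g_i)$, so the feature inputs are themselves functions of the local structure; this removes any residual feature-mismatch term and justifies treating the bound as purely structural, matching the footnote equivalence between $\text{MI}(g_i,z_i)$ and $\text{MI}((g_i,x_i),z_i)$.

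I expect the main obstacle to be the last step: ego-graphs $g_i$ and $g_{i'}$ generally have different sizes and node orderings, so $\Tilde L_{g_i}-\Tilde L_{g_{i'}}$ is not literally well defined without alignment. This is precisely what the \emph{reasonable assumptions on the local spectrum} are for --- I would fix a common ambient dimension (e.g.\ via zero-padding to the larger ego-graph or a bounded maximum degree) and assume uniformly bounded eigenvalues $\lambda_{\max}(\Tilde L_{g_i})$, under which the polynomial filter's response to the Laplacian perturbation is controlled by $\lambda_{\max}$ of the difference. Carefully discharging this alignment, together with tracking how the $k$-fold composition amplifies the first-order perturbation (producing the polynomial dependence hidden in $\Tilde C$ and the overall $\mathcal{O}(\cdot)$), is where the real work lies.
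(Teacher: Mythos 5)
Your proposal follows essentially the same route as the paper's proof in Appendix~\ref{supp:proof}: the $1/(nm)$ pairwise double sum, the Lipschitz reduction from the softplus/log-sigmoid scores to encoder-output differences, the layer-wise telescoping of the $k$-layer GCN against the $1$-hop filter (the paper's Lemma~\ref{lemma:z_expansion}), and the padding of smaller ego-graphs under a bounded-neighborhood assumption to make $\Tilde{L}_{g_i}-\Tilde{L}_{g_{i'}}$ well defined (the paper's Assumptions 1--2 plus the interlacing Lemma~\ref{lemma:OptNorm}). The only cosmetic difference is that the paper keeps the encoder's own Laplacian-difference term and folds it into $C$ rather than into $\Delta_{\mathcal{D}}$, whereas you route all structural discrepancy through the discriminator term; this does not change the argument.
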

\begin{proof}
The full proof is detailed in Appendix \ssym \ref{supp:proof}.
\end{proof}

The analysis in Theorem \ref{theo:main} naturally instantiates our insight about the correspondence between structural similarity and GNN transferability. 
It allows us to tell how well an \Ours trained on $G_a$ can work on $G_b$ by only checking the local graph Laplacians of $G_a$ and $G_b$ without actually training any model. In particular, we define the \textit{EGI gap} as $\Delta_{\mathcal{D}}$ in Eq.~\ref{eq:gap}, as other term $C$ is the same for different methods using same GNN encoder. It can be computed to bound the transferability of \Ours regarding its loss difference on the source and target graphs.

\begin{remark}
Our view of a graph $G$ as samples of k-hop ego-graphs is important, as it allows us to obtain node-wise characterization of GNN similarly as in \cite{verma2019stability}. It also allows us to set the depth of ego-graphs in the analysis to be the same as the number of GNN layers (k), since the GNN embedding of each node mostly depends on its k-hop ego-graph instead of the whole graph.
\label{remark:view}
\end{remark}

\begin{remark}
For Eq.~\ref{eq:EGI}, Def \ref{def:respecting} ensures the sampling of GNN embedding at a node always corresponds to sampling an ego-graph from $\mathcal{G}$, which reduces to uniformly sampling from $G=\{g_i\}_{i=1}^n$ under the setting of Theorem \ref{theo:main}. Therefore, the requirement of Def \ref{def:respecting} in the context of Theorem \ref{theo:main} guarantees the analysis to be only depending on the structural information of the graph.
\label{remark:feature}
\end{remark}

\xhdr{Supportive observations.} In Table \ref{tab:synthetic}, in the $\bar{d}$ columns, we compute the average structural difference between two Forest-fire graphs ($\Delta_{\mathcal{D}}$(F,F)) and between Barabasi and Forest-fire graphs ($\Delta_{\mathcal{D}}$(B,F)), based on the RHS of Eq.~\ref{eq:theo}. The results validate the topological difference between graphs generated by different random-graph models, while also verifying our view of graph as k-hop ego-graph samples and the way we propose based on it to characterize structural information of graphs.
We further highlight in the $\delta$(acc) columns the accuracy difference between the GNNs transfered from Forest-fire graphs and Barabasi graphs to Forest-fire graphs. Since Forest-fire graphs are more similar to Forest-fire graphs than Barabasi graphs (as verified in the $\Delta_{\mathcal{D}}$ columns), we expect $\delta$(acc.) to be positive and large, indicating more positive transfer between the more similar graphs. Indeed, the behaviors of \Ours align well with the expectation, which indicates its well-understood transferability and the utility of our theoretical analysis.

\xhdr{Use cases of Theorem \ref{theo:main}.} 
Our Theorem \ref{theo:main} naturally allows for two practical use cases among many others: \textit{point-wise pre-judge} and \textit{pair-wise pre-selection} for \Ours pre-training. 
Suppose we have a target graph $G_b$ which does not have sufficient training labels. In the first setting, we have a single source graph $G_a$ which might be useful for pre-training a GNN to be used on $G_b$. The \Ours gap $\Delta_{\mathcal{D}}(G_a, G_b)$ in Eq.~\ref{eq:gap} can then be computed between $G_a$ and $G_b$ to pre-judge whether such transfer learning would be successful before any actual GNN training (\ie, yes if $\Delta_{\mathcal{D}}(G_a, G_b)$ is empirically much smaller than $1.0$; no otherwise). In the second setting, we have two or more source graphs $\{G_a^1, G_a^2, \ldots\}$ which might be useful for pre-training the GNN. The \Ours gap can then be computed between every pair of $G_a^i$ and $G_b$ to pre-select the best source graph (\ie, select the one with the least \Ours gap).

In practice, the computation of eigenvalues on the small ego-graphs can be rather efficient \cite{arora2005fast}, and we do not need to enumerate all pairs of ego-graphs on two compared graphs especially if the graphs are really large (\eg, with more than a thousand nodes). Instead, we can randomly sample pairs of ego-graphs from the two graphs, update the average difference on-the-fly, and stop when it converges. 
Suppose we need to sample $M$ pairs of k-hop ego-graphs to compare two large graphs, and the average size of ego-graphs are $L$, then the overall complexity of computing Eq.~\ref{eq:theo} is $\mathcal{O}(ML^2)$, where $M$ is often less than 1K and $L$ less than 50. In Appendix \ssym \ref{supp:exp_para}, we report the approximated $\Delta_\mathcal{D}$'s \wrt~different sampling frequencies, and they are indeed pretty close to the actual value even with smaller sample frequencies, showing the feasible efficiency of computing $\Delta_\mathcal{D}$ through sampling.

\xhdr{Limitations.} \Ours is designed to account for the structural difference captured by GNNs (\ie, k-hop ego-graphs). The effectiveness of \Ours could be limited if the tasks on target graphs depend on different structural signals. For example, as Eq.~\ref{eq:gap} is computing the average pairwise distances between the graph Laplacians of local ego-graphs, $\Delta_\D$ is possibly less effective in explicitly capturing global graph properties such as numbers of connected components (CCs). In some specific tasks (such as counting CCs or community detection) where such properties become the key factors, $\Delta_\mathcal{D}$ may fail to predict the transferability of GNNs.

\begin{table*}[h]
\small
\begin{center}
\caption{Synthetic experiments of identifying structural equivalent nodes. We randomly generate 40 graphs with the Forest-fire model (F) \cite{leskovec2005graphs} and 40 graphs with the Barabasi model (B) \cite{albert2002statistical}, The GNN model is GIN \cite{xu2019powerful} with random parameters (baseline with only the neighborhood aggregation function), VGAE\cite{kipf2016variational}, DGI \cite{velivckovic2018deep}, and \Ours with GIN encoder. We train VGAE, DGI and \Ours on one graph from either set (F and B), and test them on the rest of Forest-fire graphs (F). Transferable feature is node degree one-hot encoding and non-transferable feature is uniform vectors.
More details about the results and dataset can be found in Appendix \ssym \ref{supp:exp_syn}}.
\label{tab:synthetic}
\scalebox{0.95}{
\begin{tabular}{lll|c|c|c|c|c|c|c|c}
\toprule
\multicolumn{3}{c|}{\multirow{2}{*}{Method}}                             & \multicolumn{3}{c|}{\textbf{transferable features}}            & \multicolumn{3}{c|}{\textbf{non-transferable feature}} & \multicolumn{2}{c}{\textbf{structural difference   }}                                                                                                                   \\
\multicolumn{3}{c|}{}                                                    & F-F & B-F & $\delta$(acc.) & F-F & B-F & $\delta$(acc.)  & $\Delta_{\mathcal{D}}$(F,F) & $\Delta_{\mathcal{D}}$(B,F)\\ \midrule
\multicolumn{2}{l}{}                              & GIN (untrained)                    &  0.572       &   0.572       &  /       &  0.358 & 0.358 & / &  \multirow{5}{*}{0.752}  &   \multirow{5}{*}{0.883}                               \\
\multicolumn{2}{l}{}                              & VGAE (GIN)                   &   0.498     &   0.432  &  +0.066       &    0.240      & 0.239  & 0.001 & &                          \\
\multicolumn{2}{l}{}                              & DGI (GIN)                    &  0.578       &    0.591     &  -0.013      & 0.394         & 0.213 &   +0.181 & &                       \\
\multicolumn{2}{l}{}                              & \Ours (GIN)                    & \textbf{0.710}       &    0.616        &   +0.094    &  0.376     & 0.346 &    +0.03    & &                  \\
\bottomrule

\end{tabular}
}
\end{center}
\vspace{-10pt}
\end{table*}

\section{Real Data Experiments}
\label{sec:experiment}

\xhdr{Baselines.} We compare the proposed model against existing self-supervised GNNs and pre-training GNN algorithms. To exclude the impact of different GNN encoders $\Psi$ on transferability, we always use the same encoder architecture for all compared methods (\ie, GIN~\cite{xu2019powerful} for direct-transfering experiments, GCN~\cite{kipf2016semi} for transfering with fine-tuning).

The self-supervised GNN baselines are GVAE \cite{kipf2016variational}, DGI \cite{velivckovic2018deep} and two latest mutual information estimation methods GMI~\cite{peng2020graph} and MVC~\cite{hassani2020contrastive}. As for pre-training GNN algorithms, MaskGNN and ContextPredGNN are two node-level pre-training models proposed in~\cite{hu2019strategies}
Besides, Structural Pre-train~\cite{hu2019pre} also conducts unsupervised node-level pre-training with structural features like node degrees and clustering coefficients.

\xhdr{Experimental Settings.}  The main hyperparameter $k$ is set 2 in \Ours as a common practice. We use Adam~\cite{kingma2014adam} as optimizer and learning rate is 0.01. We provide the experimental result with varying $k$ in the Appendix \ssym \ref{supp:exp_para}.
All baselines are set with the default parameters. Our experiments were run on an AWS g4dn.2xlarge machine with 1 Nvidia T4 GPU.
By default, we use node degree one-hot encoding as the transferable feature across all different graphs.
As stated before, other transferable features like spectral and other pre-computed node embeddings are also applicable. 
We focus on the setting where the downstream tasks on target graphs are unspecified but assumed to be structure-relevant, and thus pre-train the GNNs on source graphs in an unsupervised fashion.\footnote{The downstream tasks are unspecified because we aim to study the general transferability of GNNs that is not bounded to specific tasks. Nevertheless, we assume the tasks to be relevant to graph structures.} 
In terms of evaluation, 
we design two realistic experimental settings: 
(1) Direct-transfering  on the more structure-relevant task of role identification without given node features to directly evaluate the utility and transferability of \Ours. 
(2) Few-shot learning on relation prediction with task-specific node features to evaluate the generalization ability of \Ours.

\subsection{Direct-transfering on role identification}
\label{subsec:role}
\label{sec:strcuture-preserving}
First, we use the role identification without node features in a \textit{direct-transfering} setting as a reliable proxy to evaluate transfer learning performance regarding different pre-training objectives. Role in a network is defined as nodes with similar structural behaviors, such as \textit{clique members}, \textit{hub} and \textit{bridge} \cite{henderson2012rolx}. Across graphs in the same domain, we assume the definition of role to be consistent, and the task of role identification is highly structure-relevant, which can directly reflect the transferability of different methods and allows us to conduct the analysis according to Theorem \ref{theo:main}. Upon convergence of pre-training each model on the source graphs, we directly apply them to the target graphs and further train a multi-layer perceptron (MLP) upon their outputs. The GNN parameters are frozen during the MLP training. We refer to this strategy as \textit{direct-transfering} since there is no fine-tuning of the models after transfering to the target graphs.  

We use two real-world network datasets with role-based node labels: (1) Airport \cite{ribeiro2017struc2vec} contains three networks from different regions-- Brazil, USA and Europe. Each node is an airport and each link is the flight between airports. The airports are assigned with external labels based on their \textit{level of popularity}. (2) Gene \cite{yang2019conditional} contains the gene interactions regarding 50 different cancers. Each gene has a binary label indicating whether it is a \textit{transcription factor}.  More details about the results and dataset  can be found in Appendix \ref{supp:exp_airport}.

The experimental setup on the Airport dataset closely resembles that of our synthetic experiments in Table \ref{tab:synthetic}, but with real data and more detailed comparisons.
We train all models (except for the untrained ones) on the Europe network, and test them on all three networks.
The results are presented in Table~\ref{tab:role-classification}.
We notice that the node degree features themselves (with MLP) show reasonable performance in all three networks, which is not surprising since the popularity-based airport role labels are highly relevant to node degrees. 
The untrained GIN encoder yields a significant margin over just node features, as GNN encoder incorporates structural information to node representations.
While training of the DGI can further improve the performance on the source graph, 
\Ours shows the best performance there with the structure-relevant node degree features, corroborating the claimed effectiveness of \Ours in capturing the essential graph information (\ie recover the k-hop ego-graph distributions) as we stress in \ssym\ref{sec:method}. 

When transfering the models to USA and Brazil networks, \Ours further achieves the best performance compared with all baselines when structure relevant features are used (64.55 and 73.15), which reflects the most significant positive transfer.
Interestingly, direct application of GVAE, DGI and MVC that do not capture the input k-hop graph jointly, leads to rather limited and even negative transferrability (through comparison against the untrained GIN encoders). 
The recently proposed transfer learning frameworks for GNN like MaskGNN and Structural Pre-train are able to mitigate negative transfer to some extent, but their performances are still inferior to \Ours. We believe this is because their models are prone to learn the graph-specific information that is less transferable across different graphs. GMI is also known to capture the graph structure and node features, so it achieves second best result comparing with \Ours. 

Similarly as in Table \ref{tab:synthetic}, we also compute the structural differences among three networks \wrt~the \Ours gap in Eq.~\ref{eq:gap}. The structural difference is 0.869 between the Europe and USA networks, and 0.851 between the Europe and Brazil datasets, which are pretty close. Consequently, the transferability of \Ours regarding its performance gain over the untrained GIN baseline is $4.8\%$ on the USA network and $4.4\%$ on the Brazil network, which are also close. Such observations again align well with our conclusion in Theorem \ref{theo:main} that the transferability of \Ours is closely related to the structural differences between source and target graphs.


\begin{table*}[h]
\small
\begin{center}
\caption{Results of role identification with direct-transfering on the Airport dataset. We report mean and standard deviation over 100 runs. The scores marked with $^{**}$ passed t-test with $p < 0.01$ over the second runners.}
\label{tab:role-classification}
\scalebox{0.95}{
\begin{tabular}{lll|c|c|c}
\toprule
\multicolumn{3}{c|}{\multirow{2}{*}{Method}}                             & \multicolumn{3}{c}{\textbf{Airport~\cite{ribeiro2017struc2vec}}}                                          \\
\multicolumn{3}{c|}{}                                                    & Europe & USA & Brazil \\ \midrule

\multicolumn{2}{l}{}                              & features                   &    0.528$\pm$0.052      &   0.557$\pm$0.028	      &  0.671$\pm$0.089     \\
\multicolumn{2}{l}{}                              & GIN (random-init)                    &    0.558$\pm$0.050	     &   0.616$\pm$0.030	       &  0.700$\pm$0.082   \\
\multicolumn{2}{l}{}                              & GVAE (GIN) ~\cite{kipf2016variational}                  & 0.539$\pm$0.053	     &   0.555$\pm$0.029	      &  0.663$\pm$0.089     \\
\multicolumn{2}{l}{}                              & DGI (GIN) ~\cite{velivckovic2018deep}                   &    0.578$\pm$0.050	     &    0.549$\pm$0.028	      &  0.673$\pm$0.084       \\

\multicolumn{2}{l}{}                              & Mask-GIN~\cite{hu2019strategies}                   &    0.564$\pm$0.053	    &  0.608$\pm$0.027	     & 0.667$\pm$0.073  \\
\multicolumn{2}{l}{}                              & ContextPred-GIN~\cite{hu2019strategies}                    &    0.527$\pm$0.048	   &   0.504$\pm$0.030	       &  0.621$\pm$0.078    \\
\multicolumn{2}{l}{}                              & Structural Pre-train~\cite{hu2019pre}                   &   0.560$\pm$0.050	    &   0.622$\pm$0.030	      &    0.688$\pm$0.082 \\

\multicolumn{2}{l}{}                              & MVC~\cite{hassani2020contrastive}                    &   0.532$\pm$0.050	       &  0.597$\pm$0.030	   &  0.661$\pm$0.093                \\

\multicolumn{2}{l}{}                              & GMI~\cite{peng2020graph}                   & 0.581$\pm$0.054	      &  0.593$\pm$0.031	     & 0.731$\pm$0.107
                     \\

\multicolumn{2}{l}{}                              & \Ours (GIN)                    &   \textbf{0.592$\pm$0.046}$^{**}$	     &  \textbf{0.646$\pm$0.029	}$^{**}$      &   \textbf{0.732$\pm$0.078}    \\
\bottomrule

\end{tabular}
}
\end{center}
\end{table*}

On the Gene dataset, with more graphs available, we focus on \Ours to further validate the utility of Eq.~\ref{eq:theo} in Theorem \ref{theo:main}, regarding the connection between the \Ours gap (Eq.~\ref{eq:gap}) and the performance gap (micro-F1) of \Ours on them. Due to severe label imbalance that removes the performance gaps, we only use the seven brain cancer networks that have a more consistent balance of labels.
As shown in Figure~\ref{fig:TCGA}, we train \Ours on one graph and test it on the other graphs. The $x$-axis shows the \Ours gap, and $y$-axis shows the improvement on micro-F1 compared with an untrained GIN. The negative correlation between two quantities is obvious. Specifically, when the structural difference is smaller than 1, positive transfer is observed (upper left area) as the performance of transferred \Ours is better than untrained GIN, and when the structural difference becomes large ($>1$), negative transfer is observed. We also notice a similar graph pattern, \ie single dense cluster, between source graph and positive transferred target graph $G_2$.

\begin{figure*}[h]
    \centering
    \includegraphics[width=1\textwidth]{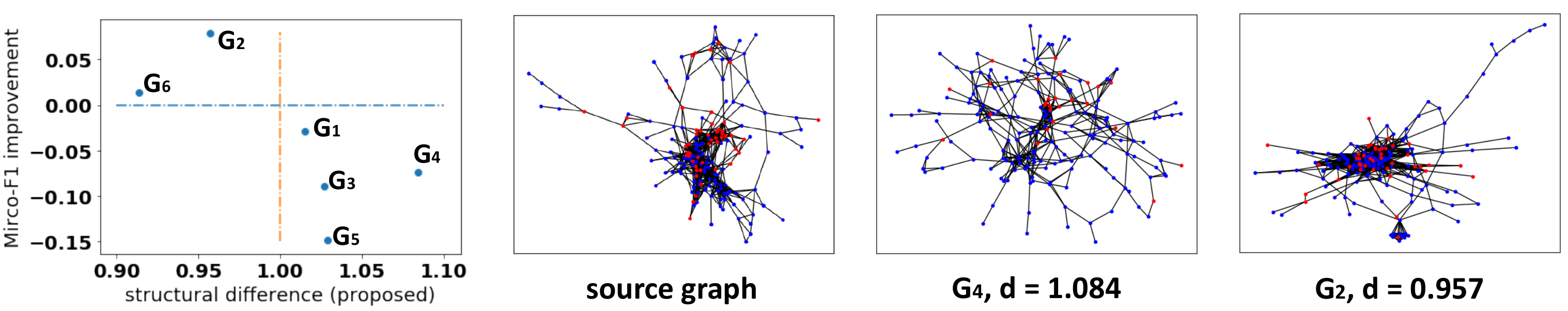}
    \caption{Transfer learning performance of role identification on the Gene dataset. We visualize the source graph $G_0$ and two example target graphs that are relatively more different ($G_4$) or similar ($G_2$) with $G_0$. 
    }
    \label{fig:TCGA}
\end{figure*}

\subsection{Few-shot learning on relation prediction}
\label{subsec:relation}
Here we evaluate \Ours in the more generalized and practical setting of \textit{few-shot learning} on the less structure-relevant task of relation prediction, with task-specific node features and fine-tuning.
The source graph contains a cleaned full dump of 579K entities from YAGO \cite{suchanek2007yago}, and we investigate 20-shot relation prediction on a target graph with 24 relation types, which is a sub-graph of 115K entities sampled from the same dump.
In \textit{post-fine-tuning}, the models are pre-trained with an unsupervised loss on the source graph and fine-tuned with the task-specific loss on the target graph.
In \textit{joint-fine-tuning}, the same pre-trained models are jointly optimized \wrt~the unsupervised pre-training loss and task-specific fine-tuning loss on the target graph.
In Table~\ref{tab:finetune-result}, we observe most of the existing models fail to transfer 
across pre-training and fine-tuning tasks, especially in the \textit{joint-fine-tuning} setting. 
In particular, both Mask-GIN and ContextPred-GIN rely a lot on 
task-specific fine-tuning, while \Ours focuses on the capturing of similar ego-graph structures that are transferable across graphs. The mutual information based method GMI also demonstrates considerable transferability and we believe the ability to capture the graph structure is the key to the transferability.
As a consequence, \Ours significantly outperforms all compared methods in both settings. 
More detailed statistics and running time are in Appendix \ssym \ref{supp:exp_yago}.
\begin{table}[h]
\small
\begin{center}
\caption{Performance of few-shot relation prediction on YAGO. The scores marked with $^{**}$ passed t-test with $p < 0.01$ over the second best results.}
\label{tab:finetune-result}
\scalebox{1.0}{
\begin{tabular}{ll|c|c|c|c}
\toprule
\multicolumn{2}{c|}{\multirow{2}{*}{Method}}                             & \multicolumn{2}{c|}{\textbf{post-fine-tuning}}  & \multicolumn{2}{c}{\textbf{joint-fine-tuning}}                                                                                                                        \\
\multicolumn{2}{c|}{}                                                    & AUROC & MRR & AUROC & MRR  \\ \midrule
\multicolumn{2}{l|}{ No pre-train }                      &   0.687$\pm$0.002       &  0.596$\pm$0.003           & N.A. & N.A.                     \\
\multicolumn{2}{l|}{GVAE }   &   0.701$\pm$0.003       &  0.601$\pm$0.007  & 0.679$\pm$0.004 & 0.568$\pm$0.008\\
\multicolumn{2}{l|}{DGI} &    0.689$\pm$0.011       &0.586$\pm$0.025 & 0.688$\pm$0.012 & 0.537$\pm$0.023\\
\multicolumn{2}{l|}{MaskGNN}                     &  0.713$\pm$0.009        &     0.631$\pm$0.015 & 0.712$\pm$0.005 & 0.560$\pm$0.010                     \\
\multicolumn{2}{l|}{ContextPredGNN} &   0.692$\pm$0.030       &  0.662$\pm$0.030  & 0.705$\pm$0.011 & 0.575$\pm$0.021                       \\
\multicolumn{2}{l|}{GMI} &    0.728$\pm$0.005      &  0.625$\pm$0.009  &  0.721$\pm$0.007  & 0.643$\pm$0.011   \\
\multicolumn{2}{l|}{Structural Pre-train} &       OOM   &  OOM  & OOM &  OOM \\
\multicolumn{2}{l|}{MVC} &       OOM   &  OOM  & OOM &  OOM                      \\

\multicolumn{2}{l|}{\Ours}                  &   \textbf{0.739$\pm$ 0.009}$^{**}$        &    \textbf{0.670$\pm$0.014}    &  \textbf{0.787 $\pm$ 0.011}$^{**}$ & \textbf{0.729 $\pm$ 0.016}$^{**}$

\\
\bottomrule

\end{tabular}
}
\end{center}
\end{table}
\vspace{-6pt}
\section{Conclusion}
\vspace{-6pt}
\label{sec:con}
To the best of our knowledge, this is the first research effort towards establishing a theoretically grounded framework to analyze GNN transferability, which we also demonstrate to be practically useful for guiding the design and conduct of transfer learning with GNNs. For future work, it is intriguing to further strengthen the bound with relaxed assumptions, rigorously extend it to the more complicated and less restricted settings regarding node features and downstream tasks, as well as analyze and improve the proposed framework over more transfer learning scenarios and datasets. It is also important to protect the privacy of pre-training data to avoid potential negative societal impacts.


\begin{ack}
Research was supported in part by US DARPA KAIROS Program No.~FA8750-19-2-1004, SocialSim Program No.~W911NF-17-C-0099, and INCAS Program No.~HR001121C0165, National Science Foundation IIS-19-56151, IIS-17-41317, and IIS 17-04532, and the Molecule Maker Lab Institute: An AI Research Institutes program supported by NSF under Award No.~2019897. 
Chao Zhang is supported NSF IIS-2008334, IIS-2106961, and ONR MURI N00014-17-1-2656.
We would like to thank AWS Machine Learning Research Awards program for providing computational resources for the experiments in this paper.
This work is also partially supported by the internal funding and GPU servers provided by the Computer Science Department of Emory University. 
Any opinions, findings, and conclusions or recommendations expressed herein are those of the authors and do not necessarily represent the views, either expressed or implied, of DARPA or the U.S.~Government.
\end{ack}

\bibliography{ref}

\begin{thebibliography}{10}

\bibitem{albert2002statistical}
R{\'e}ka Albert and Albert-L{\'a}szl{\'o} Barab{\'a}si.
\newblock Statistical mechanics of complex networks.
\newblock {\em Reviews of modern physics}, 74(1):47, 2002.

\bibitem{arora2005fast}
Sanjeev Arora, Elad Hazan, and Satyen Kale.
\newblock Fast algorithms for approximate semidefinite programming using the
  multiplicative weights update method.
\newblock In {\em FOCS}, pages 339--348, 2005.

\bibitem{baek2020learning}
Jinheon Baek, Dong~Bok Lee, and Sung~Ju Hwang.
\newblock Learning to extrapolate knowledge: Transductive few-shot out-of-graph
  link prediction.
\newblock {\em Advances in Neural Information Processing Systems}, 33, 2020.

\bibitem{bai2016subgraph}
Lu~Bai and Edwin~R Hancock.
\newblock Fast depth-based subgraph kernels for unattributed graphs.
\newblock {\em Pattern Recognition}, 50:233--245, 2016.

\bibitem{barabasi1999emergence}
Albert-L{\'a}szl{\'o} Barab{\'a}si and R{\'e}ka Albert.
\newblock Emergence of scaling in random networks.
\newblock {\em science}, 286(5439):509--512, 1999.

\bibitem{belkin2002laplacian}
Mikhail Belkin and Partha Niyogi.
\newblock Laplacian eigenmaps and spectral techniques for embedding and
  clustering.
\newblock In {\em NIPS}, pages 585--591, 2002.

\bibitem{ben2007analysis}
Shai Ben-David, John Blitzer, Koby Crammer, and Fernando Pereira.
\newblock Analysis of representations for domain adaptation.
\newblock In {\em NIPS}, pages 137--144, 2007.

\bibitem{borgwardt2020graph}
Karsten Borgwardt, Elisabetta Ghisu, Felipe Llinares-L{\'o}pez, Leslie O'Bray,
  and Bastian Rieck.
\newblock Graph kernels: State-of-the-art and future challenges.
\newblock {\em arXiv preprint arXiv:2011.03854}, 2020.

\bibitem{bruna2013spectral}
Joan Bruna, Wojciech Zaremba, Arthur Szlam, and Yann LeCun.
\newblock Spectral networks and locally connected networks on graphs.
\newblock In {\em ICLR}, 2014.

\bibitem{chen2018fastgcn}
Jie Chen, Tengfei Ma, and Cao Xiao.
\newblock Fastgcn: fast learning with graph convolutional networks via
  importance sampling.
\newblock In {\em ICLR}, 2018.

\bibitem{chung1997spectral}
Fan~RK Chung and Fan~Chung Graham.
\newblock {\em Spectral graph theory}.
\newblock Number~92. American Mathematical Soc., 1997.

\bibitem{defferrard2016convolutional}
Micha{\"e}l Defferrard, Xavier Bresson, and Pierre Vandergheynst.
\newblock Convolutional neural networks on graphs with fast localized spectral
  filtering.
\newblock In {\em NIPS}, pages 3844--3852, 2016.

\bibitem{devlin2018bert}
Jacob Devlin, Ming-Wei Chang, Kenton Lee, and Kristina Toutanova.
\newblock Bert: Pre-training of deep bidirectional transformers for language
  understanding.
\newblock In {\em ACL}, pages 4171--4186, 2019.

\bibitem{grover2016node2vec}
Aditya Grover and Jure Leskovec.
\newblock node2vec: Scalable feature learning for networks.
\newblock In {\em KDD}, pages 855--864, 2016.

\bibitem{hamilton2017inductive}
Will Hamilton, Zhitao Ying, and Jure Leskovec.
\newblock Inductive representation learning on large graphs.
\newblock In {\em NIPS}, pages 1024--1034, 2017.

\bibitem{hammond2011wavelets}
David~K Hammond, Pierre Vandergheynst, and R{\'e}mi Gribonval.
\newblock Wavelets on graphs via spectral graph theory.
\newblock {\em ACHA}, 30(2):129--150, 2011.

\bibitem{hassani2020contrastive}
Kaveh Hassani and Amir~Hosein Khasahmadi.
\newblock Contrastive multi-view representation learning on graphs.
\newblock In {\em International Conference on Machine Learning}, pages
  4116--4126. PMLR, 2020.

\bibitem{he2016deep}
Kaiming He, Xiangyu Zhang, Shaoqing Ren, and Jian Sun.
\newblock Deep residual learning for image recognition.
\newblock In {\em CVPR}, pages 770--778, 2016.

\bibitem{henderson2012rolx}
Keith Henderson, Brian Gallagher, Tina Eliassi-Rad, Hanghang Tong, Sugato Basu,
  Leman Akoglu, Danai Koutra, Christos Faloutsos, and Lei Li.
\newblock Rolx: structural role extraction \& mining in large graphs.
\newblock In {\em KDD}, pages 1231--1239, 2012.

\bibitem{hjelm2018learning}
R~Devon Hjelm, Alex Fedorov, Samuel Lavoie-Marchildon, Karan Grewal, Phil
  Bachman, Adam Trischler, and Yoshua Bengio.
\newblock Learning deep representations by mutual information estimation and
  maximization.
\newblock In {\em ICLR}, 2019.

\bibitem{hu2019strategies}
Weihua Hu, Bowen Liu, Joseph Gomes, Marinka Zitnik, Percy Liang, Vijay Pande,
  and Jure Leskovec.
\newblock Strategies for pre-training graph neural networks.
\newblock In {\em ICLR}, 2019.

\bibitem{hu2020gpt}
Ziniu Hu, Yuxiao Dong, Kuansan Wang, Kai-Wei Chang, and Yizhou Sun.
\newblock Gpt-gnn: Generative pre-training of graph neural networks.
\newblock In {\em KDD}, pages 1857--1867, 2020.

\bibitem{hu2019pre}
Ziniu Hu, Changjun Fan, Ting Chen, Kai-Wei Chang, and Yizhou Sun.
\newblock Pre-training graph neural networks for generic structural feature
  extraction.
\newblock {\em arXiv preprint arXiv:1905.13728}, 2019.

\bibitem{hwang2004cauchy}
Suk-Geun Hwang.
\newblock Cauchy's interlace theorem for eigenvalues of hermitian matrices.
\newblock {\em The American Mathematical Monthly}, 111(2):157--159, 2004.

\bibitem{kan2021zero}
Xuan Kan, Hejie Cui, and Carl Yang.
\newblock Zero-shot scene graph relation prediction through commonsense
  knowledge integration.
\newblock In {\em ECML-PKDD}, 2021.

\bibitem{keriven2019universal}
Nicolas Keriven and Gabriel Peyr{\'e}.
\newblock Universal invariant and equivariant graph neural networks.
\newblock In {\em NIPS}, pages 7090--7099, 2019.

\bibitem{kingma2014adam}
Diederik~P Kingma and Jimmy Ba.
\newblock Adam: A method for stochastic optimization.
\newblock {\em arXiv preprint arXiv:1412.6980}, 2014.

\bibitem{kipf2016variational}
Thomas~N Kipf and Max Welling.
\newblock Variational graph auto-encoders.
\newblock {\em arXiv preprint arXiv:1611.07308}, 2016.

\bibitem{kipf2016semi}
Thomas~N Kipf and Max Welling.
\newblock Semi-supervised classification with graph convolutional networks.
\newblock In {\em ICLR}, 2017.

\bibitem{kriege2020survey}
Nils~M Kriege, Fredrik~D Johansson, and Christopher Morris.
\newblock A survey on graph kernels.
\newblock {\em Applied Network Science}, 5(1):1--42, 2020.

\bibitem{lan2020node}
Lin Lan, Pinghui Wang, Xuefeng Du, Kaikai Song, Jing Tao, and Xiaohong Guan.
\newblock Node classification on graphs with few-shot novel labels via meta
  transformed network embedding.
\newblock {\em Advances in Neural Information Processing Systems}, 33, 2020.

\bibitem{leskovec2005graphs}
Jure Leskovec, Jon Kleinberg, and Christos Faloutsos.
\newblock Graphs over time: densification laws, shrinking diameters and
  possible explanations.
\newblock In {\em Proceedings of the eleventh ACM SIGKDD international
  conference on Knowledge discovery in data mining}, pages 177--187, 2005.

\bibitem{levie2019transferability}
Ron Levie, Wei Huang, Lorenzo Bucci, Michael~M Bronstein, and Gitta Kutyniok.
\newblock Transferability of spectral graph convolutional neural networks.
\newblock {\em arXiv preprint arXiv:1907.12972}, 2019.

\bibitem{levie2019transferability2}
Ron Levie, Elvin Isufi, and Gitta Kutyniok.
\newblock On the transferability of spectral graph filters.
\newblock In {\em 2019 13th International conference on Sampling Theory and
  Applications (SampTA)}, pages 1--5. IEEE, 2019.

\bibitem{liu2019graph}
Jenny Liu, Aviral Kumar, Jimmy Ba, Jamie Kiros, and Kevin Swersky.
\newblock Graph normalizing flows.
\newblock In {\em Advances in Neural Information Processing Systems}, pages
  13556--13566, 2019.

\bibitem{mcpherson2001birds}
Miller McPherson, Lynn Smith-Lovin, and James~M Cook.
\newblock Birds of a feather: Homophily in social networks.
\newblock {\em Annual review of sociology}, 27(1):415--444, 2001.

\bibitem{mikolov2013distributed}
Tomas Mikolov, Ilya Sutskever, Kai Chen, Greg Corrado, and Jeffrey Dean.
\newblock Distributed representations of words and phrases and their
  compositionality.
\newblock {\em arXiv preprint arXiv:1310.4546}, 2013.

\bibitem{nikolentzos2019graph}
Giannis Nikolentzos, Giannis Siglidis, and Michalis Vazirgiannis.
\newblock Graph kernels: A survey.
\newblock {\em arXiv preprint arXiv:1904.12218}, 2019.

\bibitem{oono2020graph}
Kenta Oono and Taiji Suzuki.
\newblock Graph neural networks exponentially lose expressive power for node
  classification.
\newblock In {\em ICLR}, 2020.

\bibitem{page1999pagerank}
Lawrence Page, Sergey Brin, Rajeev Motwani, and Terry Winograd.
\newblock The pagerank citation ranking: Bringing order to the web.
\newblock Technical report, Stanford InfoLab, 1999.

\bibitem{peng2020graph}
Zhen Peng, Wenbing Huang, Minnan Luo, Qinghua Zheng, Yu~Rong, Tingyang Xu, and
  Junzhou Huang.
\newblock Graph representation learning via graphical mutual information
  maximization.
\newblock In {\em WWW}, pages 259--270, 2020.

\bibitem{perozzi2014deepwalk}
Bryan Perozzi, Rami Al-Rfou, and Steven Skiena.
\newblock Deepwalk: Online learning of social representations.
\newblock In {\em KDD}, pages 701--710, 2014.

\bibitem{qiu2020gcc}
Jiezhong Qiu, Qibin Chen, Yuxiao Dong, Jing Zhang, Hongxia Yang, Ming Ding,
  Kuansan Wang, and Jie Tang.
\newblock Gcc: Graph contrastive coding for graph neural network pre-training.
\newblock In {\em KDD}, pages 1150--1160, 2020.

\bibitem{ravi2016optimization}
Sachin Ravi and Hugo Larochelle.
\newblock Optimization as a model for few-shot learning.
\newblock In {\em ICLR}, 2017.

\bibitem{ribeiro2017struc2vec}
Leonardo~FR Ribeiro, Pedro~HP Saverese, and Daniel~R Figueiredo.
\newblock struc2vec: Learning node representations from structural identity.
\newblock In {\em KDD}, pages 385--394, 2017.

\bibitem{roweis2000nonlinear}
Sam~T Roweis and Lawrence~K Saul.
\newblock Nonlinear dimensionality reduction by locally linear embedding.
\newblock {\em Science}, 290(5500):2323--2326, 2000.

\bibitem{ruiz2020graphon}
Luana Ruiz, Luiz Chamon, and Alejandro Ribeiro.
\newblock Graphon neural networks and the transferability of graph neural
  networks.
\newblock {\em Advances in Neural Information Processing Systems}, 33, 2020.

\bibitem{shi2018easing}
Yu~Shi, Qi~Zhu, Fang Guo, Chao Zhang, and Jiawei Han.
\newblock Easing embedding learning by comprehensive transcription of
  heterogeneous information networks.
\newblock In {\em Proceedings of the 24th ACM SIGKDD International Conference
  on Knowledge Discovery \& Data Mining}, pages 2190--2199, 2018.

\bibitem{suchanek2007yago}
Fabian~M Suchanek, Gjergji Kasneci, and Gerhard Weikum.
\newblock Yago: a core of semantic knowledge.
\newblock In {\em WWW}, pages 697--706, 2007.

\bibitem{sun2019infograph}
Fan-Yun Sun, Jordan Hoffman, Vikas Verma, and Jian Tang.
\newblock Infograph: Unsupervised and semi-supervised graph-level
  representation learning via mutual information maximization.
\newblock In {\em ICLR}, 2019.

\bibitem{tang2015line}
Jian Tang, Meng Qu, Mingzhe Wang, Ming Zhang, Jun Yan, and Qiaozhu Mei.
\newblock Line: Large-scale information network embedding.
\newblock In {\em WWW}, pages 1067--1077, 2015.

\bibitem{tenenbaum2000global}
Joshua~B Tenenbaum, Vin De~Silva, and John~C Langford.
\newblock A global geometric framework for nonlinear dimensionality reduction.
\newblock {\em Science}, 290(5500):2319--2323, 2000.

\bibitem{velivckovic2017graph}
Petar Velickovic, Guillem Cucurull, Arantxa Casanova, Adriana Romero, Pietro
  Lio, and Yoshua Bengio.
\newblock Graph attention networks.
\newblock In {\em ICLR}, 2018.

\bibitem{velivckovic2018deep}
Petar Velickovic, William Fedus, William~L Hamilton, Pietro Lio, Yoshua Bengio,
  and R~Devon Hjelm.
\newblock Deep graph infomax.
\newblock In {\em ICLR}, 2019.

\bibitem{verma2019stability}
Saurabh Verma and Zhi-Li Zhang.
\newblock Stability and generalization of graph convolutional neural networks.
\newblock In {\em KDD}, 2019.

\bibitem{vinyals2016matching}
Oriol Vinyals, Charles Blundell, Tim Lillicrap, Daan Wierstra, et~al.
\newblock Matching networks for one shot learning.
\newblock In {\em NIPS}, pages 3630--3638, 2016.

\bibitem{vishwanathan2010graph}
S~Vichy~N Vishwanathan, Nicol~N Schraudolph, Risi Kondor, and Karsten~M
  Borgwardt.
\newblock Graph kernels.
\newblock {\em Journal of Machine Learning Research}, 11:1201--1242, 2010.

\bibitem{weisfeiler1968reduction}
Boris Weisfeiler and Andrei~A Lehman.
\newblock A reduction of a graph to a canonical form and an algebra arising
  during this reduction.
\newblock {\em Nauchno-Technicheskaya Informatsia}, 2(9):12--16, 1968.

\bibitem{wu2020unsupervised}
Man Wu, Shirui Pan, Chuan Zhou, Xiaojun Chang, and Xingquan Zhu.
\newblock Unsupervised domain adaptive graph convolutional networks.
\newblock In {\em WWW}, pages 1457--1467, 2020.

\bibitem{xu2019powerful}
Keyulu Xu, Weihua Hu, Jure Leskovec, and Stefanie Jegelka.
\newblock How powerful are graph neural networks?
\newblock In {\em ICLR}, 2019.

\bibitem{yang2014embedding}
Bishan Yang, Wen-tau Yih, Xiaodong He, Jianfeng Gao, and Li~Deng.
\newblock Embedding entities and relations for learning and inference in
  knowledge bases.
\newblock {\em arXiv preprint arXiv:1412.6575}, 2014.

\bibitem{yang2018meta}
Carl Yang, Yichen Feng, Pan Li, Yu~Shi, and Jiawei Han.
\newblock Meta-graph based hin spectral embedding: Methods, analyses, and
  insights.
\newblock In {\em ICDM}, 2018.

\bibitem{yang2020multisage}
Carl Yang, Aditya Pal, Andrew Zhai, Nikil Pancha, Jiawei Han, Chuck Rosenberg,
  and Jure Leskovec.
\newblock Multisage: Empowering graphsage with contextualized multi-embedding
  on web-scale multipartite networks.
\newblock In {\em KDD}, 2020.

\bibitem{yang2020heterogeneous}
Carl Yang, Yuxin Xiao, Yu~Zhang, Yizhou Sun, and Jiawei Han.
\newblock Heterogeneous network representation learning: A unified framework
  with survey and benchmark.
\newblock In {\em TKDE}, 2020.

\bibitem{yang2018did}
Carl Yang, Chao Zhang, Xuewen Chen, Jieping Ye, and Jiawei Han.
\newblock Did you enjoy the ride? understanding passenger experience via
  heterogeneous network embedding.
\newblock In {\em ICDE}, 2018.

\bibitem{yang2020co}
Carl Yang, Jieyu Zhang, and Jiawei Han.
\newblock Co-embedding network nodes and hierarchical labels with taxonomy
  based generative adversarial nets.
\newblock In {\em ICDM}, 2020.

\bibitem{yang2020relation}
Carl Yang, Jieyu Zhang, Haonan Wang, Sha Li, Myungwan Kim, Matt Walker, Yiou
  Xiao, and Jiawei Han.
\newblock Relation learning on social networks with multi-modal graph edge
  variational autoencoders.
\newblock In {\em WSDM}, 2020.

\bibitem{yang2019conditional}
Carl Yang, Peiye Zhuang, Wenhan Shi, Alan Luu, and Pan Li.
\newblock Conditional structure generation through graph variational generative
  adversarial nets.
\newblock In {\em NIPS}, pages 1338--1349, 2019.

\bibitem{ying2018hierarchical}
Zhitao Ying, Jiaxuan You, Christopher Morris, Xiang Ren, Will Hamilton, and
  Jure Leskovec.
\newblock Hierarchical graph representation learning with differentiable
  pooling.
\newblock In {\em NIPS}, 2018.

\bibitem{you2018graphrnn}
Jiaxuan You, Rex Ying, Xiang Ren, William Hamilton, and Jure Leskovec.
\newblock {G}raph{RNN}: Generating realistic graphs with deep auto-regressive
  models.
\newblock In {\em Proceedings of the 35th International Conference on Machine
  Learning}, pages 5708--5717. PMLR, 2018.

\bibitem{zhu2020shift}
Qi~Zhu, Natalia Ponomareva, Jiawei Han, and Bryan Perozzi.
\newblock Shift-robust gnns: Overcoming the limitations of localized graph
  training data.
\newblock In {\em NeurIPS}, 2021.

\end{thebibliography}
\bibliographystyle{plain}

\newpage

\appendix

\section{Theory Details}
\label{supp:proof}
From the $\calL_{\textsc{Egi}}$ objective, we have assumed  $g_i\overset{\iid}{\sim} \mu$, $x_i\overset{\iid}{\sim} \nu$, and $(g_i, x_i)\overset{\iid}{\sim} p$, for $(g_i, x_i)\in\mathcal{G}\times \calX$. Then for a sample $\{(g_i, x_i)\}_i$, we have access to the empirical distributions of the three. In the procedure of evaluating the objective, we sample uniformly.

Note that, in Eq.~2 of the main paper, we used a $d$ dimensional hidden state $h_{p}$ to denote an edge's source node representation and $x_q$ as destination node features from the structure of the ego-graph and the associated source node feature with GNN. In our proof, we denote $v_{p,q}$ as the $q$-th node in the $p$-th layer of the ego-graph and let $h_{p,q} = h_p$ and $x_{p,q}=x_q$.
For simplicity, in i-th layer, we denote $f(x^i)=h_{p,q}^{i}\|x^i_{p,q}$, where $[\cdot \| \cdot]$ is the concatenation operation.

Finally, as we are considering GNN with $k$ layers, its computation only depends on the k-hop ego-graphs of $G$, which is an important consideration when unfolding the embedding of GNN at a centre node with Lamma \ref{lemma:OptNorm}.

\begin{lemma}{}
\label{lemma:OptNorm}
For any $A\in \RR^{m\times n}$, where $m\geq n$, and $A$ is a submatrix of $B\in \RR^{m'\times n}$, where $m<m'$, we have
$$
\|A\|_2\leq \|B\|_2.
$$
\end{lemma}
\begin{proof}
Note that, $AA^T$ is a principle matrix of $BB^T$, \ie, $AA^T$ is obtained by removing the same set of rows and columns from $BB^T$. Then, by Eigenvalue Interlacing Theorem \cite{hwang2004cauchy} and the fact that $A^TA$ and $AA^T$ have the same set of non-zero singular values, the matrix operator norm satisfies
$\|A\|_2=\sqrt{\lambda_{\max}(A^TA)} = \sqrt{\lambda_{\max}(AA^T)}\leq 
\sqrt{\lambda_{\max}(BB^T)} =
\|B\|_2$.
\end{proof}

\subsection{Center-node view of GCN}
\label{Subsec:view}
Recall that $V_p(g_i)$ denotes the set of nodes in the $p$th hop of k-hop ego-graph $g_i$, and $x_{p,q}^i$ denotes the feature for $q$th node in $p$th hop of $g_i$, for any $p=0,\dots, k;\; q=1, \dots, |V_p(g_i)|$. Similarly, $V(g_i)$ denotes the entire set of nodes in $g_i$. In each ego-graph sample $\{g_i, x_i\}$, the layer-wise propagation rules for the center node embedding in encoder $\Psi$ and discriminator $\D$ can be written into the form of GCN as followed
$$
Z^{(l)} = \text{ReLU} (D^{-\frac{1}{2}}(I+A)D^{-\frac{1}{2}}Z^{(l-1)} \theta^{(l)} )
$$
where $A$ is adjacency matrix of $G$. $I$ adds the self-loop and $D_{ii} = \sum_j A_{ij}$ is the degree matrix. 

We focus on the center node's embedding obtained from a $k$-layer GCN with 1-hop polynomial filter $\phi(L) = Id - L$. 
Inspired by the characterization of GCN from a node-wise view in \cite{verma2019stability}, we similarly denote the embedding of node $x_i\;\forall i=1,\cdots, n$ in the final layer of the GCN as 
$$
z_i^{(k)} = z_i = \Psi_{\theta}(x_i)=\sigma(\sum_{j\in\N(x_i)} 
e_{i j}{z_j^{(k-1)}}^T\theta^{(k)})\in \RR^d,
$$
where $e_{i j}=[\phi(L)]_{i j}\in\RR$ the weighted link between node $i$ and $j$; and $\theta^{(k)}\in\RR^{d\times d}$ is the weight for the $k$th layer sharing across nodes. Then $\theta=\{\theta^{(\ell)}\}_{\ell=1}^k$. We may denote $z_i^{(\ell)}\in \RR^d$ similarly for $\ell=1,\cdots, k-1$, and  $z_i^{0}=x_i\in\RR^{d}$ as the node feature of center node $x_i$. 
With the assumption of GCN in the statement, it is clear that only the k-hop ego-graph $\gi$ centered at $x_i$ is needed to compute $z_i^{(k)}$ for any $i=1,\cdots, n$ instead of the whole of $G$. Precisely, $p$-hop of subgraph corresponds to the $\ell=(k-p)$th layer in the model. 

With such observation in mind,
let us denote the matrix of node embeddings of $g_i$ at the $\ell$th layer as $[z_{i}^{(\ell)}]\in\RR^{|V(g_i)|\times d}$ for $\ell=1,\cdots, k$; and let $[z_{i}^{(0)}] \equiv [x_{i}]\in (\RR^{d})^{|V(g_i)|}$ denote the matrix of node features in the $k$-hop ego-graph $g_i$.
In addition, denote $[z_{i}^{(\ell)}]_{p}$ as the principle submatrix, which includes embeddings for nodes in the $0$ to $p$th hop of $g_i$, $0\leq p \leq k$.

We denote $L_{g_i}$ as the out-degree normalised graph Laplacian of $g_i$. 
 Here, the out-degree is defined with respect to the direction from leaves to centre node in $g_i$. Similarly, denote $\Tilde{L}_{g_i}$ as the in-degree normalised graph Laplacian of $g_i$, where the direction is from centre to leaves.
 
WLOG, we write the $\ell$th layer embedding in matrix notation of the following form
$$
[z_i^{(\ell)}]_{k-\ell+1} = \sigma([\phi(L_{g_i})]_{k-\ell+1}[z_i^{(\ell-1)}]_{k-\ell+1}\theta^{(\ell)}),
$$
where the GCN only updates the embedding of nodes in the $0$ to $(k-\ell)$th hop. We also implicitly assume the embedding of nodes in $(k-\ell+1)$ to $k$th hop are unchanged through the update, due to the directed nature of $g_i$. Hence, we obtain $z_i \equiv [z_i^{(k)}]_0$ from the following
$$
[z_i^{(k)}]_1 = \sigma([\phi(L_{g_i})]_{1}[z_i^{(k-1)}]_{1}\theta^{(k)}).
$$

Similarly, we are able to write down the form of discriminator using matrix representation for GCN. The edge information at $\ell$th time point for nodes in $V(g_i)$ can be described as follows
$$
[h_i^{(\ell)}] = ReLU(\phi(\Tilde{L}_{g_i})[h_i^{(\ell-1)}]\Tilde{\theta}^{(\ell)}),
$$

\subsection{Proof for Theorem 4.1}
 We restate Theorem 4.1 from the main paper as below.
\begin{theorem}
Let $G_a=\{(g_i, x_i)\}_{i=1}^n$ and $G_b=\{(\gid, x_{i'})\}_{i'=1}^m$ be two graphs and node features are structure-respecting with $x_i = f(L_{g_i}), x_{i'}=f(L_{g_{i'}})$ for some function $f:\RR^{|V(g_i)|\times |V(g_i)|} \to \RR^{d}$.
Consider GCN $\Psi_{\theta}$ with k layers and a 1-hop polynomial filter $\phi$,the empirical performance difference of $\Psi_{\theta}$ with $\calL_{\Ours}$ satisfies
\begin{align}
|\calL_{\Ours}(G_a) - \calL_{\Ours}(G_b)|\leq \mathcal{O}
\left(\frac{1}{nm}
    \sum_{i=1}^{n}\sum_{i'=1}^{m}
    [M + C
    \lambda_{\max}(L_{\gi} 
    - L_{\gid})
    +
    \Tilde{C}
    \lambda_{\max}(\Tilde{L}_{g_i} 
    - \Tilde{L}_{g_{i'}}))]
\right),
\end{align}
where $M$ is dependant on $\Psi$, $\mathcal{D}$, node features, and the largest eigenvalue of $L_{\gi}$ and $\Tilde{L}_{\gi}$. $C$ is a constant dependant on the encoder, while $\Tilde{C}$ is a constant dependant on the decoder. With a slight abuse of notation, we denote $\lambda_{\max}(A):=\lambda_{\max}(A^TA)^{1/2}$. Note that, in the main paper, we have $C:= M+C\lambda_{\max}(L_{\gi} - L_{\gid})$, and $\Delta_{\D}(G_a, G_b) := \Tilde{C}
    \lambda_{\max}(\Tilde{L}_{g_i} 
    - \Tilde{L}_{g_{i'}})$.
\end{theorem}

\begin{proof}
Now,
\begin{equation*}
    \begin{aligned}
    &|\calL_{\Ours}(G) - \calL_{\Ours}(G')|\\ 
    =
    &\left|\frac{1}{n^2}
    \sum_{i,j=1}^{n}
    (\D(g_i, z_{j})) - 
    \frac{1}{n}\sum_{i=1}^n 
    (-(-\D(g_i, z_{i})) - (\frac{1}{m^2}\sum_{i',j'=1}^{m}
    (\D(g_{i'}, z_{j'})) - \frac{1}{m}\sum_{i'=1}^m 
    (-(-\D(g_{i'}, z_{i'}))))\right|\\
    \leq &\frac{1}{n^2m^2}
    \sum_{i,j=1}^{n}\sum_{i',j'=1}^{m}
    \left|
    \D(g_i, z_{j}) -  \D(g_{i'}, z_{j'})
    \right| + 
    \frac{1}{nm}
    \sum_{i=1}^{n}\sum_{i'=1}^{m} 
    \left|
    \D(g_i, z_{i}) -\D(g_{i'}, z_{i'})
    \right| \\
    = &\frac{1}{n^2m^2}
    \sum_{i,j=1}^{n}\sum_{i',j'=1}^{m} A +
    \frac{1}{nm}
    \sum_{i=1}^{n}\sum_{i'=1}^{m} B.
    \end{aligned}
\end{equation*}

We make the following assumptions in the proof,
\begin{enumerate}
    \item Assume the size of the neighborhood for each node is bounded by $0<r<\infty$, then the maximum number of node for $p$-th layer subgraph is bounded by $r^p$. WLOG, let $1\leq |V_p(g_i)| \leq  |V_p(g_{i'})| \leq r^p$;
    \item Assume $h_{p,q}^{i}\|x^i_{p,q}=0$ if $|V_p(g_i)|<q$, i.e. assume non-informative edge information and node features for non-existed nodes in the smaller neighborhood with no links;
\end{enumerate}
From Assumption 2, we add isolated nodes to the smaller neighborhood $V_p(g_i)$ such that the neighborhood size at each hop match. It can be found in our code to compute \Ours gap as pad\_nbhd.
For the following proof, we WLOG assume $|V_p(g_i)|=|V_p(g_{i'})|\;\forall p$. 

First we consider $B$. Recall that, $V_p(g_i)$ is the set of nodes in layer $p$ of $g_i$,
$$\D (g_i, z_i) = \Sum_{p=1}^k \Sum_{q=1}^{|V_p(g_i)|} \log( \sigma_{sig} \left( U^T \tau \left( W^T [f(x^i) \| z_i] \right) \right)),$$
where $\sigma_{sig}(t)=\frac{1}{1+e^{-t}}$ is the sigmoid function, $\tau$ is some $\gamma_{\tau}$-Lipschitz activation function and $[\cdot \| \cdot]$ denotes the concatenation of two vectors. Then we obtain
$$
U^T \tau \left( W^T [f(x^i) \| z_i]\right) = 
U^T \tau\left(W_1^T f(x^i) + W_2^T z_i\right).
$$
Since $\log(\sigma_{sig}(t))=-\log(1+e^{-t})$, which is $1$-Lipschitz, it gives
\begin{equation}
\label{Eq:B}
    \begin{aligned}
    B &\leq \sum_p^k|
    \sum_q^{|V_p(g_{i'})|} \sig_s(U^T \tau\left(W_1^T f(x^i) + W_2^T z_i\right)) - \sig_s(U^T \tau\left(W_1^T f(x^{i'}) + W_2^T z_{i'}\right))
    |\\
    &\leq \gamma_{\tau}\|U\|_2\sum_{p=1}^k 
    \sum_{q=1}^{|V_p(g_{i'})|}(\|W_1^T f(x^i) - W_1^T f(x^{i'})\|_2 
    + \|W_2^T z_i - W_2^T z_{i'}\|_2)
    \\
    &\leq \gamma_{\tau}\|U\|_2 s_w
    \left(
     \sum_{p=1}^k 
    \sum_{q=1}^{|V_p(g_{i'})|}\left[
    \|h^i_{p,q} - h^{i'}_{p,q}\|_2
    + \|x^i_{p,q} - x^{i'}_{p,q}\|_2
    \right] +  \sum_{p=1}^k 
    \sum_{q=1}^{|V_p(g_{i'})|}\|z_i - z_{i'}\|_2
    \right)\\
    &\leq C_1\left(
    \sum_{p=1}^k 
    \sum_{q=1}^{|V_p(g_{i'})|}\left[
    \|h^i_{p,q} - h^{i'}_{p,q}\|_2
    + \|x^i_{p,q} - x^{i'}_{p,q}\|_2
    \right]/\sum_{p=1}^k r^p
    +  \|z_i - z_{i'}\|_2 
    \right)\\
    &=C_1\left(
    I_1 + I_2
    \right)
    \end{aligned}
\end{equation}

We provide the derivation for the unfolding of $\ell$th layer GCN with the centre-node view in Lemma \ref{lemma:z_expansion}. This will be used in the derivation of $I_1$ and $I_2$.
\begin{lemma}
\label{lemma:z_expansion}
For any $\ell=1, \cdots, k$, we have an upper bound for the hidden representation difference between $g_i$ and $g_i^\prime$,
\begin{equation}
    \begin{aligned}
    \|[z_{i}^{(\ell)}]_{k-\ell} - [z_{i'}^{(\ell)}]_{k-\ell}\|_2
    &\leq (\gams c_{\theta})^{\ell}\|\phi(L_{\gi})\|_2^{\ell}
    \|[x_{i}] - [x_{i'}]\|_2\\ &+\frac{(\gams c_{\theta})^{\ell}\|\phi(L_{\gi})\|_2^{\ell}+1}
    {\gams c_{\theta}\|\phi(L_{\gi})\|_2-1}
    \gams c_{\theta}c_z
    \|\phi(L_{\gi}) - \phi(L_{\gid})\|_2.
    \end{aligned}
    \label{equ:z_expansion}
\end{equation}
Specifically, for $\ell=k$, we obtain the expansion for center node embedding $\|[z_{i}^{(k)}]_0 - [z_{i'}^{(k)}]_0\|\equiv \|z_{i} - z_{i'}\|$.
\end{lemma}
\begin{proof}
By Lemma \ref{lemma:OptNorm}, for any $\ell=1,\cdots, k$, the following holds
$$
\|[z_{i}^{(\ell)}]_{k-\ell} - [z_{i'}^{(\ell)}]_{k-\ell}\|_2
    \leq \|[z_{i}^{(\ell)}]_{k-\ell+1} - [z_{i'}^{(\ell)}]_{k-\ell+1}\|_2.
$$
Assume $\max_{\ell}\|[z_{i}^{(\ell)}]\|_2\leq c_z<\infty\;\forall i$, and $\max_{\ell} \|\theta^{(\ell)}\|_2\leq c_{\theta}<\infty$, where $c_{\theta} = \vee_{\ell} s_{\theta^{(\ell)}}$ the largest singular value. 

 Then, for $\ell=1,\cdots, k-1$, we have
\begin{equation}
    \begin{aligned}
    &\|[z_{i'}^{(\ell)}]_{k-\ell} - [z_{i'}^{(\ell)}]_{k-\ell}\|_2 \\
    \leq& \|[\sig([\phi(L_{\gi})]_{k-\ell+1}
    [z_{i}^{(\ell-1)}]_{k-\ell+1}\theta^{(\ell)})
    - \sig([\phi(L_{\gid})]_{k-\ell+1}
    [z_{i'}^{(\ell-1)}]_{k-\ell+1}
    \theta^{(\ell)})]_{k-\ell})\|_2\\
    \leq& \gams\|[\phi(L_{\gi})]_{k-\ell+1}
    [z_{i}^{(\ell-1)}]_{k-\ell+1}
    - [\phi(L_{\gid})]_{k-\ell+1}
    [z_{i'}^{(\ell-1)}]_{k-\ell+1}\|_2
    \|\theta^{(k)}\|_2\\
    \leq& \gams c_{\theta}
    \|[\phi(L_{\gi})]_{k-\ell+1}
    \|_2\|[z_{i}^{(\ell-1)}]_{k-\ell+1}
    - [z_{i'}^{(\ell-1)}]_{k-\ell+1}\|_2
    +\gams c_{\theta}
    \|[z_{i'}^{(\ell-1)}]_{k-\ell+1}
    \|_2
    \|[\phi(L_{\gi})]_{k-\ell+1}
    - [\phi(L_{\gid})]_{k-\ell+1}\|_2\\
    \leq &\gams c_{\theta}\|\phi(L_{\gi})\|_2
    \|[z_{i}^{(\ell-1)}]_{k-\ell+1}
    - [z_{i'}^{(\ell-1)}]_{k-\ell+1}\|_2 +\gams c_{\theta}
    c_z\|\phi(L_{\gi}) - \phi(L_{\gid})\|_2.
    \end{aligned}
\label{Eq:l-zi_diff}
\end{equation}
since $[\phi(L_{\gi})]_{k-\ell+1}$ is the principle submatrix of $\phi(L_{\gi})$. Then
we equivalently write the above equation as $E_{\ell}\leq bE_{\ell-1} + a$, which gives
$$
E_{\ell} \leq b^{\ell}E_1 + \frac{b^{\ell}+1}{b-1}a.
$$
With $[x_{i}] = [z_{i}^{(0)}]_k$, we see the following is only dependant on the structure of $\gi$ and $\gid$, 
\begin{equation*}
    \begin{aligned}
    \|[z_{i'}^{(\ell)}]_{k-\ell} - [z_{i'}^{(\ell)}]_{k-\ell}\|_2
    &\leq (\gams c_{\theta})^{\ell}\|\phi(L_{\gi})\|_2^{\ell}
    \|[x_{i}] - [x_{i'}]\|_2\\ &+\frac{(\gams c_{\theta})^{\ell}\|\phi(L_{\gi})\|_2^{\ell}+1}
    {\gams c_{\theta}\|\phi(L_{\gi})\|_2-1}
    \gams c_{\theta}c_z
    \|\phi(L_{\gi}) - \phi(L_{\gid})\|_2.
    \end{aligned}
\end{equation*}
\end{proof}

Since the the graph Laplacians are normalised, we have $\|\phi(L_{\gi})\|_2\leq c_L<\infty\;\forall i$. In addition, let $$\|x_{p,q}^i - x_{p,q}^{i'}\|_2\leq \sup_i\sup_{p,q} \|x_{p,q}^i - x_{p,q}^{i'}\|_2 
=\sup_i\|f(L_{\gi}) - f(L_{\gid})\|_2
:= \delta_x.$$ 
Hence, $\|[x_{i}] - [x_{i'}]\|_2 \leq \delta_x(\sum_{p=1}^k r^p)^{1/2}:=c_x$.
From Lemma \ref{lemma:z_expansion}, it is clear that we obtain the following at the final layer
\begin{equation}
    \label{Eq:Fnorm}
    \begin{aligned}
    I_2=\|z_i - z_{i'}\|_2 
    &\leq
    (\gams c_{\theta}c_L)^{k}c_x + 
    \frac{(\gams c_{\theta}c_L)^k+1}{\gams c_{\theta} c_L-1}\gams c_{\theta}c_z
    \|\phi(L_{\gi}) - \phi(L_{\gid})\|_2\\
    &\leq
    C(M c_x+\|L_{\gi} - L_{\gid}\|_2)\\
    &= C(M c_x+\lambda_{\max}(L_{\gi} - L_{\gid})^{1/2}).
    \end{aligned}
\end{equation}
since $\phi$ is a linear function for $L$. Indeed, this can be generalised to polynomial function $\phi$ of higher powers.

Now, consider the following term that is related with discriminator $\D$,
$$
I_1=
\sum_{p=1}^k 
    \sum_{q=1}^{|V_p(g_{i'})|}\left[
    \|h^i_{p,q} - h^{i'}_{p,q}\|_2
    + \|x^i_{p,q} - x^{i'}_{p,q}\|_2
    \right]/\sum_{p=1}^k r^p
$$
Firstly, 
we denote $\Tilde{L}_{p,q}$ as the in-degree graph Laplacian derived with the subgraph $g_q$ of $g_i$ centred at $q\in V_p(g_i)$. Different from the encoder, we utilize every node's hidden embedding in the computation. Specifically, $g_q$ is obtained by retrieving links in $g_i$ that connects to the $q$th node in the $p$th layer. This is a principle submatrix of the in-degree graph Laplacian $\Tilde{L}_{g_i}$ of $g_i$. 

Just as defined in \ssym\ref{Subsec:view}, we denote $[h_q^{(p)}]_{\ell}$ as the $p$th layer GCN embedding for nodes in hop 0 to hop $\ell\in [0,p]$ of $g_q$. Note that in this case, $[h_q^{(p)}]_0 = h_q^{(p)}$, which is one row of $[h_i^{(p)}]$, corresponding to the $q$-th node in the neighborhood.
So we may write the first term in $I_1$ as
$$
\sum_{p=1}^k 
    \sum_{q=1}^{|V_p(g_{i'})|}
    \|h_{q}^{(p)} - h_{q'}^{(p)}\|
$$
where $h_{q'}^{(p)}:=h_{p,q}^{i'}$ for short.
In this way, we regard each node $q\in V_p(g_i)$ as the centre node, which allows us to unfold the convolution similarly as expanding the $I_2$ term.
Now, for any $q\in V_k(g_i)$, i.e. when $p=k$, we apply Lemma \ref{lemma:z_expansion} similarly as for $\|z_i - z_{i'}\|_2$. Then,
\begin{equation*}
    \begin{aligned}
    \|h_q^{(k)} - h_{q'}^{(k)}\|
    &\leq 
    (\gams c_{\Tilde{\theta}}c_{\Tilde{L}})^{k}c_x + 
    \frac{(\gams c_{\Tilde{\theta}}c_{\Tilde{L}})^k+1}{\gams c_{\Tilde{\theta}} c_{\Tilde{L}}-1}\gams c_{\Tilde{\theta}}c_h
    \|\phi(\Tilde{L}_{k,q}) - \phi(\Tilde{L}_{k,q'})\|_2\\
    &\leq \Tilde{C}_k(\Tilde{M}_k c_x + \|\phi(\Tilde{L}_{g_i}) - 
    \phi(\Tilde{L}_{g_{i'}})\|_2)\\
    \end{aligned}
\end{equation*}
where $\Tilde{L}_{p,q}$ is the principle submatrix of $\Tilde{L}_{g_{i}}$ and Lemma \ref{lemma:OptNorm} can be applied iin the last inequality. In addition, $\Tilde{C}_k$ and $\Tilde{M}_k$ are taken to be the maximum over any $q\in V_k(g_i)$.
In general, for $q\in V_{p}(g_i)$, $0<p<k$, we have 
$$
\|h_q^{(p)} - h_{q'}^{(p)}\|_2
\leq \Tilde{C}_{p}(\Tilde{M}_{p}c_x + \|\phi(\Tilde{L}_{g_i}) - 
    \phi(\Tilde{L}_{g_{i'}})\|_2)
$$
Take a common upper bound for $\Tilde{C}_p, \Tilde{M}_p$ over $0< p\leq k$, we obtain
\begin{equation*}
    \begin{aligned}
    \sum_{p=1}^k\sum_{q=1}^{|V_p(g_{i'})|}
    \|h_q^{(p)} - h_{q'}^{(p)}\|/
    \sum_{p=1}^k r^p
    &\leq
    \Tilde{C}(\Tilde{M}c_x + \|\Tilde{L}_{g_i} - 
    \Tilde{L}_{g_{i'}}\|_2)\\
    &=\Tilde{C}(\Tilde{M}c_x + \lambda_{\max}(\Tilde{L}_{g_i} - 
    \Tilde{L}_{g_{i'}})^{1/2})
    \end{aligned}
\end{equation*}

In addition, for the other half of $I_1$, we have
$$
\sum_{p=1}^k 
    \sum_{q=1}^{|V_p(g_{i'})|}
    \|x_{p,q}^i - x_{p,q}^{i'}\|_2/\sum_{p=1}^k r^p
    \leq \sup_i\sup_{p,q} \|x_{p,q}^i - x_{p,q}^{i'}\|_2 = \delta_x
    = c_x/(\sum_{p=1}^k r^p)^{1/2}
$$

We can write $\B$ in terms of weights $C$ and $\Tilde{C}$, which is dependant on the activation function $\sigma$, $k$ and $\sup_i \lambda_{\max}(L_{g_i})$. Hence,
\begin{equation*}
    \begin{aligned}
    B 
    &\leq (CM+\Tilde{C}\Tilde{M}+1/(\sum_{p=1}^k r^p))c_x +
    C\lambda_{\max}(L_{\gi} - L_{\gid}) +
    \Tilde{C}\lambda_{\max}(\Tilde{L}_{g_i} - 
    \Tilde{L}_{g_{i'}})\\
    &= M'c_x+ C\lambda_{\max}(L_{\gi} - L_{\gid}) +
    \Tilde{C}\lambda_{\max}(\Tilde{L}_{g_i} - 
    \Tilde{L}_{g_{i'}})
    \end{aligned}
\end{equation*}

Note that the derived $I_1$ for $B$ is the same for $A$, since the node features, edge information and embedded features are bounded by separate terms in Eq.~\ref{Eq:B}. The only difference is given by $I_2$, where a different set of graph Laplacians $L_{g_j}, \,L_{g_{j'}}$ and node features $(x_j)$ are used. Therefore,
\begin{equation*}
    \begin{aligned}
    A
    &\leq M'c_x+ C\lambda_{\max}(L_{g_j} - L_{g_{j'}}) +
    \Tilde{C}\lambda_{\max}(\Tilde{L}_{g_i} - 
    \Tilde{L}_{g_{i'}})
    \end{aligned}
\end{equation*}
 Hence the result.
\end{proof}

Note that, our view of structural information is closely related to graph kernels \cite{bai2016subgraph} and graph perturbation \cite{verma2019stability}. Specifically, our Definition on k-hop ego-graph is motivated by the concept of k-layer expansion sub-graph in \cite{bai2016subgraph}. However, \cite{bai2016subgraph} used the Jensen-Shannon divergence between pairwise representations of sub-graphs to define a depth-based sub-graph kernel, while we depict $G$ as samples of its ego-graphs. In this sense, our view is related to the setup in \cite{verma2019stability}, which derived a uniform algorithmic stability bound of a 1-layer GNN under 1-hop structure perturbation of $G$. 

In the setting of domain adaptation, \cite{ben2007analysis} draws a connection between the difference in the distributions of source and target domains and the model transferability, and learns a transferable model by minimizing such distribution differences. This coincides with our approach of connecting the structure difference of two graphs in terms of k-hop subgraph distributions and the transferability of GNNs in the above theory.

\section{Model Details}
\label{supp:model}
Following the same notations used in the main paper, \Ours consists of a GNN encoder $\Psi$ and a GNN discriminator $\D$. In general, the GNN encoder $\Psi$ and discriminator $\D$ can be any existing GNN models. For each ego-graph and its node features $\{g_i, x_i\}$, the GNN encoder returns node embedding $z_i$ for the center node $v_i$. As mentioned in Eq. 2 in the main paper, the GNN discriminator $\D$ makes edge-level predictions as follows,
\begin{equation}
\label{eq:edge-msg}
    \D(e_{\tilde{v}v}|h_{p,q}^{\tilde{q}}, x^i_{p,q}, z_i) = \sigma \left( U^T \cdot \tau \left( W^T [ h_{p,q}^{\tilde{q}}|| x^i_{p,q}|| z_i] \right) \right),
\end{equation} 
where $e_{\tilde{v}v} \in E(g_i)$ and $h_{p,q}^{\tilde{q}} \in \mathbb{R}^d$ (simplified as $h_p$ in the main paper, same for $x^i_{p,q}=x_q$) is the representation for edge $e_{\tilde{v}v}$ between node $v_{p-1,\tilde{q}}$ in hop $p-1$ and $v_{p,q}$ in hop $p$. 
The prediction relies on the combination of center node embedding $z_i$, destination node feature $x^i_{p,q}$ and source node representation $h_{p,q}^{\tilde{q}}$. And now we describe how we calculate the source node representation in $\D$.


\begin{algorithm*}[ht]
\label{alg}

The GNN encoder $\Psi$ and the GNN discriminator $\mathcal{D}$, k-hop ego graph and features $\{g_i, x_i\}$\;
 /* EGI-training starts */ \\
 \While{$\mathcal{L}_{\Ours}$  not converges}{
  \text{Sample} M ego-graphs $\{ (g_1,x_1),...,(g_M, x_M) \}$ \text{from empirical distribution} $\mathbb{P}$ without replacement, and obtained their positive and negative node embeddings $z_i, z_i'$ through $\Psi$
  \[
  z_i = \Psi(g_i, x_i), z_i' = \Psi(g_i', x_i'), 
  \]
  /* Initialize positive and negative expectation in Eq. 1 in the main paper*/ \\
  $E_{pos} =0, E_{neg} = 0$ \\
  \For{p = 1 to $k$}{
    
    /* Compute JSD on edges at each hop*/ \\
    \For{$e_{(p-1, \Tilde{q})(p, q)} \in E(g_i)$}{
        generate source node embedding $h_{p,q}^{\tilde{q}}$ in Eq.~\ref{eq:update_h} \;
        $E_{\text{pos}} = E_{\text{pos}}$ + $\sigma \left(U^T \cdot \tau \left( W^T [ h_{p,q}^{\tilde{q}} || x^i_{p,q}|| z_i] \right) \right)$ \\
        $E_{\text{neg}} = E_{\text{neg}}$ + $\sigma \left( U^T \cdot \tau \left( W^T [ h_{p,q}^{\tilde{q}} || x^i_{p,q}|| z_i'] \right) \right)$ \\
    }

}
    /* Compute batch loss*/ \\
    $\mathcal{L}_{\text{EGI}} = E_{\text{neg}} - E_{\text{pos}}$ \\
    /* Update $\Psi$, $\D$ */ \\
    $\theta_{\Psi} \xleftarrow{+} - \nabla_{\Psi} \mathcal{L}_{\text{EGI}}$, $\theta_{\D} \xleftarrow{+} - \nabla_{\D}\mathcal{L}_{\text{EGI}}$

 }
 \caption{Pseudo code for training \Ours }
 \label{alg}

\end{algorithm*}


To obtain the source node representation representations $h$, the GNN in discriminator $\D$ operates on a reversed ego-graph $\tilde{g_i}$ while encoder $\Psi$ performs forward propagation on $g_i$. The discriminator GNN starts from the center node $v_i$ and compute the hidden representation $m_{p-1,\tilde{q}}$ for node $v_{p-1, q}$ at each hop.  We denote the source node at $p-1$ hop as $\tilde{q} \in \Tilde{Q}_{p,q}, \Tilde{Q}_{p,q} = \{\Tilde{q}: v_{p-1,\Tilde{q}}\in V_{p-1}(g_i), e_{(p-1, \Tilde{q})(p, q)}\in E(g_i)\}$. Although $h_{p,q}$ is calculated as node embedding, in reversed ego graph $\tilde{g_i}$, node only has one incoming edge. Thus, we can also interpret $h_{p,q}^{\tilde{q}}$ as the edge embedding as it combines source node's hidden representation $m_{p-1,\Tilde{q}}$ and destination node features $x_{p,q}$ as follows,  
\begin{equation}
\label{eq:update_h}
    h_{p,q}^{\tilde{q}} = \text{ReLU} \left( W_{p}^T \left(  m_{p-1,\Tilde{q}} + x^i_{p,q} \right)  \right), \;
    m_{p-1, \Tilde{q}} = \frac{1}{|\Tilde{Q}_{{p-1},\Tilde{q}}|}
\sum_{q'\in \Tilde{Q}_{{p-1}\Tilde{q}}} h_{p-1,\Tilde{q}}^{q'}
\end{equation} 

When $p=1$, every edge origins from the center node $v_i$ and $m_{0,q'}$ is the center node feature $x_{v_i}$. Note that we the elaborated aggregation rule is equivalent as layer-wise propagation rules (different in-degree matrix for each $h_{p,q}$) of \Ours earlier in \ssym\ref{Subsec:view}.

In every batch, we sample a set of ego-graphs and their node features $\{g_i, x_i\}$. During the forward pass of encoder $\Psi$, it aggregates from neighbor nodes to the center node $v_i$. Then, the discriminator calculates the edge embedding in Eq.~\ref{eq:update_h} from center node $v_i$ to its neighbors and make edge-level predictions-- \textit{fake} or \textit{true}. Besides training framework Figure 2 in the main paper, the algorithm \Ours is depicted in Algorithm \ref{alg}.

We implement our method and all of the baselines using the same encoders $\Psi$: 2-layer GIN~\cite{xu2019powerful} for synthetic and role identification experiments, 2-layer GraphSAGE~\cite{hamilton2017inductive} for the relation prediction experiments. 
We set hidden dimension as 32 for both synthetic and role identification experiments, For relation prediction fine-tuning task, we set hidden dimension as 256. 
We train \Ours in a mini-batch fashion since all the information for encoder and discriminators are within the k-hop ego-graph $g_i$ and its features $x_i$. Further, we conduct neighborhood sampling and set maximum neighbors as 10 to speed up the parrallel training. 
The space and time complexity of \Ours is $O(BN^K)$, where $B$ is the batch size, $N$ is the number of the neighbors and k is the number of hops of ego-graphs. Notice that both the encoder $\Psi$ and discriminator $\mathcal{D}$ propagate message on the k-hop ego-graphs, so the extra computation cost of $\mathcal{D}$ compared with a common GNN module is a constant multiplier over the original one. The scalability of \Ours on million scale YAGO network is reported in section \ref{sec:real-experiment}.

\subsection{Transfer Learning Settings}
\label{sec:transfer-learning-setup}
The goal of transfer learning is to train a model on a dataset or task, and use it on another. In our graph learning setting, we focus on training the model on one graph and using it on another. In particular, we focus our study on the setting of \textit{unsupervised-transfering}, where the model learned on the source graph is directly applied on the target graph without \textit{fine-tuning}. We study this setting because it allows us to directly measure the transferability of GNNs, which is not affected by the fine-tuning process on the target graph. In other words, the fine-tuning process introduces significant uncertainty to the analysis, because there is no guarantee on how much the fine-tuned GNN is different from the pre-trained one. Depending on specific tasks and labels distributions on the two graphs, the fine-tuned GNN might be quite similar to the pre-trained one, or it can be significantly different. It is then very hard to analyze how much the pre-trained GNN itself is able to help. Another reason is about efficiency. The fine-tuning of GNNs requires the same environment set-up and computation resource as training GNNs from scratch, although it may take less training time eventually if pre-training is effective. It is intriguing if this whole process can be eliminated when we guarantee the performance with unsupervised-transfering.

In our experiments, we also study the setting of transfer learning with fine-tuning, particularly on the real-world large-scale YAGO graphs.
Since we aim to study the general transferability of GNNs not bounded to specific tasks, we always pre-train GNNs with the unsupervised pre-training objective on source graphs. Then we enable two types of fine-tuning. The first one is \textit{post-fine-tuning} ($\mathcal{L} = \mathcal{L}_{s}$), where the pre-trained GNNs are fine-tuned with the supervised task specific objective $\mathcal{L}_{s}$ on the target graphs. The second on is \textit{joint-fine-tuning} ($\mathcal{L} = \mathcal{L}_{s} + \mathcal{L}_{u}$), where pre-training is the same, but fine-tuning is done \wrt~both the pre-training objective $\mathcal{L}_{u}$ and task specific objective $\mathcal{L}_{s}$ on target graphs in a semi-supervised learning fashion. 
The unsupervised pre-training objective $\mathcal{L}_{u}$ of \Ours is Algorithm 1, while those of the compared algorithms are as defined in their papers. The supervised fine-tuning objective $\mathcal{L}_{s}$ is the same as in the DistMult paper \cite{yang2014embedding} for all algorithms. 

\section{Additional Experiment Details}
\subsection{Synthetic Experiments}
\label{supp:exp_syn}
\xhdr{Data.}
As mentioned in the main paper, we use two traditional graph generation models for synthetic data generation: (1) barabasi-albert graph~\cite{barabasi1999emergence} and (2) forest-fire graph~\cite{leskovec2005graphs}. We generate 40 graphs each with 100 nodes with each model. We control the parameters of two models to generate two graphs with different ego-graph distributions. Specifically, we set the number of attached edges as 2 for barabasi-albert model and set $p_\text{forward}=0.4$, $p_\text{backward}=0.3$ for forest-fire model. In Figure~\ref{fig:syn-a} and \ref{fig:syn-b}, we show example graphs from two families in our datasets. They have the same size but different appearance which leads to our study on the transferability gap $\Delta_{\D}$(F, F) and $\Delta_{\D}$(B, F) in Table 1 in the main paper. The accuracy of this task defined as the percentage of nearest neighbors for target node in the embedding space $z = \Psi(\cdot)$ that are structure-equivalent, \ie \#correct k-nn neighbors / \#ground truth equivalent nodes.

\begin{figure}[ht]
  \subfloat[Forest-fire graph example]{
	\begin{minipage}[c][1\width]{
	   0.3\textwidth}
	   \label{fig:syn-a}
	   \centering
	   \includegraphics[width=1\textwidth]{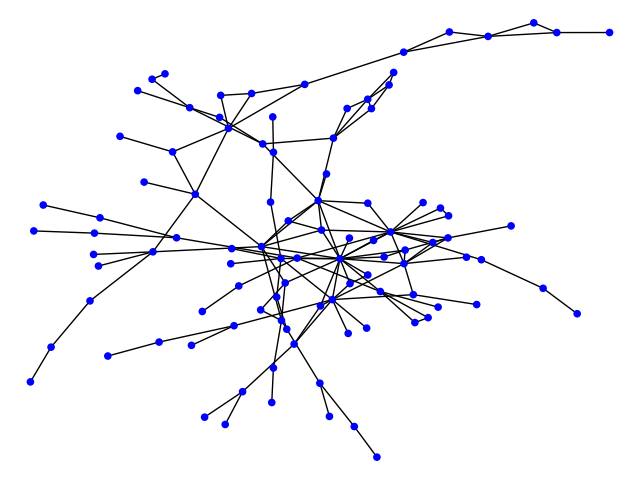}
	\end{minipage}}
 \hfill 	
  \subfloat[Barabasi-albert graph example]{
	\begin{minipage}[c][1\width]{
	   0.3\textwidth}
	   \label{fig:syn-b}
	   \centering
	   \includegraphics[width=1\textwidth]{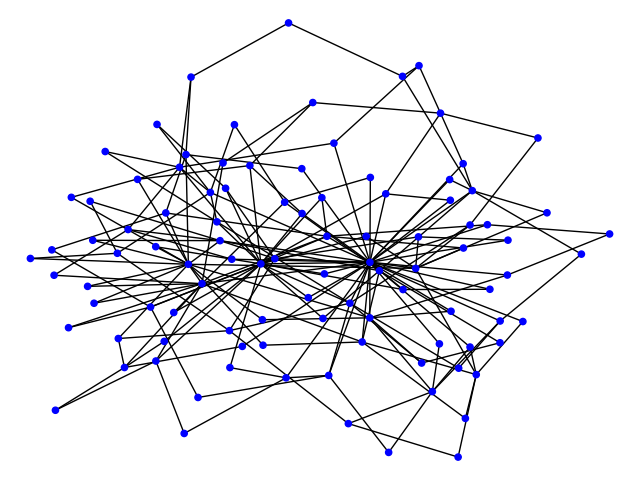}
	\end{minipage}}
 \hfill	
  \subfloat[structural label example]{
	\begin{minipage}[c][1\width]{
	   0.3\textwidth}
	   \label{fig:syn-c}
	   \centering
	   \includegraphics[width=1\textwidth]{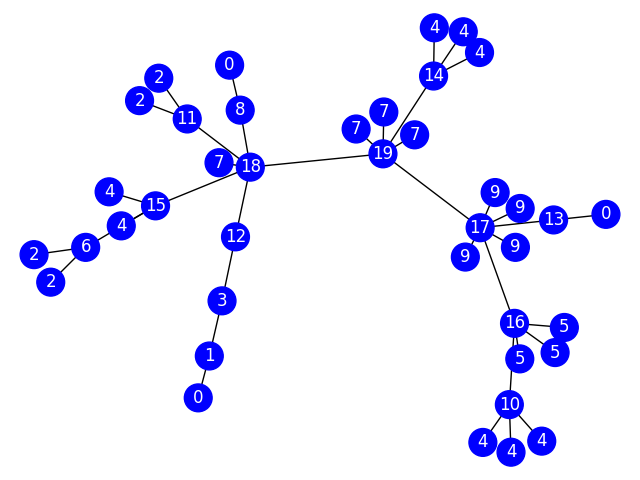}
	\end{minipage}}
\caption{Visualizations of the graphs and labels we use in the synthetic experiments.}
\label{fig:syn}
\end{figure}

\xhdr{Results.}
The structural equivalence label is obtained by a 2-hop WL-test \cite{weisfeiler1968reduction} on the ego-graphs. If two nodes have the same 2-hop ego-graphs, they will be assigned the same label.
In the example of Figure~\ref{fig:syn-c}, the nodes labeled with same number (\eg 2, 4) have the isomorphic 2-hop ego-graphs. Note that this task is exactly solvable when node features and GNN architectures are powerful enough like GIN~\cite{xu2019powerful}. In order to show the performance difference among different methods, we set the length of one-hot node degree encoding to 3 (all nodes with degrees higher than 3 have the same encoding). Here, we present the performance comparison with different length of degree encodings (d) in Table~\ref{tab:synthetic1}. When the capacity of initial node features is high (d=10), the transfer learning gap diminishes between different methods and different graphs because the structural equivalence problem can be exactly solved by neighborhood aggregations. However, when the information in initial node features is limited, the advantage of \Ours in learning and transfering the graph structural information is obvious. In Table~\ref{tab:synthetic2}, we also show the performance of different transferable and non-transferable features discussed after Definition 4.3 in the main paper, \ie node embedding~\cite{perozzi2014deepwalk} and random feature vectors. The observation is similar with Table 1 in the main paper: the transferable feature can reflect the performance gap between similar and dissimilar graphs while non-transferable features can not. 

In both Table~\ref{tab:synthetic1} and \ref{tab:supps-role-classification} here as well as Table 1 in the main paper, we report the structural difference among graphs in the two sets ($\bar{d}$) calculated \wrt~the term $\Delta_\D (G_a, G_b)$ on the RHS of Theorem 4.1 in the main paper. This indicates that the Forest fire graphs are structurally similar to the other Forest fire graphs, while less similar to the Barabasi graphs, as can be verified from Figure \ref{fig:syn-a} and \ref{fig:syn-b}. Our bound in Theorem 4.1 then tells us that the GNNs (in particular, \Ours) should be more transferable in the F-F case than B-F. This is verified in Table~\ref{tab:synthetic1} and~\ref{tab:synthetic2} when using the transferable node features of degree encoding with limited dimension (d=3) as well as DeepWalk embedding, as \Ours pre-trained on Forest fire graphs performs significantly better on Forest fire graphs than on Barabasi graphs (with +0.094 and +0.057 differences, respectively).

\begin{table*}[h]
\begin{center}
\caption{Synthetic experiments of identifying structural-equivalent nodes with different degree encoding dimensions.
}
\label{tab:synthetic1}
\scalebox{0.85}{
\begin{tabular}{lll|c|c|c|c|c|c|c|c}
\toprule
\multicolumn{3}{c|}{\multirow{2}{*}{Method}}                             & \multicolumn{3}{c|}{\textbf{\#dim degree encoding d = 3}}            & \multicolumn{3}{c|}{\textbf{\# dim degree encoding d = 10}} & \multicolumn{2}{c}{\textbf{structural difference   }}                                                                                                                   \\
\multicolumn{3}{c|}{}                                                    & F-F & B-F & $\delta$(acc.) & F-F & B-F & $\delta$(acc.)  & $\Delta_{D}$(F,F) & $\Delta_{D}$(B,F)\\ \midrule
\multicolumn{2}{l}{}                              & GCN (untrained)                    &  0.478       &   0.478       &  /       &  0.940 & 0.940 & /  & \multirow{5}{*}{0.752}  &   \multirow{5}{*}{0.883}                             \\
\multicolumn{2}{l}{}                              & GIN (untrained)                    &  0.572       &   0.572       &  /       &  0.940 & 0.940 & / &  &                             \\
\multicolumn{2}{l}{}                              & VGAE (GIN)                   &   0.498     &   0.432  &  +0.066       &    0.939      & 0.937  & 0.002 & &                          \\
\multicolumn{2}{l}{}                              & DGI (GIN)                    &  0.578       &    0.591     &  -0.013      & 0.939    &  0.941   & -0.002 &    &                    \\
\multicolumn{2}{l}{}                              & \Ours (GIN)                    & \textbf{0.710}       &    0.616        &   +0.094    &  0.942     & 0.942 &    0   & &                  \\
\bottomrule

\end{tabular}
}
\end{center}
\end{table*}

\begin{table*}[h]
\begin{center}
\caption{Synthetic experiments of identifying structural-equivalent nodes with different transferable and non-transferable features.
}
\label{tab:synthetic2}
\scalebox{0.85}{
\begin{tabular}{lll|c|c|c|c|c|c|c|c}
\toprule
\multicolumn{3}{c|}{\multirow{2}{*}{Method}}                             & \multicolumn{3}{c|}{\textbf{DeepWalk embedding}}            & \multicolumn{3}{c|}{\textbf{random vectors}} & \multicolumn{2}{c}{\textbf{structural difference   }}                                                                                                                   \\
\multicolumn{3}{c|}{}                                                    & F-F & B-F & $\delta$(acc.) & F-F & B-F & $\delta$(acc.)  & $\Delta_{D}$(F,F) & $\Delta_{D}$(B,F)\\ \midrule
\multicolumn{2}{l}{}                              & GCN (untrained)                    &  0.658       &   0.658        &  /       &  0.246 & 0.246 & /  & \multirow{5}{*}{0.752}  &   \multirow{5}{*}{0.883}                             \\
\multicolumn{2}{l}{}                              & GIN (untrained)                    &  0.663       &   0.663       &  /       &  0.520 &  0.520 & / &  &                             \\
\multicolumn{2}{l}{}                              & GVAE (GIN)                   &   0.713     &   0.659  &  +0.054       &    0.266      & 0.264  & 0.002 & &                          \\
\multicolumn{2}{l}{}                              & DGI (GIN)                    &  0.640       &    0.613     &  +0.027      & 0.512         & 0.576 &   -0.064 & &                       \\
\multicolumn{2}{l}{}                              & \Ours (GIN)                    & \textbf{0.772}       &    0.715        &   +0.057    &    0.507   & 0.485 &    +0.022    & &                  \\
\bottomrule

\end{tabular}
}
\end{center}
\end{table*}

\subsection{Real-world Role Identification Experiments}
\label{supp:exp_airport}
\xhdr{Data.}
\begin{table}[t]
    \caption{Overall Dataset Statistics}
    \label{tab:dataset-stats}
    \centering
    \begin{tabular}{c|c c c c }
    \toprule
         Dataset & \# Nodes & \# Edges & \# Classes \\
         \midrule 
         Europe & 399 & 5,995  & 4   \\
         USA & 1,190  &   13,599 & 4 \\
         Brazil & 131  &   1,074 & 4 \\
         Gene & 9,228  &   57,029 & 2 \\
         \bottomrule
    \end{tabular}

\end{table}
We report the number of nodes, edges and classes for both airport and gene dataset. The numbers for the Gene dataset are the aggregations of the total 52 gene networks in the dataset. For the three airport networks, Figure~\ref{fig:power-law} shows the power-law degree distribution on log-log scale. 
The class labels are between 0 to 3 reflecting the level of the airport activities \cite{ribeiro2017struc2vec}.
For the Gene dataset, we matched the gene names in the TCGA dataset~\cite{yang2019conditional} to the list of transcription factors on wikipedia\footnote{\url{https://en.wikipedia.org/wiki/Transcription_factor}}. 75\% of the genes are marked as 1 (transcription factors) and some gene graphs have extremely imbalanced class distributions. So we conduct experiments on the relatively balanced gene graphs of brain cancers (Figure 2 in the main paper). Both datasets do not have organic node attributes. The role-based node labels are highly relevant to their local graph structures, but are not trivially computable such as from node degrees.

\begin{figure}[ht]
  \subfloat[Europe airport log-log plot]{
	\begin{minipage}[c][1\width]{
	   0.32\textwidth}
	   \label{fig:euro}
	   \centering
	   \includegraphics[width=1\textwidth]{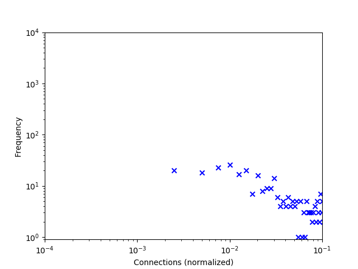}
	\end{minipage}}
 \hfill 	
  \subfloat[USA airport log-log plot]{
	\begin{minipage}[c][1\width]{
	   0.32\textwidth}
	   \label{fig:usa}
	   \centering
	   \includegraphics[width=1\textwidth]{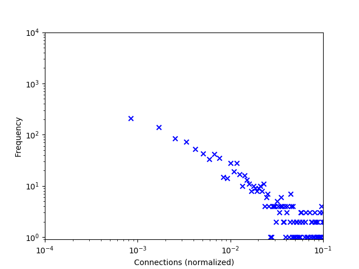}
	\end{minipage}}
 \hfill	
  \subfloat[Brazil airport log-log plot]{
	\begin{minipage}[c][1\width]{
	   0.32\textwidth}
	   \label{fig:brazil}
	   \centering
	   \includegraphics[width=1\textwidth]{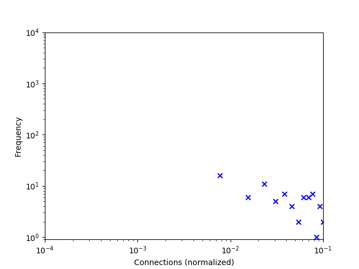}
	\end{minipage}}
\caption{Visualizations of power-law degree distribution on three airport dataset.}
\label{fig:power-law}
\end{figure}

\xhdr{Results.}
As we can observe from Figure~\ref{fig:power-law}, the three airport graphs have quite different sizes and structures (\eg, regarding edge density and connectivity pattern). Thus, the absolute classification accuracy in both Table 2 in the main paper and Table~\ref{tab:supps-main-table2} here varies across different graphs. However, as we mention in the main paper, the structural difference we compute based on Eq.~5 in Theorem 3.1 is close among the Europe-USA and Europe-Brazil graph pairs (0.869 and 0.851), which leads to close transferability of \Ours from Europe to USA and Brazil. This indicates the effectiveness of our view over essential structural information. In Table~\ref{tab:supps-main-table2}, we also provide the comparison between transferable and non-transferable feature on airport dataset. As expected, \Ours only yields good transferability with transferable features.

\begin{table*}[h]
\begin{center}

\caption{Results of role identification with direct-transfering on the Airport dataset (Table 2, main paper). The performance reported (\%) are the average over 100 runs. We set all node features same as non-transferable features.}
\label{tab:supps-main-table2}
\scalebox{0.9}{
\begin{tabular}{lll|c|c|c|c|c|c}
\toprule
\multicolumn{3}{c|}{\multirow{2}{*}{Method}}                             & \multicolumn{2}{c|}{Europe (source)} & \multicolumn{2}{c|}{USA (target)} & \multicolumn{2}{c}{Brazil (target)}                                                                                                            \\
\multicolumn{3}{c|}{}                                                    & node degree & same feat. & node degree & same feat. & node degree & same feat. \\ \midrule                    

\multicolumn{2}{l}{}                              & features &  52.81         &   20.59       &   55.67      &  20.22 & 67.11 & 19.63 \\


\multicolumn{2}{l}{}                              & GIN (untrained)                    &  55.75   &   53.88       &    61.56      &  58.32  & 70.04 & 70.37\\
\multicolumn{2}{l}{}                              & GVAE                  &  53.90    &   21.12      &  55.51       &  22.39 & 66.33 & 17.70                            \\
\multicolumn{2}{l}{}                              & DGI                   &  57.75      &    22.13     &  54.90      &  21.76 & 67.93 & 18.78                          \\
\multicolumn{2}{l}{}                              & MaskGNN                   &  56.37      &  55.53        &      60.82 & 54.64 & 66.71 &74.54                     \\
\multicolumn{2}{l}{}                              & ContextPredGNN                    &  52.69       &   49.95       &  50.38      & 54.75  & 62.11 & 70.66                       \\
\multicolumn{2}{l}{}                              & Structural Pre-train                   & 56.00       &   53.83      &    62.17    & 57.49  & 68.78 &  72.41                     \\
\multicolumn{2}{l}{}                              & MVC                    &    53.16    &   51.69     &  59.66  & 50.42   &  66.07   & 61.55                    \\
\multicolumn{2}{l}{}                              & GMI                   & 58.12       &   46.25      &   59.28     & 47.64  & 73.07   &  62.96                     \\
\multicolumn{2}{l}{}                              & \Ours (GIN)                   & \textbf{59.15}$^{**}$       &  54.98 &\textbf{64.55}$^{**}$      &   57.40    &  \textbf{73.15} & 70.00                       \\
\bottomrule

\end{tabular}
}
\end{center}
\end{table*}

Besides that, the results present in Table~\ref{tab:supps-role-classification} are the accuracy of GNNs directly trained and evaluated on each network without transfering. Therefore, only the Europe column has the same results as in Table 2 in the main paper, while the USA and Brazil columns can be regarded as providing an upper-bound performance of GNN transfered from other graphs. As we can see, \Ours gives the closest results from Table 2 (main paper) to Table~\ref{tab:supps-role-classification} here, demonstrating the its plausible transferability. The scores are so close, showing a possibility to skip fine-tuning when the source and target graphs are similar enough. Also note that, although the variances are pretty large (which is also observed in other works like \cite{ribeiro2017struc2vec} since the networks are small), our t-tests have shown the improvements of \Ours to be significant.

\begin{table*}[h]
\begin{center}

\caption{Role identification that identifies structurally similar nodes on real-world networks. The performance reported are the average and standard deviation for 10 runs. Our classification accuracy on three datasets all passed the t-test (p$<$0.01) with the second best result in the table.}
\label{tab:supps-role-classification}
\scalebox{0.9}{
\begin{tabular}{lll|c|c|c}
\toprule
\multicolumn{3}{c|}{\multirow{2}{*}{Method}}                             & \multicolumn{3}{c}{\textbf{Airport~\cite{ribeiro2017struc2vec}}}                                          \\
\multicolumn{3}{c}{}                                                    & Europe & USA & Brazil \\ \midrule

\multicolumn{2}{l}{}                              & node degree                    &    52.81\% $\pm$ 5.81\%      &   55.67\% $\pm$ 3.63\%      &   67.11\% $\pm$ 7.58\%     \\
\multicolumn{2}{l}{}                              & GCN (random-init)                    &    52.96\% $\pm$ 4.51\%    &   56.18\% $\pm$ 3.82\%       &    55.93\% $\pm$ 1.38\%  \\
\multicolumn{2}{l}{}                              & GIN (random-init)                    &    55.75\% $\pm$ 5.84\%     &   62.77\% $\pm$ 2.35\%       &  69.26\% $\pm$ 9.08\%   \\
\multicolumn{2}{l}{}                              & GVAE (GIN)                   &  53.90\% $\pm$ 4.65\%     &   58.99\% $\pm$ 2.44\%      &  55.56\% $\pm$ 6.83\%     \\
\multicolumn{2}{l}{}                              & DGI (GIN)                    &    57.75\% $\pm$ 4.47\%     &    62.44\% $\pm$ 4.46\%      &  68.15\% $\pm$ 6.24\%       \\

\multicolumn{2}{l}{}                              & Mask-GIN                 &    56.37\% $\pm$ 5.07\%     &  63.78\% $\pm$ 2.79\%     & 61.85\% $\pm$ 10.74\%  \\
\multicolumn{2}{l}{}                              & ContextPred-GIN                   &    52.69\% $\pm$ 6.12\%     &   56.22\% $\pm$ 4.05\%       &  58.52\% $\pm$ 10.18\%    \\
\multicolumn{2}{l}{}                              & Structural Pre-train                   &   56.00\% $\pm$ 4.58\%    &   62.29\% $\pm$ 3.51\%      &    71.48\% $\pm$ 9.38 \% \\

\multicolumn{2}{l}{}                              & MVC                    &    53.16\% $\pm$ 4.07\%        &  62.81 \% $\pm$ 3.12\%   &  67.78 \% $\pm$ 4.79\%                \\

\multicolumn{2}{l}{}                              & GMI                   & 58.12 \% $\pm$ 5.28\%       &  63.36 \% $\pm$ 2.92\%     & 73.70\% $\pm$ 4.21\%                      \\

\multicolumn{2}{l}{}                              & \Ours (GIN)                    &   \textbf{59.15\% $\pm$ 4.44}\%     &  \textbf{65.88\% $\pm$ 3.65}\%       &   \textbf{74.07\% $\pm$ 5.49}\%    \\
\bottomrule

\end{tabular}
}
\end{center}
\end{table*}

\begin{table}[]
    \caption{dataset statistics and running time of \Ours}
    \label{tab:dataset-stats-2}
    \centering
\scalebox{0.9}{
    \begin{tabular}{c|c c c c c}
    \toprule
         Dataset & \# Nodes & \# Edges & \# Relations & \# Train/Test & Training time per epoch \\
         \midrule 
         YAGO-Source & 579,721  &   2,191,464 & / &  / & 338 seconds\\
         YAGO-Target & 115,186 & 409,952  & 24  & 480/409,472 & 134 seconds\\
         
         \bottomrule
    \end{tabular}
}
\end{table}

\subsection{Real-world large-scale Relation Prediction Experiments}
\label{supp:exp_yago}
\label{sec:real-experiment}
\xhdr{Data.} As shown in Table~\ref{tab:dataset-stats-2}, the source graph we use to pre-train GNNs is the full graph cleaned from the YAGO dump \cite{suchanek2007yago}, where we assume the relations among entities are unknown. The target graph we use is a subgraph uniformed sampled from the same YAGO dump (we sample the nodes and then include all edges among the sampled nodes). The similar ratio between number of nodes and edges can be observed in Table~\ref{tab:dataset-stats-2}.
On the target graph, we also have the access to 24 different relations~\cite{shi2018easing} such as \textit{isAdvisedBy}, \textit{isMarriedTo} and so on. Such relation labels are still relevant to the graph structures, but the relevance is lower compared with the structural role labels. We use the 256-dim degree encoding as node features for pre-training on the source graph, then we use the 128-dim positional embedding generated by LINE~\cite{tang2015line} for fine-tuning on the target graph, to explicitly make the features differ across source and target graphs.

\xhdr{Results.} In Section~\ref{sec:transfer-learning-setup}, we introduced two different types of fine-tuning, \ie , \textit{post-fine-tuning} and \textit{joint-fine-tuning}. For both types of fine-tuning, we add one feature encoder $\mathcal{E}$ before feeding it into the GNNs for two purposes. First, the target graph fine-tuning feature usually has different dimensions with the pre-training features, such as the node degree encoding we use. Second, the semantics and distributions of fine-tuning features can be different from pre-training features. The feature encoder aims to bridge the gap between feature difference in practice. The supervised loss used in this experiment is the same as in DistMult~\cite{yang2014embedding}. In particular, the bilinear score function is calculated as $s(h,r,t) = z_h^T M_r z_t$, where $M_r$ is a diagonal matrix for each relation $r$, $z_h$ and $z_t$ the the embedding of GNN encoder $\Psi$ for head and tail entities. 
The experiments were run on GTX1080 with 12G memories. We report the average training time per epoch of our algorithm in pre-training and fine-tuning stage in Table~\ref{tab:dataset-stats-2} as well. 
The pre-training and fine-tuning takes about 40 epochs and 10 epochs to converge, respectively. 
In Table \ref{tab:dataset-stats-2}, we also present the per-epoch training time of \Ours. \Ours takes about 338 seconds per epoch for optimizing the ego-graph information maximization objective on YAGO-source.
As we can see, fine-tuning also takes significant time compared to pre-training, which strengthens our arguments about avoiding or reducing fine-tuning through structural analysis. We implement all baselines within the same pipeline, and the running times are all in same scale.

\subsection{Parameter study}
\label{supp:exp_para}
In this section, we provide additional parameter analysis towards proposed EGI model - choices of $k$, and efficiency study on EGI gap $\Delta_\D$ - sampling frequencies.

\xhdr{Performance of different size of ego-graphs.}
In our Theorem~\ref{theo:main} and EGI algorithm (Eq.~\ref{eq:EGI}), number of hops $k$ determines the size of ego-graphs. In principle, $k$ may affect the transferability of EGI in two ways: (1) larger $k$ may make the EGI model (both center node encoder $\Psi$ and neighbor node encoder $\Phi$) more expressive (better precision) and the EGI gap $\Delta_\D$ more accurate (better predictiveness); (2) However, the GNN encoders may suffer from the over-smoothing problem and the computations may suffer from more noises. 
Therefore, it is hard to determine the influence of $k$ without empirical analysis. As we can observe in , when $k=1$ or $k=3$, the classification accuracy of the source graph is worse than $k=2$, likely because the GNN encoder is either less powerful or over-smoothed. As a result,  $k=2$ obtains the best transferability to both the USA and Brazil networks. When $k=3$, $\Delta_\D$ likely accounts for too subtle/noisy ego-graph differences and may become less effective in predicting the transferability. Therefore, we choose $k=2$ to conduct experiments in main paper.
\begin{table}
\begin{center}
\caption{Comparison of EGI with different $k$. Accuracy and EGI gap $\Delta_\D$ are reported.}
\label{tab:supps-differnt-k}
\scalebox{0.9}{
\begin{tabular}{lll|c|c|c|c|c}
\toprule
\multicolumn{3}{c|}{\multirow{2}{*}{}}                             & Europe (source) & \multicolumn{2}{c|}{USA (target)} & \multicolumn{2}{c}{Brazil (target)}                                                                                                            \\
\multicolumn{3}{c|}{}                                                    & acc. & acc.& $\Delta_\D$ & acc.& $\Delta_\D$ \\ \midrule                    

\multicolumn{2}{l}{}                              & EGI (k=1) &  58.25         &   60.08       &   0.385      &  60.74 & 0.335  \\


\multicolumn{2}{l}{}                              & EGI (k=2)                    &  59.15   &   64.55       &    0.869      &  73.15  & 0.851 \\
\multicolumn{2}{l}{}                              & EGI (k=3)                  &  57.63    &   64.12      &  0.912       &  72.22 & 0.909   \\
\bottomrule

\end{tabular}
}
\end{center}
\end{table}

\xhdr{Precision of $\Delta_\mathcal{D}$ under different sampling frequencies.} In Table~\ref{tab:supps-sample-frequency}, we present the estimated $\Delta_\D$ versus sampling frequency for 10 runs on airport dataset. A theoretical study on its convergence could be an interesting future direction. As we can observe, large sample frequency leads to more accurate and robust estimation of $\Delta_\D$. Between Europe and USA, although 100 pairs of ego-graphs are only equivalent as 2.1\% of the total pair-wise enumerations, the estimated  $\Delta_\D$ is pretty close.

\begin{table}[h!]
\begin{center}
\caption{EGI gap $\Delta_\D$ on airport dataset with different sampling frequencies.}
\label{tab:supps-sample-frequency}
\scalebox{0.9}{
\begin{tabular}{l|c|c}
\toprule
Sampling frequency & $\Delta_\D$(Europe, USA)& $\Delta_\D$(Europe, Brazil)\\
\midrule
100 pairs & 0.872$\pm$0.039 & 0.854$\pm$0.042\\
1000 pairs & 0.859$\pm$0.012 & 0.848$\pm$0.007 \\
All pairs & 0.869 $\pm$0.000 & 0.851 $\pm$0.000 \\
\bottomrule

\end{tabular}
}
\end{center}
\end{table}

\end{document}



\appendix

\section{Theory Details}
\label{supp:proof}
From the $\calL_{\textsc{Egi}}$ objective, we have assumed  $g_i\overset{\iid}{\sim} \mu$, $x_i\overset{\iid}{\sim} \nu$, and $(g_i, x_i)\overset{\iid}{\sim} p$, for $(g_i, x_i)\in\mathcal{G}\times \calX$. Then for a sample $\{(g_i, x_i)\}_i$, we have access to the empirical distributions of the three. In the procedure of evaluating the objective, we sample uniformly.

Note that, in Eq.~2 of the main paper, we used a $d$ dimensional hidden state $h_{p}$ to denote an edge's source node representation and $x_q$ as destination node features from the structure of the ego-graph and the associated source node feature with GNN. In our proof, we denote $v_{p,q}$ as the $q$-th node in the $p$-th layer of the ego-graph and let $h_{p,q} = h_p$ and $x_{p,q}=x_q$.
For simplicity, in i-th layer, we denote $f(x^i)=h_{p,q}^{i}\|x^i_{p,q}$, where $[\cdot \| \cdot]$ is the concatenation operation.

Finally, as we are considering GNN with $k$ layers, its computation only depends on the k-hop ego-graphs of $G$, which is an important consideration when unfolding the embedding of GNN at a centre node with Lamma \ref{lemma:OptNorm}.

\begin{lemma}{}
\label{lemma:OptNorm}
For any $A\in \RR^{m\times n}$, where $m\geq n$, and $A$ is a submatrix of $B\in \RR^{m'\times n}$, where $m<m'$, we have
$$
\|A\|_2\leq \|B\|_2.
$$
\end{lemma}
\begin{proof}
Note that, $AA^T$ is a principle matrix of $BB^T$, \ie, $AA^T$ is obtained by removing the same set of rows and columns from $BB^T$. Then, by Eigenvalue Interlacing Theorem \cite{hwang2004cauchy} and the fact that $A^TA$ and $AA^T$ have the same set of non-zero singular values, the matrix operator norm satisfies
$\|A\|_2=\sqrt{\lambda_{\max}(A^TA)} = \sqrt{\lambda_{\max}(AA^T)}\leq 
\sqrt{\lambda_{\max}(BB^T)} =
\|B\|_2$.
\end{proof}

\subsection{Center-node view of GCN}
\label{Subsec:view}
Recall that $V_p(g_i)$ denotes the set of nodes in the $p$th hop of k-hop ego-graph $g_i$, and $x_{p,q}^i$ denotes the feature for $q$th node in $p$th hop of $g_i$, for any $p=0,\dots, k;\; q=1, \dots, |V_p(g_i)|$. Similarly, $V(g_i)$ denotes the entire set of nodes in $g_i$. In each ego-graph sample $\{g_i, x_i\}$, the layer-wise propagation rules for the center node embedding in encoder $\Psi$ and discriminator $\D$ can be written into the form of GCN as followed
$$
Z^{(l)} = \text{ReLU} (D^{-\frac{1}{2}}(I+A)D^{-\frac{1}{2}}Z^{(l-1)} \theta^{(l)} )
$$
where $A$ is adjacency matrix of $G$. $I$ adds the self-loop and $D_{ii} = \sum_j A_{ij}$ is the degree matrix. 

We focus on the center node's embedding obtained from a $k$-layer GCN with 1-hop polynomial filter $\phi(L) = Id - L$. 
Inspired by the characterization of GCN from a node-wise view in \cite{verma2019stability}, we similarly denote the embedding of node $x_i\;\forall i=1,\cdots, n$ in the final layer of the GCN as 
$$
z_i^{(k)} = z_i = \Psi_{\theta}(x_i)=\sigma(\sum_{j\in\N(x_i)} 
e_{i j}{z_j^{(k-1)}}^T\theta^{(k)})\in \RR^d,
$$
where $e_{i j}=[\phi(L)]_{i j}\in\RR$ the weighted link between node $i$ and $j$; and $\theta^{(k)}\in\RR^{d\times d}$ is the weight for the $k$th layer sharing across nodes. Then $\theta=\{\theta^{(\ell)}\}_{\ell=1}^k$. We may denote $z_i^{(\ell)}\in \RR^d$ similarly for $\ell=1,\cdots, k-1$, and  $z_i^{0}=x_i\in\RR^{d}$ as the node feature of center node $x_i$. 
With the assumption of GCN in the statement, it is clear that only the k-hop ego-graph $\gi$ centered at $x_i$ is needed to compute $z_i^{(k)}$ for any $i=1,\cdots, n$ instead of the whole of $G$. Precisely, $p$-hop of subgraph corresponds to the $\ell=(k-p)$th layer in the model. 

With such observation in mind,
let us denote the matrix of node embeddings of $g_i$ at the $\ell$th layer as $[z_{i}^{(\ell)}]\in\RR^{|V(g_i)|\times d}$ for $\ell=1,\cdots, k$; and let $[z_{i}^{(0)}] \equiv [x_{i}]\in (\RR^{d})^{|V(g_i)|}$ denote the matrix of node features in the $k$-hop ego-graph $g_i$.
In addition, denote $[z_{i}^{(\ell)}]_{p}$ as the principle submatrix, which includes embeddings for nodes in the $0$ to $p$th hop of $g_i$, $0\leq p \leq k$.

We denote $L_{g_i}$ as the out-degree normalised graph Laplacian of $g_i$. 
 Here, the out-degree is defined with respect to the direction from leaves to centre node in $g_i$. Similarly, denote $\Tilde{L}_{g_i}$ as the in-degree normalised graph Laplacian of $g_i$, where the direction is from centre to leaves.
 
WLOG, we write the $\ell$th layer embedding in matrix notation of the following form
$$
[z_i^{(\ell)}]_{k-\ell+1} = \sigma([\phi(L_{g_i})]_{k-\ell+1}[z_i^{(\ell-1)}]_{k-\ell+1}\theta^{(\ell)}),
$$
where the GCN only updates the embedding of nodes in the $0$ to $(k-\ell)$th hop. We also implicitly assume the embedding of nodes in $(k-\ell+1)$ to $k$th hop are unchanged through the update, due to the directed nature of $g_i$. Hence, we obtain $z_i \equiv [z_i^{(k)}]_0$ from the following
$$
[z_i^{(k)}]_1 = \sigma([\phi(L_{g_i})]_{1}[z_i^{(k-1)}]_{1}\theta^{(k)}).
$$

Similarly, we are able to write down the form of discriminator using matrix representation for GCN. The edge information at $\ell$th time point for nodes in $V(g_i)$ can be described as follows
$$
[h_i^{(\ell)}] = ReLU(\phi(\Tilde{L}_{g_i})[h_i^{(\ell-1)}]\Tilde{\theta}^{(\ell)}),
$$

\subsection{Proof for Theorem 4.1}
 We restate Theorem 4.1 from the main paper as below.
\begin{theorem}
Let $G_a=\{(g_i, x_i)\}_{i=1}^n$ and $G_b=\{(\gid, x_{i'})\}_{i'=1}^m$ be two graphs and node features are structure-respecting with $x_i = f(L_{g_i}), x_{i'}=f(L_{g_{i'}})$ for some function $f:\RR^{|V(g_i)|\times |V(g_i)|} \to \RR^{d}$.
Consider GCN $\Psi_{\theta}$ with k layers and a 1-hop polynomial filter $\phi$,the empirical performance difference of $\Psi_{\theta}$ with $\calL_{\Ours}$ satisfies
\begin{align}
|\calL_{\Ours}(G_a) - \calL_{\Ours}(G_b)|\leq \mathcal{O}
\left(\frac{1}{nm}
    \sum_{i=1}^{n}\sum_{i'=1}^{m}
    [M + C
    \lambda_{\max}(L_{\gi} 
    - L_{\gid})
    +
    \Tilde{C}
    \lambda_{\max}(\Tilde{L}_{g_i} 
    - \Tilde{L}_{g_{i'}}))]
\right),
\end{align}
where $M$ is dependant on $\Psi$, $\mathcal{D}$, node features, and the largest eigenvalue of $L_{\gi}$ and $\Tilde{L}_{\gi}$. $C$ is a constant dependant on the encoder, while $\Tilde{C}$ is a constant dependant on the decoder. With a slight abuse of notation, we denote $\lambda_{\max}(A):=\lambda_{\max}(A^TA)^{1/2}$. Note that, in the main paper, we have $C:= M+C\lambda_{\max}(L_{\gi} - L_{\gid})$, and $\Delta_{\D}(G_a, G_b) := \Tilde{C}
    \lambda_{\max}(\Tilde{L}_{g_i} 
    - \Tilde{L}_{g_{i'}})$.
\end{theorem}

\begin{proof}
Now,
\begin{equation*}
    \begin{aligned}
    &|\calL_{\Ours}(G) - \calL_{\Ours}(G')|\\ 
    =
    &\left|\frac{1}{n^2}
    \sum_{i,j=1}^{n}
    (\D(g_i, z_{j})) - 
    \frac{1}{n}\sum_{i=1}^n 
    (-(-\D(g_i, z_{i})) - (\frac{1}{m^2}\sum_{i',j'=1}^{m}
    (\D(g_{i'}, z_{j'})) - \frac{1}{m}\sum_{i'=1}^m 
    (-(-\D(g_{i'}, z_{i'}))))\right|\\
    \leq &\frac{1}{n^2m^2}
    \sum_{i,j=1}^{n}\sum_{i',j'=1}^{m}
    \left|
    \D(g_i, z_{j}) -  \D(g_{i'}, z_{j'})
    \right| + 
    \frac{1}{nm}
    \sum_{i=1}^{n}\sum_{i'=1}^{m} 
    \left|
    \D(g_i, z_{i}) -\D(g_{i'}, z_{i'})
    \right| \\
    = &\frac{1}{n^2m^2}
    \sum_{i,j=1}^{n}\sum_{i',j'=1}^{m} A +
    \frac{1}{nm}
    \sum_{i=1}^{n}\sum_{i'=1}^{m} B.
    \end{aligned}
\end{equation*}

We make the following assumptions in the proof,
\begin{enumerate}
    \item Assume the size of the neighborhood for each node is bounded by $0<r<\infty$, then the maximum number of node for $p$-th layer subgraph is bounded by $r^p$. WLOG, let $1\leq |V_p(g_i)| \leq  |V_p(g_{i'})| \leq r^p$;
    \item Assume $h_{p,q}^{i}\|x^i_{p,q}=0$ if $|V_p(g_i)|<q$, i.e. assume non-informative edge information and node features for non-existed nodes in the smaller neighborhood with no links;
\end{enumerate}
From Assumption 2, we add isolated nodes to the smaller neighborhood $V_p(g_i)$ such that the neighborhood size at each hop match. It can be found in our code to compute \Ours gap as pad\_nbhd.
For the following proof, we WLOG assume $|V_p(g_i)|=|V_p(g_{i'})|\;\forall p$. 

First we consider $B$. Recall that, $V_p(g_i)$ is the set of nodes in layer $p$ of $g_i$,
$$\D (g_i, z_i) = \Sum_{p=1}^k \Sum_{q=1}^{|V_p(g_i)|} \log( \sigma_{sig} \left( U^T \tau \left( W^T [f(x^i) \| z_i] \right) \right)),$$
where $\sigma_{sig}(t)=\frac{1}{1+e^{-t}}$ is the sigmoid function, $\tau$ is some $\gamma_{\tau}$-Lipschitz activation function and $[\cdot \| \cdot]$ denotes the concatenation of two vectors. Then we obtain
$$
U^T \tau \left( W^T [f(x^i) \| z_i]\right) = 
U^T \tau\left(W_1^T f(x^i) + W_2^T z_i\right).
$$
Since $\log(\sigma_{sig}(t))=-\log(1+e^{-t})$, which is $1$-Lipschitz, it gives
\begin{equation}
\label{Eq:B}
    \begin{aligned}
    B &\leq \sum_p^k|
    \sum_q^{|V_p(g_{i'})|} \sig_s(U^T \tau\left(W_1^T f(x^i) + W_2^T z_i\right)) - \sig_s(U^T \tau\left(W_1^T f(x^{i'}) + W_2^T z_{i'}\right))
    |\\
    &\leq \gamma_{\tau}\|U\|_2\sum_{p=1}^k 
    \sum_{q=1}^{|V_p(g_{i'})|}(\|W_1^T f(x^i) - W_1^T f(x^{i'})\|_2 
    + \|W_2^T z_i - W_2^T z_{i'}\|_2)
    \\
    &\leq \gamma_{\tau}\|U\|_2 s_w
    \left(
     \sum_{p=1}^k 
    \sum_{q=1}^{|V_p(g_{i'})|}\left[
    \|h^i_{p,q} - h^{i'}_{p,q}\|_2
    + \|x^i_{p,q} - x^{i'}_{p,q}\|_2
    \right] +  \sum_{p=1}^k 
    \sum_{q=1}^{|V_p(g_{i'})|}\|z_i - z_{i'}\|_2
    \right)\\
    &\leq C_1\left(
    \sum_{p=1}^k 
    \sum_{q=1}^{|V_p(g_{i'})|}\left[
    \|h^i_{p,q} - h^{i'}_{p,q}\|_2
    + \|x^i_{p,q} - x^{i'}_{p,q}\|_2
    \right]/\sum_{p=1}^k r^p
    +  \|z_i - z_{i'}\|_2 
    \right)\\
    &=C_1\left(
    I_1 + I_2
    \right)
    \end{aligned}
\end{equation}

We provide the derivation for the unfolding of $\ell$th layer GCN with the centre-node view in Lemma \ref{lemma:z_expansion}. This will be used in the derivation of $I_1$ and $I_2$.
\begin{lemma}
\label{lemma:z_expansion}
For any $\ell=1, \cdots, k$, we have an upper bound for the hidden representation difference between $g_i$ and $g_i^\prime$,
\begin{equation}
    \begin{aligned}
    \|[z_{i}^{(\ell)}]_{k-\ell} - [z_{i'}^{(\ell)}]_{k-\ell}\|_2
    &\leq (\gams c_{\theta})^{\ell}\|\phi(L_{\gi})\|_2^{\ell}
    \|[x_{i}] - [x_{i'}]\|_2\\ &+\frac{(\gams c_{\theta})^{\ell}\|\phi(L_{\gi})\|_2^{\ell}+1}
    {\gams c_{\theta}\|\phi(L_{\gi})\|_2-1}
    \gams c_{\theta}c_z
    \|\phi(L_{\gi}) - \phi(L_{\gid})\|_2.
    \end{aligned}
    \label{equ:z_expansion}
\end{equation}
Specifically, for $\ell=k$, we obtain the expansion for center node embedding $\|[z_{i}^{(k)}]_0 - [z_{i'}^{(k)}]_0\|\equiv \|z_{i} - z_{i'}\|$.
\end{lemma}
\begin{proof}
By Lemma \ref{lemma:OptNorm}, for any $\ell=1,\cdots, k$, the following holds
$$
\|[z_{i}^{(\ell)}]_{k-\ell} - [z_{i'}^{(\ell)}]_{k-\ell}\|_2
    \leq \|[z_{i}^{(\ell)}]_{k-\ell+1} - [z_{i'}^{(\ell)}]_{k-\ell+1}\|_2.
$$
Assume $\max_{\ell}\|[z_{i}^{(\ell)}]\|_2\leq c_z<\infty\;\forall i$, and $\max_{\ell} \|\theta^{(\ell)}\|_2\leq c_{\theta}<\infty$, where $c_{\theta} = \vee_{\ell} s_{\theta^{(\ell)}}$ the largest singular value. 

 Then, for $\ell=1,\cdots, k-1$, we have
\begin{equation}
    \begin{aligned}
    &\|[z_{i'}^{(\ell)}]_{k-\ell} - [z_{i'}^{(\ell)}]_{k-\ell}\|_2 \\
    \leq& \|[\sig([\phi(L_{\gi})]_{k-\ell+1}
    [z_{i}^{(\ell-1)}]_{k-\ell+1}\theta^{(\ell)})
    - \sig([\phi(L_{\gid})]_{k-\ell+1}
    [z_{i'}^{(\ell-1)}]_{k-\ell+1}
    \theta^{(\ell)})]_{k-\ell})\|_2\\
    \leq& \gams\|[\phi(L_{\gi})]_{k-\ell+1}
    [z_{i}^{(\ell-1)}]_{k-\ell+1}
    - [\phi(L_{\gid})]_{k-\ell+1}
    [z_{i'}^{(\ell-1)}]_{k-\ell+1}\|_2
    \|\theta^{(k)}\|_2\\
    \leq& \gams c_{\theta}
    \|[\phi(L_{\gi})]_{k-\ell+1}
    \|_2\|[z_{i}^{(\ell-1)}]_{k-\ell+1}
    - [z_{i'}^{(\ell-1)}]_{k-\ell+1}\|_2
    +\gams c_{\theta}
    \|[z_{i'}^{(\ell-1)}]_{k-\ell+1}
    \|_2
    \|[\phi(L_{\gi})]_{k-\ell+1}
    - [\phi(L_{\gid})]_{k-\ell+1}\|_2\\
    \leq &\gams c_{\theta}\|\phi(L_{\gi})\|_2
    \|[z_{i}^{(\ell-1)}]_{k-\ell+1}
    - [z_{i'}^{(\ell-1)}]_{k-\ell+1}\|_2 +\gams c_{\theta}
    c_z\|\phi(L_{\gi}) - \phi(L_{\gid})\|_2.
    \end{aligned}
\label{Eq:l-zi_diff}
\end{equation}
since $[\phi(L_{\gi})]_{k-\ell+1}$ is the principle submatrix of $\phi(L_{\gi})$. Then
we equivalently write the above equation as $E_{\ell}\leq bE_{\ell-1} + a$, which gives
$$
E_{\ell} \leq b^{\ell}E_1 + \frac{b^{\ell}+1}{b-1}a.
$$
With $[x_{i}] = [z_{i}^{(0)}]_k$, we see the following is only dependant on the structure of $\gi$ and $\gid$, 
\begin{equation*}
    \begin{aligned}
    \|[z_{i'}^{(\ell)}]_{k-\ell} - [z_{i'}^{(\ell)}]_{k-\ell}\|_2
    &\leq (\gams c_{\theta})^{\ell}\|\phi(L_{\gi})\|_2^{\ell}
    \|[x_{i}] - [x_{i'}]\|_2\\ &+\frac{(\gams c_{\theta})^{\ell}\|\phi(L_{\gi})\|_2^{\ell}+1}
    {\gams c_{\theta}\|\phi(L_{\gi})\|_2-1}
    \gams c_{\theta}c_z
    \|\phi(L_{\gi}) - \phi(L_{\gid})\|_2.
    \end{aligned}
\end{equation*}
\end{proof}

Since the the graph Laplacians are normalised, we have $\|\phi(L_{\gi})\|_2\leq c_L<\infty\;\forall i$. In addition, let $$\|x_{p,q}^i - x_{p,q}^{i'}\|_2\leq \sup_i\sup_{p,q} \|x_{p,q}^i - x_{p,q}^{i'}\|_2 
=\sup_i\|f(L_{\gi}) - f(L_{\gid})\|_2
:= \delta_x.$$ 
Hence, $\|[x_{i}] - [x_{i'}]\|_2 \leq \delta_x(\sum_{p=1}^k r^p)^{1/2}:=c_x$.
From Lemma \ref{lemma:z_expansion}, it is clear that we obtain the following at the final layer
\begin{equation}
    \label{Eq:Fnorm}
    \begin{aligned}
    I_2=\|z_i - z_{i'}\|_2 
    &\leq
    (\gams c_{\theta}c_L)^{k}c_x + 
    \frac{(\gams c_{\theta}c_L)^k+1}{\gams c_{\theta} c_L-1}\gams c_{\theta}c_z
    \|\phi(L_{\gi}) - \phi(L_{\gid})\|_2\\
    &\leq
    C(M c_x+\|L_{\gi} - L_{\gid}\|_2)\\
    &= C(M c_x+\lambda_{\max}(L_{\gi} - L_{\gid})^{1/2}).
    \end{aligned}
\end{equation}
since $\phi$ is a linear function for $L$. Indeed, this can be generalised to polynomial function $\phi$ of higher powers.

Now, consider the following term that is related with discriminator $\D$,
$$
I_1=
\sum_{p=1}^k 
    \sum_{q=1}^{|V_p(g_{i'})|}\left[
    \|h^i_{p,q} - h^{i'}_{p,q}\|_2
    + \|x^i_{p,q} - x^{i'}_{p,q}\|_2
    \right]/\sum_{p=1}^k r^p
$$
Firstly, 
we denote $\Tilde{L}_{p,q}$ as the in-degree graph Laplacian derived with the subgraph $g_q$ of $g_i$ centred at $q\in V_p(g_i)$. Different from the encoder, we utilize every node's hidden embedding in the computation. Specifically, $g_q$ is obtained by retrieving links in $g_i$ that connects to the $q$th node in the $p$th layer. This is a principle submatrix of the in-degree graph Laplacian $\Tilde{L}_{g_i}$ of $g_i$. 

Just as defined in \ssym\ref{Subsec:view}, we denote $[h_q^{(p)}]_{\ell}$ as the $p$th layer GCN embedding for nodes in hop 0 to hop $\ell\in [0,p]$ of $g_q$. Note that in this case, $[h_q^{(p)}]_0 = h_q^{(p)}$, which is one row of $[h_i^{(p)}]$, corresponding to the $q$-th node in the neighborhood.
So we may write the first term in $I_1$ as
$$
\sum_{p=1}^k 
    \sum_{q=1}^{|V_p(g_{i'})|}
    \|h_{q}^{(p)} - h_{q'}^{(p)}\|
$$
where $h_{q'}^{(p)}:=h_{p,q}^{i'}$ for short.
In this way, we regard each node $q\in V_p(g_i)$ as the centre node, which allows us to unfold the convolution similarly as expanding the $I_2$ term.
Now, for any $q\in V_k(g_i)$, i.e. when $p=k$, we apply Lemma \ref{lemma:z_expansion} similarly as for $\|z_i - z_{i'}\|_2$. Then,
\begin{equation*}
    \begin{aligned}
    \|h_q^{(k)} - h_{q'}^{(k)}\|
    &\leq 
    (\gams c_{\Tilde{\theta}}c_{\Tilde{L}})^{k}c_x + 
    \frac{(\gams c_{\Tilde{\theta}}c_{\Tilde{L}})^k+1}{\gams c_{\Tilde{\theta}} c_{\Tilde{L}}-1}\gams c_{\Tilde{\theta}}c_h
    \|\phi(\Tilde{L}_{k,q}) - \phi(\Tilde{L}_{k,q'})\|_2\\
    &\leq \Tilde{C}_k(\Tilde{M}_k c_x + \|\phi(\Tilde{L}_{g_i}) - 
    \phi(\Tilde{L}_{g_{i'}})\|_2)\\
    \end{aligned}
\end{equation*}
where $\Tilde{L}_{p,q}$ is the principle submatrix of $\Tilde{L}_{g_{i}}$ and Lemma \ref{lemma:OptNorm} can be applied iin the last inequality. In addition, $\Tilde{C}_k$ and $\Tilde{M}_k$ are taken to be the maximum over any $q\in V_k(g_i)$.
In general, for $q\in V_{p}(g_i)$, $0<p<k$, we have 
$$
\|h_q^{(p)} - h_{q'}^{(p)}\|_2
\leq \Tilde{C}_{p}(\Tilde{M}_{p}c_x + \|\phi(\Tilde{L}_{g_i}) - 
    \phi(\Tilde{L}_{g_{i'}})\|_2)
$$
Take a common upper bound for $\Tilde{C}_p, \Tilde{M}_p$ over $0< p\leq k$, we obtain
\begin{equation*}
    \begin{aligned}
    \sum_{p=1}^k\sum_{q=1}^{|V_p(g_{i'})|}
    \|h_q^{(p)} - h_{q'}^{(p)}\|/
    \sum_{p=1}^k r^p
    &\leq
    \Tilde{C}(\Tilde{M}c_x + \|\Tilde{L}_{g_i} - 
    \Tilde{L}_{g_{i'}}\|_2)\\
    &=\Tilde{C}(\Tilde{M}c_x + \lambda_{\max}(\Tilde{L}_{g_i} - 
    \Tilde{L}_{g_{i'}})^{1/2})
    \end{aligned}
\end{equation*}

In addition, for the other half of $I_1$, we have
$$
\sum_{p=1}^k 
    \sum_{q=1}^{|V_p(g_{i'})|}
    \|x_{p,q}^i - x_{p,q}^{i'}\|_2/\sum_{p=1}^k r^p
    \leq \sup_i\sup_{p,q} \|x_{p,q}^i - x_{p,q}^{i'}\|_2 = \delta_x
    = c_x/(\sum_{p=1}^k r^p)^{1/2}
$$

We can write $\B$ in terms of weights $C$ and $\Tilde{C}$, which is dependant on the activation function $\sigma$, $k$ and $\sup_i \lambda_{\max}(L_{g_i})$. Hence,
\begin{equation*}
    \begin{aligned}
    B 
    &\leq (CM+\Tilde{C}\Tilde{M}+1/(\sum_{p=1}^k r^p))c_x +
    C\lambda_{\max}(L_{\gi} - L_{\gid}) +
    \Tilde{C}\lambda_{\max}(\Tilde{L}_{g_i} - 
    \Tilde{L}_{g_{i'}})\\
    &= M'c_x+ C\lambda_{\max}(L_{\gi} - L_{\gid}) +
    \Tilde{C}\lambda_{\max}(\Tilde{L}_{g_i} - 
    \Tilde{L}_{g_{i'}})
    \end{aligned}
\end{equation*}

Note that the derived $I_1$ for $B$ is the same for $A$, since the node features, edge information and embedded features are bounded by separate terms in Eq.~\ref{Eq:B}. The only difference is given by $I_2$, where a different set of graph Laplacians $L_{g_j}, \,L_{g_{j'}}$ and node features $(x_j)$ are used. Therefore,
\begin{equation*}
    \begin{aligned}
    A
    &\leq M'c_x+ C\lambda_{\max}(L_{g_j} - L_{g_{j'}}) +
    \Tilde{C}\lambda_{\max}(\Tilde{L}_{g_i} - 
    \Tilde{L}_{g_{i'}})
    \end{aligned}
\end{equation*}
 Hence the result.
\end{proof}

Note that, our view of structural information is closely related to graph kernels \cite{bai2016subgraph} and graph perturbation \cite{verma2019stability}. Specifically, our Definition on k-hop ego-graph is motivated by the concept of k-layer expansion sub-graph in \cite{bai2016subgraph}. However, \cite{bai2016subgraph} used the Jensen-Shannon divergence between pairwise representations of sub-graphs to define a depth-based sub-graph kernel, while we depict $G$ as samples of its ego-graphs. In this sense, our view is related to the setup in \cite{verma2019stability}, which derived a uniform algorithmic stability bound of a 1-layer GNN under 1-hop structure perturbation of $G$. 

In the setting of domain adaptation, \cite{ben2007analysis} draws a connection between the difference in the distributions of source and target domains and the model transferability, and learns a transferable model by minimizing such distribution differences. This coincides with our approach of connecting the structure difference of two graphs in terms of k-hop subgraph distributions and the transferability of GNNs in the above theory.

\section{Model Details}
\label{supp:model}
Following the same notations used in the main paper, \Ours consists of a GNN encoder $\Psi$ and a GNN discriminator $\D$. In general, the GNN encoder $\Psi$ and discriminator $\D$ can be any existing GNN models. For each ego-graph and its node features $\{g_i, x_i\}$, the GNN encoder returns node embedding $z_i$ for the center node $v_i$. As mentioned in Eq. 2 in the main paper, the GNN discriminator $\D$ makes edge-level predictions as follows,
\begin{equation}
\label{eq:edge-msg}
    \D(e_{\tilde{v}v}|h_{p,q}^{\tilde{q}}, x^i_{p,q}, z_i) = \sigma \left( U^T \cdot \tau \left( W^T [ h_{p,q}^{\tilde{q}}|| x^i_{p,q}|| z_i] \right) \right),
\end{equation} 
where $e_{\tilde{v}v} \in E(g_i)$ and $h_{p,q}^{\tilde{q}} \in \mathbb{R}^d$ (simplified as $h_p$ in the main paper, same for $x^i_{p,q}=x_q$) is the representation for edge $e_{\tilde{v}v}$ between node $v_{p-1,\tilde{q}}$ in hop $p-1$ and $v_{p,q}$ in hop $p$. 
The prediction relies on the combination of center node embedding $z_i$, destination node feature $x^i_{p,q}$ and source node representation $h_{p,q}^{\tilde{q}}$. And now we describe how we calculate the source node representation in $\D$.


\begin{algorithm*}[ht]
\label{alg}

The GNN encoder $\Psi$ and the GNN discriminator $\mathcal{D}$, k-hop ego graph and features $\{g_i, x_i\}$\;
 /* EGI-training starts */ \\
 \While{$\mathcal{L}_{\Ours}$  not converges}{
  \text{Sample} M ego-graphs $\{ (g_1,x_1),...,(g_M, x_M) \}$ \text{from empirical distribution} $\mathbb{P}$ without replacement, and obtained their positive and negative node embeddings $z_i, z_i'$ through $\Psi$
  \[
  z_i = \Psi(g_i, x_i), z_i' = \Psi(g_i', x_i'), 
  \]
  /* Initialize positive and negative expectation in Eq. 1 in the main paper*/ \\
  $E_{pos} =0, E_{neg} = 0$ \\
  \For{p = 1 to $k$}{
    
    /* Compute JSD on edges at each hop*/ \\
    \For{$e_{(p-1, \Tilde{q})(p, q)} \in E(g_i)$}{
        generate source node embedding $h_{p,q}^{\tilde{q}}$ in Eq.~\ref{eq:update_h} \;
        $E_{\text{pos}} = E_{\text{pos}}$ + $\sigma \left(U^T \cdot \tau \left( W^T [ h_{p,q}^{\tilde{q}} || x^i_{p,q}|| z_i] \right) \right)$ \\
        $E_{\text{neg}} = E_{\text{neg}}$ + $\sigma \left( U^T \cdot \tau \left( W^T [ h_{p,q}^{\tilde{q}} || x^i_{p,q}|| z_i'] \right) \right)$ \\
    }

}
    /* Compute batch loss*/ \\
    $\mathcal{L}_{\text{EGI}} = E_{\text{neg}} - E_{\text{pos}}$ \\
    /* Update $\Psi$, $\D$ */ \\
    $\theta_{\Psi} \xleftarrow{+} - \nabla_{\Psi} \mathcal{L}_{\text{EGI}}$, $\theta_{\D} \xleftarrow{+} - \nabla_{\D}\mathcal{L}_{\text{EGI}}$

 }
 \caption{Pseudo code for training \Ours }
 \label{alg}

\end{algorithm*}


To obtain the source node representation representations $h$, the GNN in discriminator $\D$ operates on a reversed ego-graph $\tilde{g_i}$ while encoder $\Psi$ performs forward propagation on $g_i$. The discriminator GNN starts from the center node $v_i$ and compute the hidden representation $m_{p-1,\tilde{q}}$ for node $v_{p-1, q}$ at each hop.  We denote the source node at $p-1$ hop as $\tilde{q} \in \Tilde{Q}_{p,q}, \Tilde{Q}_{p,q} = \{\Tilde{q}: v_{p-1,\Tilde{q}}\in V_{p-1}(g_i), e_{(p-1, \Tilde{q})(p, q)}\in E(g_i)\}$. Although $h_{p,q}$ is calculated as node embedding, in reversed ego graph $\tilde{g_i}$, node only has one incoming edge. Thus, we can also interpret $h_{p,q}^{\tilde{q}}$ as the edge embedding as it combines source node's hidden representation $m_{p-1,\Tilde{q}}$ and destination node features $x_{p,q}$ as follows,  
\begin{equation}
\label{eq:update_h}
    h_{p,q}^{\tilde{q}} = \text{ReLU} \left( W_{p}^T \left(  m_{p-1,\Tilde{q}} + x^i_{p,q} \right)  \right), \;
    m_{p-1, \Tilde{q}} = \frac{1}{|\Tilde{Q}_{{p-1},\Tilde{q}}|}
\sum_{q'\in \Tilde{Q}_{{p-1}\Tilde{q}}} h_{p-1,\Tilde{q}}^{q'}
\end{equation} 

When $p=1$, every edge origins from the center node $v_i$ and $m_{0,q'}$ is the center node feature $x_{v_i}$. Note that we the elaborated aggregation rule is equivalent as layer-wise propagation rules (different in-degree matrix for each $h_{p,q}$) of \Ours earlier in \ssym\ref{Subsec:view}.

In every batch, we sample a set of ego-graphs and their node features $\{g_i, x_i\}$. During the forward pass of encoder $\Psi$, it aggregates from neighbor nodes to the center node $v_i$. Then, the discriminator calculates the edge embedding in Eq.~\ref{eq:update_h} from center node $v_i$ to its neighbors and make edge-level predictions-- \textit{fake} or \textit{true}. Besides training framework Figure 2 in the main paper, the algorithm \Ours is depicted in Algorithm \ref{alg}.

We implement our method and all of the baselines using the same encoders $\Psi$: 2-layer GIN~\cite{xu2019powerful} for synthetic and role identification experiments, 2-layer GraphSAGE~\cite{hamilton2017inductive} for the relation prediction experiments. 
We set hidden dimension as 32 for both synthetic and role identification experiments, For relation prediction fine-tuning task, we set hidden dimension as 256. 
We train \Ours in a mini-batch fashion since all the information for encoder and discriminators are within the k-hop ego-graph $g_i$ and its features $x_i$. Further, we conduct neighborhood sampling and set maximum neighbors as 10 to speed up the parrallel training. 
The space and time complexity of \Ours is $O(BN^K)$, where $B$ is the batch size, $N$ is the number of the neighbors and k is the number of hops of ego-graphs. Notice that both the encoder $\Psi$ and discriminator $\mathcal{D}$ propagate message on the k-hop ego-graphs, so the extra computation cost of $\mathcal{D}$ compared with a common GNN module is a constant multiplier over the original one. The scalability of \Ours on million scale YAGO network is reported in section \ref{sec:real-experiment}.

\subsection{Transfer Learning Settings}
\label{sec:transfer-learning-setup}
The goal of transfer learning is to train a model on a dataset or task, and use it on another. In our graph learning setting, we focus on training the model on one graph and using it on another. In particular, we focus our study on the setting of \textit{unsupervised-transfering}, where the model learned on the source graph is directly applied on the target graph without \textit{fine-tuning}. We study this setting because it allows us to directly measure the transferability of GNNs, which is not affected by the fine-tuning process on the target graph. In other words, the fine-tuning process introduces significant uncertainty to the analysis, because there is no guarantee on how much the fine-tuned GNN is different from the pre-trained one. Depending on specific tasks and labels distributions on the two graphs, the fine-tuned GNN might be quite similar to the pre-trained one, or it can be significantly different. It is then very hard to analyze how much the pre-trained GNN itself is able to help. Another reason is about efficiency. The fine-tuning of GNNs requires the same environment set-up and computation resource as training GNNs from scratch, although it may take less training time eventually if pre-training is effective. It is intriguing if this whole process can be eliminated when we guarantee the performance with unsupervised-transfering.

In our experiments, we also study the setting of transfer learning with fine-tuning, particularly on the real-world large-scale YAGO graphs.
Since we aim to study the general transferability of GNNs not bounded to specific tasks, we always pre-train GNNs with the unsupervised pre-training objective on source graphs. Then we enable two types of fine-tuning. The first one is \textit{post-fine-tuning} ($\mathcal{L} = \mathcal{L}_{s}$), where the pre-trained GNNs are fine-tuned with the supervised task specific objective $\mathcal{L}_{s}$ on the target graphs. The second on is \textit{joint-fine-tuning} ($\mathcal{L} = \mathcal{L}_{s} + \mathcal{L}_{u}$), where pre-training is the same, but fine-tuning is done \wrt~both the pre-training objective $\mathcal{L}_{u}$ and task specific objective $\mathcal{L}_{s}$ on target graphs in a semi-supervised learning fashion. 
The unsupervised pre-training objective $\mathcal{L}_{u}$ of \Ours is Algorithm 1, while those of the compared algorithms are as defined in their papers. The supervised fine-tuning objective $\mathcal{L}_{s}$ is the same as in the DistMult paper \cite{yang2014embedding} for all algorithms. 

\section{Additional Experiment Details}
\subsection{Synthetic Experiments}
\label{supp:exp_syn}
\xhdr{Data.}
As mentioned in the main paper, we use two traditional graph generation models for synthetic data generation: (1) barabasi-albert graph~\cite{barabasi1999emergence} and (2) forest-fire graph~\cite{leskovec2005graphs}. We generate 40 graphs each with 100 nodes with each model. We control the parameters of two models to generate two graphs with different ego-graph distributions. Specifically, we set the number of attached edges as 2 for barabasi-albert model and set $p_\text{forward}=0.4$, $p_\text{backward}=0.3$ for forest-fire model. In Figure~\ref{fig:syn-a} and \ref{fig:syn-b}, we show example graphs from two families in our datasets. They have the same size but different appearance which leads to our study on the transferability gap $\Delta_{\D}$(F, F) and $\Delta_{\D}$(B, F) in Table 1 in the main paper. The accuracy of this task defined as the percentage of nearest neighbors for target node in the embedding space $z = \Psi(\cdot)$ that are structure-equivalent, \ie \#correct k-nn neighbors / \#ground truth equivalent nodes.

\begin{figure}[ht]
  \subfloat[Forest-fire graph example]{
	\begin{minipage}[c][1\width]{
	   0.3\textwidth}
	   \label{fig:syn-a}
	   \centering
	   \includegraphics[width=1\textwidth]{supps/bb-graph.png}
	\end{minipage}}
 \hfill 	
  \subfloat[Barabasi-albert graph example]{
	\begin{minipage}[c][1\width]{
	   0.3\textwidth}
	   \label{fig:syn-b}
	   \centering
	   \includegraphics[width=1\textwidth]{supps/ff-graph.png}
	\end{minipage}}
 \hfill	
  \subfloat[structural label example]{
	\begin{minipage}[c][1\width]{
	   0.3\textwidth}
	   \label{fig:syn-c}
	   \centering
	   \includegraphics[width=1\textwidth]{supps/labeled_graph_3.png}
	\end{minipage}}
\caption{Visualizations of the graphs and labels we use in the synthetic experiments.}
\label{fig:syn}
\end{figure}

\xhdr{Results.}
The structural equivalence label is obtained by a 2-hop WL-test \cite{weisfeiler1968reduction} on the ego-graphs. If two nodes have the same 2-hop ego-graphs, they will be assigned the same label.
In the example of Figure~\ref{fig:syn-c}, the nodes labeled with same number (\eg 2, 4) have the isomorphic 2-hop ego-graphs. Note that this task is exactly solvable when node features and GNN architectures are powerful enough like GIN~\cite{xu2019powerful}. In order to show the performance difference among different methods, we set the length of one-hot node degree encoding to 3 (all nodes with degrees higher than 3 have the same encoding). Here, we present the performance comparison with different length of degree encodings (d) in Table~\ref{tab:synthetic1}. When the capacity of initial node features is high (d=10), the transfer learning gap diminishes between different methods and different graphs because the structural equivalence problem can be exactly solved by neighborhood aggregations. However, when the information in initial node features is limited, the advantage of \Ours in learning and transfering the graph structural information is obvious. In Table~\ref{tab:synthetic2}, we also show the performance of different transferable and non-transferable features discussed after Definition 4.3 in the main paper, \ie node embedding~\cite{perozzi2014deepwalk} and random feature vectors. The observation is similar with Table 1 in the main paper: the transferable feature can reflect the performance gap between similar and dissimilar graphs while non-transferable features can not. 

In both Table~\ref{tab:synthetic1} and \ref{tab:supps-role-classification} here as well as Table 1 in the main paper, we report the structural difference among graphs in the two sets ($\bar{d}$) calculated \wrt~the term $\Delta_\D (G_a, G_b)$ on the RHS of Theorem 4.1 in the main paper. This indicates that the Forest fire graphs are structurally similar to the other Forest fire graphs, while less similar to the Barabasi graphs, as can be verified from Figure \ref{fig:syn-a} and \ref{fig:syn-b}. Our bound in Theorem 4.1 then tells us that the GNNs (in particular, \Ours) should be more transferable in the F-F case than B-F. This is verified in Table~\ref{tab:synthetic1} and~\ref{tab:synthetic2} when using the transferable node features of degree encoding with limited dimension (d=3) as well as DeepWalk embedding, as \Ours pre-trained on Forest fire graphs performs significantly better on Forest fire graphs than on Barabasi graphs (with +0.094 and +0.057 differences, respectively).

\begin{table*}[h]
\begin{center}
\caption{Synthetic experiments of identifying structural-equivalent nodes with different degree encoding dimensions.
}
\label{tab:synthetic1}
\scalebox{0.85}{
\begin{tabular}{lll|c|c|c|c|c|c|c|c}
\toprule
\multicolumn{3}{c|}{\multirow{2}{*}{Method}}                             & \multicolumn{3}{c|}{\textbf{\#dim degree encoding d = 3}}            & \multicolumn{3}{c|}{\textbf{\# dim degree encoding d = 10}} & \multicolumn{2}{c}{\textbf{structural difference   }}                                                                                                                   \\
\multicolumn{3}{c|}{}                                                    & F-F & B-F & $\delta$(acc.) & F-F & B-F & $\delta$(acc.)  & $\Delta_{D}$(F,F) & $\Delta_{D}$(B,F)\\ \midrule
\multicolumn{2}{l}{}                              & GCN (untrained)                    &  0.478       &   0.478       &  /       &  0.940 & 0.940 & /  & \multirow{5}{*}{0.752}  &   \multirow{5}{*}{0.883}                             \\
\multicolumn{2}{l}{}                              & GIN (untrained)                    &  0.572       &   0.572       &  /       &  0.940 & 0.940 & / &  &                             \\
\multicolumn{2}{l}{}                              & VGAE (GIN)                   &   0.498     &   0.432  &  +0.066       &    0.939      & 0.937  & 0.002 & &                          \\
\multicolumn{2}{l}{}                              & DGI (GIN)                    &  0.578       &    0.591     &  -0.013      & 0.939    &  0.941   & -0.002 &    &                    \\
\multicolumn{2}{l}{}                              & \Ours (GIN)                    & \textbf{0.710}       &    0.616        &   +0.094    &  0.942     & 0.942 &    0   & &                  \\
\bottomrule

\end{tabular}
}
\end{center}
\end{table*}

\begin{table*}[h]
\begin{center}
\caption{Synthetic experiments of identifying structural-equivalent nodes with different transferable and non-transferable features.
}
\label{tab:synthetic2}
\scalebox{0.85}{
\begin{tabular}{lll|c|c|c|c|c|c|c|c}
\toprule
\multicolumn{3}{c|}{\multirow{2}{*}{Method}}                             & \multicolumn{3}{c|}{\textbf{DeepWalk embedding}}            & \multicolumn{3}{c|}{\textbf{random vectors}} & \multicolumn{2}{c}{\textbf{structural difference   }}                                                                                                                   \\
\multicolumn{3}{c|}{}                                                    & F-F & B-F & $\delta$(acc.) & F-F & B-F & $\delta$(acc.)  & $\Delta_{D}$(F,F) & $\Delta_{D}$(B,F)\\ \midrule
\multicolumn{2}{l}{}                              & GCN (untrained)                    &  0.658       &   0.658        &  /       &  0.246 & 0.246 & /  & \multirow{5}{*}{0.752}  &   \multirow{5}{*}{0.883}                             \\
\multicolumn{2}{l}{}                              & GIN (untrained)                    &  0.663       &   0.663       &  /       &  0.520 &  0.520 & / &  &                             \\
\multicolumn{2}{l}{}                              & GVAE (GIN)                   &   0.713     &   0.659  &  +0.054       &    0.266      & 0.264  & 0.002 & &                          \\
\multicolumn{2}{l}{}                              & DGI (GIN)                    &  0.640       &    0.613     &  +0.027      & 0.512         & 0.576 &   -0.064 & &                       \\
\multicolumn{2}{l}{}                              & \Ours (GIN)                    & \textbf{0.772}       &    0.715        &   +0.057    &    0.507   & 0.485 &    +0.022    & &                  \\
\bottomrule

\end{tabular}
}
\end{center}
\end{table*}







\subsection{Real-world Role Identification Experiments}
\label{supp:exp_airport}
\xhdr{Data.}
\begin{table}[t]
    \caption{Overall Dataset Statistics}
    \label{tab:dataset-stats}
    \centering
    \begin{tabular}{c|c c c c }
    \toprule
         Dataset & \# Nodes & \# Edges & \# Classes \\
         \midrule 
         Europe & 399 & 5,995  & 4   \\
         USA & 1,190  &   13,599 & 4 \\
         Brazil & 131  &   1,074 & 4 \\
         Gene & 9,228  &   57,029 & 2 \\
         \bottomrule
    \end{tabular}

\end{table}
We report the number of nodes, edges and classes for both airport and gene dataset. The numbers for the Gene dataset are the aggregations of the total 52 gene networks in the dataset. For the three airport networks, Figure~\ref{fig:power-law} shows the power-law degree distribution on log-log scale. 
The class labels are between 0 to 3 reflecting the level of the airport activities \cite{ribeiro2017struc2vec}.
For the Gene dataset, we matched the gene names in the TCGA dataset~\cite{yang2019conditional} to the list of transcription factors on wikipedia\footnote{\url{https://en.wikipedia.org/wiki/Transcription_factor}}. 75\% of the genes are marked as 1 (transcription factors) and some gene graphs have extremely imbalanced class distributions. So we conduct experiments on the relatively balanced gene graphs of brain cancers (Figure 2 in the main paper). Both datasets do not have organic node attributes. The role-based node labels are highly relevant to their local graph structures, but are not trivially computable such as from node degrees.

\begin{figure}[ht]
  \subfloat[Europe airport log-log plot]{
	\begin{minipage}[c][1\width]{
	   0.32\textwidth}
	   \label{fig:euro}
	   \centering
	   \includegraphics[width=1\textwidth]{supps/europe-log.png}
	\end{minipage}}
 \hfill 	
  \subfloat[USA airport log-log plot]{
	\begin{minipage}[c][1\width]{
	   0.32\textwidth}
	   \label{fig:usa}
	   \centering
	   \includegraphics[width=1\textwidth]{supps/usa-log.png}
	\end{minipage}}
 \hfill	
  \subfloat[Brazil airport log-log plot]{
	\begin{minipage}[c][1\width]{
	   0.32\textwidth}
	   \label{fig:brazil}
	   \centering
	   \includegraphics[width=1\textwidth]{supps/brazil-log.png}
	\end{minipage}}
\caption{Visualizations of power-law degree distribution on three airport dataset.}
\label{fig:power-law}
\end{figure}

\xhdr{Results.}
As we can observe from Figure~\ref{fig:power-law}, the three airport graphs have quite different sizes and structures (\eg, regarding edge density and connectivity pattern). Thus, the absolute classification accuracy in both Table 2 in the main paper and Table~\ref{tab:supps-main-table2} here varies across different graphs. However, as we mention in the main paper, the structural difference we compute based on Eq.~5 in Theorem 3.1 is close among the Europe-USA and Europe-Brazil graph pairs (0.869 and 0.851), which leads to close transferability of \Ours from Europe to USA and Brazil. This indicates the effectiveness of our view over essential structural information. In Table~\ref{tab:supps-main-table2}, we also provide the comparison between transferable and non-transferable feature on airport dataset. As expected, \Ours only yields good transferability with transferable features.

\begin{table*}[h]
\begin{center}

\caption{Results of role identification with direct-transfering on the Airport dataset (Table 2, main paper). The performance reported (\%) are the average over 100 runs. We set all node features same as non-transferable features.}
\label{tab:supps-main-table2}
\scalebox{0.9}{
\begin{tabular}{lll|c|c|c|c|c|c}
\toprule
\multicolumn{3}{c|}{\multirow{2}{*}{Method}}                             & \multicolumn{2}{c|}{Europe (source)} & \multicolumn{2}{c|}{USA (target)} & \multicolumn{2}{c}{Brazil (target)}                                                                                                            \\
\multicolumn{3}{c|}{}                                                    & node degree & same feat. & node degree & same feat. & node degree & same feat. \\ \midrule                    

\multicolumn{2}{l}{}                              & features &  52.81         &   20.59       &   55.67      &  20.22 & 67.11 & 19.63 \\


\multicolumn{2}{l}{}                              & GIN (untrained)                    &  55.75   &   53.88       &    61.56      &  58.32  & 70.04 & 70.37\\
\multicolumn{2}{l}{}                              & GVAE                  &  53.90    &   21.12      &  55.51       &  22.39 & 66.33 & 17.70                            \\
\multicolumn{2}{l}{}                              & DGI                   &  57.75      &    22.13     &  54.90      &  21.76 & 67.93 & 18.78                          \\
\multicolumn{2}{l}{}                              & MaskGNN                   &  56.37      &  55.53        &      60.82 & 54.64 & 66.71 &74.54                     \\
\multicolumn{2}{l}{}                              & ContextPredGNN                    &  52.69       &   49.95       &  50.38      & 54.75  & 62.11 & 70.66                       \\
\multicolumn{2}{l}{}                              & Structural Pre-train                   & 56.00       &   53.83      &    62.17    & 57.49  & 68.78 &  72.41                     \\
\multicolumn{2}{l}{}                              & MVC                    &    53.16    &   51.69     &  59.66  & 50.42   &  66.07   & 61.55                    \\
\multicolumn{2}{l}{}                              & GMI                   & 58.12       &   46.25      &   59.28     & 47.64  & 73.07   &  62.96                     \\
\multicolumn{2}{l}{}                              & \Ours (GIN)                   & \textbf{59.15}$^{**}$       &  54.98 &\textbf{64.55}$^{**}$      &   57.40    &  \textbf{73.15} & 70.00                       \\
\bottomrule

\end{tabular}
}
\end{center}
\end{table*}

Besides that, the results present in Table~\ref{tab:supps-role-classification} are the accuracy of GNNs directly trained and evaluated on each network without transfering. Therefore, only the Europe column has the same results as in Table 2 in the main paper, while the USA and Brazil columns can be regarded as providing an upper-bound performance of GNN transfered from other graphs. As we can see, \Ours gives the closest results from Table 2 (main paper) to Table~\ref{tab:supps-role-classification} here, demonstrating the its plausible transferability. The scores are so close, showing a possibility to skip fine-tuning when the source and target graphs are similar enough. Also note that, although the variances are pretty large (which is also observed in other works like \cite{ribeiro2017struc2vec} since the networks are small), our t-tests have shown the improvements of \Ours to be significant.

\begin{table*}[h]
\begin{center}

\caption{Role identification that identifies structurally similar nodes on real-world networks. The performance reported are the average and standard deviation for 10 runs. Our classification accuracy on three datasets all passed the t-test (p$<$0.01) with the second best result in the table.}
\label{tab:supps-role-classification}
\scalebox{0.9}{
\begin{tabular}{lll|c|c|c}
\toprule
\multicolumn{3}{c|}{\multirow{2}{*}{Method}}                             & \multicolumn{3}{c}{\textbf{Airport~\cite{ribeiro2017struc2vec}}}                                          \\
\multicolumn{3}{c}{}                                                    & Europe & USA & Brazil \\ \midrule

\multicolumn{2}{l}{}                              & node degree                    &    52.81\% $\pm$ 5.81\%      &   55.67\% $\pm$ 3.63\%      &   67.11\% $\pm$ 7.58\%     \\
\multicolumn{2}{l}{}                              & GCN (random-init)                    &    52.96\% $\pm$ 4.51\%    &   56.18\% $\pm$ 3.82\%       &    55.93\% $\pm$ 1.38\%  \\
\multicolumn{2}{l}{}                              & GIN (random-init)                    &    55.75\% $\pm$ 5.84\%     &   62.77\% $\pm$ 2.35\%       &  69.26\% $\pm$ 9.08\%   \\
\multicolumn{2}{l}{}                              & GVAE (GIN)                   &  53.90\% $\pm$ 4.65\%     &   58.99\% $\pm$ 2.44\%      &  55.56\% $\pm$ 6.83\%     \\
\multicolumn{2}{l}{}                              & DGI (GIN)                    &    57.75\% $\pm$ 4.47\%     &    62.44\% $\pm$ 4.46\%      &  68.15\% $\pm$ 6.24\%       \\

\multicolumn{2}{l}{}                              & Mask-GIN                 &    56.37\% $\pm$ 5.07\%     &  63.78\% $\pm$ 2.79\%     & 61.85\% $\pm$ 10.74\%  \\
\multicolumn{2}{l}{}                              & ContextPred-GIN                   &    52.69\% $\pm$ 6.12\%     &   56.22\% $\pm$ 4.05\%       &  58.52\% $\pm$ 10.18\%    \\
\multicolumn{2}{l}{}                              & Structural Pre-train                   &   56.00\% $\pm$ 4.58\%    &   62.29\% $\pm$ 3.51\%      &    71.48\% $\pm$ 9.38 \% \\

\multicolumn{2}{l}{}                              & MVC                    &    53.16\% $\pm$ 4.07\%        &  62.81 \% $\pm$ 3.12\%   &  67.78 \% $\pm$ 4.79\%                \\

\multicolumn{2}{l}{}                              & GMI                   & 58.12 \% $\pm$ 5.28\%       &  63.36 \% $\pm$ 2.92\%     & 73.70\% $\pm$ 4.21\%                      \\

\multicolumn{2}{l}{}                              & \Ours (GIN)                    &   \textbf{59.15\% $\pm$ 4.44}\%     &  \textbf{65.88\% $\pm$ 3.65}\%       &   \textbf{74.07\% $\pm$ 5.49}\%    \\
\bottomrule

\end{tabular}
}
\end{center}
\end{table*}

\begin{table}[]
    \caption{dataset statistics and running time of \Ours}
    \label{tab:dataset-stats-2}
    \centering
\scalebox{0.9}{
    \begin{tabular}{c|c c c c c}
    \toprule
         Dataset & \# Nodes & \# Edges & \# Relations & \# Train/Test & Training time per epoch \\
         \midrule 
         YAGO-Source & 579,721  &   2,191,464 & / &  / & 338 seconds\\
         YAGO-Target & 115,186 & 409,952  & 24  & 480/409,472 & 134 seconds\\
         
         \bottomrule
    \end{tabular}
}
\end{table}

\subsection{Real-world large-scale Relation Prediction Experiments}
\label{supp:exp_yago}
\label{sec:real-experiment}
\xhdr{Data.} As shown in Table~\ref{tab:dataset-stats-2}, the source graph we use to pre-train GNNs is the full graph cleaned from the YAGO dump \cite{suchanek2007yago}, where we assume the relations among entities are unknown. The target graph we use is a subgraph uniformed sampled from the same YAGO dump (we sample the nodes and then include all edges among the sampled nodes). The similar ratio between number of nodes and edges can be observed in Table~\ref{tab:dataset-stats-2}.
On the target graph, we also have the access to 24 different relations~\cite{shi2018easing} such as \textit{isAdvisedBy}, \textit{isMarriedTo} and so on. Such relation labels are still relevant to the graph structures, but the relevance is lower compared with the structural role labels. We use the 256-dim degree encoding as node features for pre-training on the source graph, then we use the 128-dim positional embedding generated by LINE~\cite{tang2015line} for fine-tuning on the target graph, to explicitly make the features differ across source and target graphs.

\xhdr{Results.} In Section~\ref{sec:transfer-learning-setup}, we introduced two different types of fine-tuning, \ie , \textit{post-fine-tuning} and \textit{joint-fine-tuning}. For both types of fine-tuning, we add one feature encoder $\mathcal{E}$ before feeding it into the GNNs for two purposes. First, the target graph fine-tuning feature usually has different dimensions with the pre-training features, such as the node degree encoding we use. Second, the semantics and distributions of fine-tuning features can be different from pre-training features. The feature encoder aims to bridge the gap between feature difference in practice. The supervised loss used in this experiment is the same as in DistMult~\cite{yang2014embedding}. In particular, the bilinear score function is calculated as $s(h,r,t) = z_h^T M_r z_t$, where $M_r$ is a diagonal matrix for each relation $r$, $z_h$ and $z_t$ the the embedding of GNN encoder $\Psi$ for head and tail entities. 
The experiments were run on GTX1080 with 12G memories. We report the average training time per epoch of our algorithm in pre-training and fine-tuning stage in Table~\ref{tab:dataset-stats-2} as well. 
The pre-training and fine-tuning takes about 40 epochs and 10 epochs to converge, respectively. 
In Table \ref{tab:dataset-stats-2}, we also present the per-epoch training time of \Ours. \Ours takes about 338 seconds per epoch for optimizing the ego-graph information maximization objective on YAGO-source.
As we can see, fine-tuning also takes significant time compared to pre-training, which strengthens our arguments about avoiding or reducing fine-tuning through structural analysis. We implement all baselines within the same pipeline, and the running times are all in same scale.

\subsection{Parameter study}
\label{supp:exp_para}
In this section, we provide additional parameter analysis towards proposed EGI model - choices of $k$, and efficiency study on EGI gap $\Delta_\D$ - sampling frequencies.

\xhdr{Performance of different size of ego-graphs.}
In our Theorem~\ref{theo:main} and EGI algorithm (Eq.~\ref{eq:EGI}), number of hops $k$ determines the size of ego-graphs. In principle, $k$ may affect the transferability of EGI in two ways: (1) larger $k$ may make the EGI model (both center node encoder $\Psi$ and neighbor node encoder $\Phi$) more expressive (better precision) and the EGI gap $\Delta_\D$ more accurate (better predictiveness); (2) However, the GNN encoders may suffer from the over-smoothing problem and the computations may suffer from more noises. 
Therefore, it is hard to determine the influence of $k$ without empirical analysis. As we can observe in , when $k=1$ or $k=3$, the classification accuracy of the source graph is worse than $k=2$, likely because the GNN encoder is either less powerful or over-smoothed. As a result,  $k=2$ obtains the best transferability to both the USA and Brazil networks. When $k=3$, $\Delta_\D$ likely accounts for too subtle/noisy ego-graph differences and may become less effective in predicting the transferability. Therefore, we choose $k=2$ to conduct experiments in main paper.
\begin{table}
\begin{center}
\caption{Comparison of EGI with different $k$. Accuracy and EGI gap $\Delta_\D$ are reported.}
\label{tab:supps-differnt-k}
\scalebox{0.9}{
\begin{tabular}{lll|c|c|c|c|c}
\toprule
\multicolumn{3}{c|}{\multirow{2}{*}{}}                             & Europe (source) & \multicolumn{2}{c|}{USA (target)} & \multicolumn{2}{c}{Brazil (target)}                                                                                                            \\
\multicolumn{3}{c|}{}                                                    & acc. & acc.& $\Delta_\D$ & acc.& $\Delta_\D$ \\ \midrule                    

\multicolumn{2}{l}{}                              & EGI (k=1) &  58.25         &   60.08       &   0.385      &  60.74 & 0.335  \\


\multicolumn{2}{l}{}                              & EGI (k=2)                    &  59.15   &   64.55       &    0.869      &  73.15  & 0.851 \\
\multicolumn{2}{l}{}                              & EGI (k=3)                  &  57.63    &   64.12      &  0.912       &  72.22 & 0.909   \\
\bottomrule

\end{tabular}
}
\end{center}
\end{table}

\xhdr{Precision of $\Delta_\mathcal{D}$ under different sampling frequencies.} In Table~\ref{tab:supps-sample-frequency}, we present the estimated $\Delta_\D$ versus sampling frequency for 10 runs on airport dataset. A theoretical study on its convergence could be an interesting future direction. As we can observe, large sample frequency leads to more accurate and robust estimation of $\Delta_\D$. Between Europe and USA, although 100 pairs of ego-graphs are only equivalent as 2.1\% of the total pair-wise enumerations, the estimated  $\Delta_\D$ is pretty close.

\begin{table}[h!]
\begin{center}
\caption{EGI gap $\Delta_\D$ on airport dataset with different sampling frequencies.}
\label{tab:supps-sample-frequency}
\scalebox{0.9}{
\begin{tabular}{l|c|c}
\toprule
Sampling frequency & $\Delta_\D$(Europe, USA)& $\Delta_\D$(Europe, Brazil)\\
\midrule
100 pairs & 0.872$\pm$0.039 & 0.854$\pm$0.042\\
1000 pairs & 0.859$\pm$0.012 & 0.848$\pm$0.007 \\
All pairs & 0.869 $\pm$0.000 & 0.851 $\pm$0.000 \\
\bottomrule

\end{tabular}
}
\end{center}
\end{table}

\bibliography{ref}
\bibliographystyle{plain}